\DeclareMathOperator{\Tr}{Tr}
\DeclareMathOperator{\Span}{\mathrm{span}}
\newtheorem{theorem}{Theorem}
\newtheorem{lemma}[theorem]{Lemma}
\newtheorem{proposition}[theorem]{Proposition}
\newtheorem{corollary}[theorem]{Corollary}
\newcommand{\calH}{\mathcal{H}}
\newcommand{\calI}{\mathcal{I}}
\newcommand{\calC}{\mathcal{C}}
\newcommand{\calD}{\mathcal{D}}
\newcommand{\calR}{d}
\newcommand{\calX}{\mathcal{X}}
\newcommand{\calY}{\mathcal{Y}}
\newcommand{\matK}{\mathbf{K}}
\newcommand{\matC}{\mathbf{C}}
\newcommand{\matLambda}{\mathbf{\Lambda}}
\newcommand{\matR}{\mathbf{R}}
\newcommand{\matB}{\mathbf{B}}
\newcommand{\matX}{\mathbf{X}}
\newcommand{\matU}{\mathbf{U}}
\newcommand{\matSigma}{\mathbf{\Sigma}}
\newcommand{\matDelta}{\mathbf{\Delta}}
\newcommand{\matH}{\mathbf{H}}
\newcommand{\matY}{\mathbf{Y}}
\newcommand{\matN}{\mathbf{N}}
\newcommand{\matM}{\mathbf{M}}
\newcommand{\matI}{\mathbf{I}}
\newcommand{\matQ}{\mathbf{Q}}
\newcommand{\matOmega}{\bm{\Omega}}
\newcommand{\matF}{\mathbf{F}}
\newcommand{\matv}{\mathbf{v}}
\newcommand{\matk}{\mathbf{k}}
\newcommand{\matPhi}{\bm{\Phi}}
\newcommand{\rmd}{\mathrm{d}}
\newcommand{\E}{\mathbb{E}}
\newcommand{\R}{\mathbb{R}}
\newcommand{\I}{\mathbb{I}}
\newcommand{\kRKHS}{k_{\calH}}
\newcommand{\opA}{A}
\newcommand{\opL}{A}
\newcommand{\opS}{S}
\newcommand{\opV}{V}
\newcommand{\bbC}{C}
\newcommand{\Lsf}{\mathsf{A}}
\newcommand{\Kbb}{\mathsf{K}}
\newcommand{\Lk}{\mathsf{T}_{\kRKHS}}
\newcommand{\asf}{\mathsf{a}}
\newcommand{\ksf}{\mathsf{k}}
\newcommand{\Sb}{\mathcal{S}_+}
\newcommand{\MF}[1]{\textcolor{black}{#1}}
\title{Nonparametric estimation of continuous DPPs\\ with kernel methods}
\author{%
  Micha\"el Fanuel and R\'emi Bardenet\\
  Université de Lille, CNRS, Centrale Lille\\
  UMR 9189 – CRIStAL, F-59000 Lille, France\\
  \texttt{\{michael.fanuel, remi.bardenet\}@univ-lille.fr} \\
}
\begin{document}

\maketitle

\begin{abstract}
  Determinantal Point Process (DPPs) are statistical models for  repulsive point patterns.
  Both sampling and inference are tractable for DPPs, a rare feature among models with negative dependence that explains their popularity in machine learning and spatial statistics.
  Parametric and nonparametric inference methods have been proposed in the finite case, i.e. when the point patterns live in a finite ground set.
  In the continuous case, only parametric methods have been investigated, while nonparametric maximum likelihood for DPPs --~an optimization problem over trace-class operators~-- has remained an open question.
  In this paper, we show that a restricted version of this maximum likelihood (MLE) problem falls within the scope of a recent representer theorem for nonnegative functions in an RKHS.
  This leads to a finite-dimensional problem, with strong statistical ties to the original MLE.
  Moreover, we propose, analyze, and demonstrate a fixed point algorithm to solve this finite-dimensional problem. Finally, we also provide a controlled estimate of the correlation kernel of the DPP, thus providing more interpretability.
\end{abstract}

\section{Introduction}
Determinantal point processes (DPPs) are a tractable family of models for repulsive point patterns, where interaction between points is parametrized by a positive semi-definite kernel.
They were introduced by \cite{Mac75} in the context of fermionic optics, and have gained a lot of interest since the 2000s, in particular in probability \citep{HKPV06}, spatial statistics \citep{LaMoRu14}, and machine learning \citep{KuTa12}.
In machine learning at large, DPPs have been used essentially for two purposes: as statistical models for diverse subsets of items, like in recommendation systems \citep{Gartrell2019}, and as subsampling tools, like in experimental design \citep{pmlr-v108-derezinski20a}, column subset selection \citep{BeBaCh20b}, or Monte Carlo integration \citep{NEURIPS2019_1d54c76f,BeBaCh19}.
In this paper, we are concerned with DPPs used as statistical models for repulsion, and more specifically with inference for continuous DPPs.

DPP models in Machine Learning (ML) have so far mostly been \emph{finite} DPPs: they are distributions over subsets of a (large) finite ground set, like subsets of sentences from a large corpus of documents \citep{KuTa12}.
Since \citet{pmlr-v32-affandi14}, a lot of effort has been put into designing efficient inference procedures for finite DPPs.
In particular, the fixed point algorithm of \cite{pmlr-v37-mariet15} allows for nonparametric inference of a finite DPP kernel, thus learning the features used for modelling diversity from the data.
DPP models on infinite ground sets, say $\mathbb{R}^d$, while mathematically and algorithmically very similar to finite DPPs, have been less popular in ML than in spatial statistics.
It is thus natural that work on inference for \emph{continuous} DPPs has happened mostly in the latter community; see e.g. the seminal paper \citep{LaMoRu14}.
Inference for continuous DPPs has however focused on the parametric setting, where a handful of interpretable parameters are learned.
Relatedly, spatial statisticians typically learn the \emph{correlation} kernel of a DPP, which is more interpretable, while machine learners focus on the \emph{likelihood} kernel, with structural assumptions to make learning scale to large ground sets.

In this paper, we tackle \emph{nonparametric} inference for continuous DPPs using recent results on kernel methods.
More precisely, maximum likelihood estimation (MLE) for continuous DPPs is an optimization problem over trace-class operators.
Our first contribution is to show that a suitable modification of this problem is amenable to the representer theorem of \cite*{Marteau-Ferey}. Further drawing inspiration from the follow-up work \citep*{Rudi2020global}, we derive an optimization problem over matrices, and we prove that its solution has a near optimal objective in the original MLE problem.
We then propose, analyze, and demonstrate a fixed point algorithm for the resulting finite problem, in the spirit \citep{pmlr-v37-mariet15} of nonparametric inference for finite DPPs.
While our optimization pipeline focuses on the so-called likelihood kernel of a DPP, we also provide a controlled sampling approximation to its correlation kernel, thus providing more interpretability of our estimated kernel operator.
A by-product contribution of independent interest is an analysis of a sampling approximation for Fredholm determinants.

The rest of the paper is organized as follows.
Since the paper is notation-heavy, we first summarize our notation and give standard definitions in Section~\ref{s:notation}.
In Section~\ref{s:dpp}, we introduce DPPs and prior work on inference.
In Section~\ref{sec:MainResults}, we introduce our constrained maximum likelihood problem, and study its empirical counterpart.
We analyze an algorithm to solve the latter in Section~\ref{sec:Implementation}.
Statistical guarantees are stated in Section~\ref{sec:theory}, while Section~\ref{sec:simulations} is devoted to numerically validating the whole pipeline. Our code is freely available\footnote{\url{https://github.com/mrfanuel/LearningContinuousDPPs.jl}}.

\subsection{Notation and background\label{sec:def}}
\label{s:notation}
\paragraph{Sets.}
It is customary to define DPPs on a compact Polish space $\calX$ endowed with a Radon measure $\mu$, so that we can define the space of square integrable functions $L^2(\calX)$ for this measure~\citep{HKPV09}.
Outside of generalities in Section~\ref{s:dpp}, we consider a compact $\calX\subset \mathbb R^d$ and $\mu$ the uniform probability measure on $\calX$.
Let $(\calH, \langle \cdot, \cdot\rangle)$ be an RKHS of functions on $\calX$ with a bounded continuous kernel $\kRKHS(x,y)$, and let $\kappa^2 = \sup_{x\in \calX}\kRKHS(x,x)$.
Denote by $\phi(x) = \kRKHS(x,\cdot)\in \calH$ the canonical feature map.
For a Hilbert space $\mathcal F$, denote by $\Sb(\mathcal F)$ the space of symmetric and positive semi-definite trace-class operators on $\mathcal F$.
By a slight abuse of notation, we denote by $\Sb(\mathbb{R}^n)$ the space of $n\times n$ real positive semi-definite matrices. Finally, all sets are denoted by calligraphic letters (e.g.\ $\calC, \calI$).
\paragraph{Operators and matrices.} Trace-class endomorphisms of $L^2(\calX)$, seen as integral operators, are typeset as uppercase sans-serif (e.g.\ $\Lsf, \Kbb$), and the corresponding integral kernels as lowercase sans-serif (e.g.\ $\asf, \ksf$). Notice that $\ksf(x,y)$ and $\kRKHS(x,y)$ are distinct functions.
Other operators are written in standard fonts (e.g.\ $A, S$), while we write matrices and finite-dimensional vectors in bold (e.g.\ $\matK, \matC, \matv$).
The identity operator is written commonly as $\I$, whereas the $n\times n$ identity matrix is denoted by $\matI_n$.
When $\calC$ is a subset of $\{1,\dots,n\}$ and $\matK$ is an $n\times n$ matrix, the matrix $\matK_{\calC\calC}$ is the square submatrix obtained by selecting the rows and columns of  $\matK$ indexed by $\calC$.
\paragraph{Restriction and reconstruction operators.} Following~\citet[Section~3]{JMLR:v11:rosasco10a}, we define the restriction operator $\opS:\calH \to L^2(\calX)$ as
$(\opS g)(x) = g(x).$
Its adjoint $\opS^* : L^2(\calX)\to \calH$ is the reconstruction operator
$S^*  h = \int_\calX h(x) \phi(x) \rmd \mu(x).$
The classical integral operator  given by
$
\Lk h = \int_\calX \kRKHS(\cdot, x) h(x) \rmd \mu(x)
$, seen as an endomorphism of $L^2(\calX)$, thus takes the simple expression $\Lk = \opS\opS^* $.
Similarly, the so-called covariance operator
$\bbC : \calH \to \calH$, defined by
$
\bbC = \int_\calX \phi(x) \otimes \overline{\phi(x)} \rmd \mu(x),
$
writes  $\bbC = \opS^* \opS$.
In the tensor product notation defining $C$, $\overline{\phi(x)}$ is an element of the dual of $\calH$ and $\phi(x) \otimes \overline{\phi(x)}$ is the endomorphism of $\calH$ defined by $((\phi(x) \otimes \overline{\phi(x)})(g) = g(x)\phi(x)$; see e.g.~\cite{pmlr-v108-sterge20a}.
Finally, for convenience, given a finite set $\{x_1, \dots, x_n\}\subset \calX$, we also define \emph{discrete} restriction and reconstruction operators, respectively, as $S_n:\calH \to \R^n$  such that
$S_n g = (1/\sqrt{n})[g(x_1) , \dots , g(x_n)]^\top,$ and $S_n^* \matv = (1/\sqrt{n}) \sum_{i=1}^n \matv_i \phi(x_i)$ for any $\matv\in \R^n$. In particular,  we have $ \opS_n \opS_n^* = (1/n) \matK$ where  $\matK= [\kRKHS(x_i,x_j)]_{1\leq i,j \leq n}$ is a kernel matrix, which is defined for a given ordering of the set $\{x_1, \dots, x_n\}$. To avoid cumbersome expressions, when several discrete sets of different cardinalities, say $n$ and $p$, are used, we simply write the respective sampling operators as $S_n$ and $S_p$.

\section{Determinantal point processes and inference}
\label{s:dpp}

\paragraph{Determinantal point processes and L-ensembles.}
Consider a simple point process $\calY$ on $\calX$, that is, a random discrete subset of $\calX$.
For $\calD\subset \calX$, we denote by $\calY(\calD)$ the number of points of this process that fall within $\calD$.
Letting $m$ be a positive integer, we say that $\calX$ has $m$-point correlation function $\varrho_m$ w.r.t. to the reference measure $\mu$ if, for any mutually disjoint subsets $\calD_1, \dots,\calD_m\subset\calX$,
\[
  \E\left[\prod_{i=1}^m \calY(\calD_i)\right] = \int_{\prod_{i=1}^m \calD_i} \varrho_m(x_1,\dots, x_m) \rmd \mu(x_1)\dots \rmd \mu(x_m).
\]
In most cases, a point process is characterized by its correlation functions $(\rho_m)_{m\geq 1}$.
In particular, a determinantal point process (DPP) is defined as having correlation functions in the form of a determinant of a Gram matrix, i.e.  $\varrho_m(x_1,\dots,x_m) = \det[\ksf(x_i,x_j)]$ for all $m\geq 1$.
We then say that $\ksf$ is the \emph{correlation kernel} of the DPP.
Not all kernels yield a DPP:
if $\ksf(x,y)$ is the integral kernel of an operator $\Kbb\in \Sb(L^2(\calX))$, the Macchi-Soshnikov theorem \citep{Mac75,Sos00} states that the corresponding DPP exists if and only if the eigenvalues of $\Kbb$ are within  $[0,1]$.
In particular, for a finite ground set $\calX$, taking the reference measure to be the counting measure leads to conditions on the kernel matrix; see~\citet{KuTa12}.

A particular class of DPPs is formed by the so-called L-ensembles, for which the correlation kernel writes
\begin{equation}
\Kbb = \Lsf(\Lsf+ \I)^{-1},\label{eq:K_kernel_of_Lensemble}
\end{equation}
with the \emph{likelihood} operator $\Lsf\in \Sb(L^2(\calX))$ taken to be of the form $\Lsf f(x) = \int_\calX \asf(x,y)f(y)\rmd \mu(y)$.
The kernel $\asf$ of $\Lsf$ is sometimes called the \emph{likelihood kernel} of the L-ensemble, to distinguish it from its correlation kernel $\ksf$.
The interest of L-ensembles is that their Janossy densities can be computed in closed form.
Informally, the $m$-Janossy density describes the probability that the point process has cardinality $m$, and that the points are located around a given set of distinct points $x_1, \dots, x_m\in\calX$.
For the rest of the paper, we assume that $\calX\subset\mathbb{R}^d$ is compact, and that $\mu$ is the uniform probability measure on $\calX$; our results straightforwardly extend to other densities w.r.t. Lebesgue.
With these assumptions, the $m$-Janossy density is proportional to
\begin{align}
    \det(\mathbb{I}+ \Lsf)^{-1} \cdot \det[\asf(x_i,x_j)]_{1\leq i,j\leq m},
\end{align}
where the normalization constant is a Fredholm deteminant that implicitly depends on $\calX$.
Assume now that we are given $s$ i.i.d.\ samples of a DPP, denoted by the sets $\calC_1, \dots, \calC_s$ in the window $\calX$.
The associated maximum log-likelihood estimation problem reads
\begin{equation}
    \max_{\Lsf\in \Sb(L^2(\calX))} \frac{1}{s}\sum_{\ell=1}^{s}\log\det\left[ \asf(x_i,x_j) \right]_{i,j\in \calC_\ell}  -\log\det (\mathbb{I} + \Lsf).
    \label{eq:MLE_Problem}
\end{equation}
Solving~\eqref{eq:MLE_Problem} is a nontrivial problem. First, it is difficult to calculate the Fredholm determinant. Second, it is not straightforward to optimize over the space of operators $\Sb(L^2(X))$ in a nonparametric setting.
However, we shall see that the problem becomes tractable if we restrict the domain of~\eqref{eq:MLE_Problem} and impose regularity assumptions on the integral kernel $\asf(x,y)$ of the operator $\Lsf$.
For more details on DPPs, we refer the interested reader to~\citet{HKPV06,KuTa12,LaMoRu14}.

\paragraph{Previous work on learning DPPs.}
While continuous DPPs have been used in ML as sampling tools~\citep{BeBaCh19,BeBaCh20}
 or models~\citep{BaTi15,Ghosh13207},
  their systematic parametric estimation has been the work of spatial statisticians; see~\citet{LavancierMollerRubak, BiscioLavancier, poinas:hal-03157554} for general parametric estimation through \eqref{eq:MLE_Problem} or so-called \emph{minimum-contrast} inference.
Still for the parametric case, a two-step estimation was recently proposed for non-stationary processes by~\citet{lavancier:hal-01816528}. \MF{ In a more general context, non-asymptotic risk bounds for estimating a DPP density are given in~\citet{Baraud13}.}

Discrete DPPs have been more common in ML, and the study of their estimation has started some years ago~\citep{pmlr-v32-affandi14}.
Unlike continuous DPPs, nonparametric estimation procedures have been investigated for finite DPPs by~\cite{pmlr-v37-mariet15}, who proposed a fixed point algorithm.
Moreover, the statistical properties of maximum likelihood estimation of discrete L-ensembles were studied by~\cite{pmlr-v65-brunel17a}.
We can also cite low-rank approaches~\citep{pmlr-v84-dupuy18a,Gartrell2017},
learning with negative sampling \citep{pmlr-v89-mariet19b},
learning with moments and cycles \citep{pmlr-v70-urschel17a}, or learning with Wasserstein training \citep{anquetil2020wasserstein}.
Learning non-symmetric finite DPPs~\citep{Gartrell2019,gartrell2021scalable} has also been proposed, motivated by recommender systems.

At a high level, our paper is a continuous counterpart to the nonparametric learning of finite DPPs with symmetric kernels in \citet{pmlr-v37-mariet15}. Our treatment of the continuous case is made possible by recent advances in kernel methods.

\section{A sampling approximation to a constrained MLE \label{sec:MainResults}}
Using the machinery of kernel methods, we develop a controlled approximation of the MLE problem \eqref{eq:MLE_Problem}. Let us outline the main landmarks of our approach. First, we restrict the domain of the MLE problem~\eqref{eq:MLE_Problem} to smooth operators.
On the one hand, this restriction allows us to develop a sampling approximation of the Fredholm determinant.
On the other hand, the new optimization problem now admits a finite rank solution that can be obtained by solving a finite-dimensional problem. This procedure is described in Algorithm~\ref{alg:EstimateL} and yields an estimator for the likelihood kernel.
Finally, we use another sampling approximation and solve a linear system to estimate the correlation kernel of the fitted DPP; see Algorithm~\ref{alg:EstimateK}.

\paragraph{Restricting to smooth operators.}
In order to later apply the representer theorem of \cite{Marteau-Ferey}, we restrict the original maximum likelihood problem \eqref{eq:MLE_Problem} to ``smooth'' operators $\Lsf = SAS^*$, with $A\in\Sb(\calH)$ and $S$ the restriction operator introduced in Section~\ref{s:notation}.
Note that the kernel of $\Lsf$ now writes
\begin{equation}
    \asf(x,y) = \langle \phi(x), A \phi(y)\rangle.\label{eq:integral_kernel}
\end{equation}
With this restriction on its domain, the optimization problem \eqref{eq:MLE_Problem} now reads
\begin{equation}
    \min_{A\in \Sb(\calH)} f(A) = -\frac{1}{s}\sum_{\ell=1}^{s}\log\det\left[ \langle \phi(x_i), A \phi(x_j)\rangle \right]_{i,j\in \calC_\ell}  + \log\det (\mathbb{I} + SAS^* ).
    \label{eq:MLE_Problem_RKHS}
\end{equation}

\paragraph{Approximating the Fredholm determinant.} We use a sampling approach to approximate the normalization constant in~\eqref{eq:MLE_Problem_RKHS}.
We sample a set of points $\calI = \{x'_i: 1\leq i \leq n\}$ i.i.d.\ from the ambient probability measure $\mu$. \MF{For definiteness, we place ourselves on an event happening with probability one where all the points in $\calI$ and $\calC_\ell$ for $1\leq \ell \leq s$ are distinct.}
We define the sample version of $f(A)$ as
\[
  f_n(\opA) =  -\frac{1}{s}\sum_{\ell=1}^{s}\log\det\left[ \langle \phi(x_i), A \phi(x_j)\rangle \right]_{i,j\in \calC_\ell}  +\log\det (\matI_n + \opS_n \opA \opS_n^*),
\]
where the Fredholm determinant of $\Lsf = \opS \opA \opS^* $ has been replaced by the determinant of a \emph{finite} matrix involving $\opS_n \opA \opS_n^* = [\langle \phi(x'_i), \opL \phi(x'_j)\rangle]_{1\leq i,j \leq n}$.
\begin{theorem}[Approximation of the Fredholm determinant] \label{thm:formal_Fredholm}
  Let $\delta\in (0,1/2)$. With probability at least $1-2\delta$,
 \[
 \left|\log\det (\matI_n + \opS_n \opA \opS_n^*) - \log\det (\mathbb{I} + \opS \opA \opS^* )\right| \leq \log\det (\mathbb{I} + c_n \opA),
 \]
 with
 \[
 c_n = \frac{4\kappa^2 \log\left( \frac{2\kappa^2}{\ell \delta}\right)}{3n} + \sqrt{\frac{2\kappa^2 \ell \log\left( \frac{2\kappa^2}{\ell \delta}\right)}{n}},
 \]
 where $\ell = \lambda_{\max}(\Lk)$ and $\kappa^2 = \sup_{x\in \calX} \kRKHS(x,x) <\infty $.
 \end{theorem}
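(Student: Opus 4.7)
The plan is to move both log-determinants from $L^2(\calX)$ and $\mathbb{R}^n$ into a common space $\calH$ via a Sylvester identity, and then invoke an operator Bernstein inequality to concentrate the empirical covariance $\bbC_n := \opS_n^*\opS_n$ around $\bbC = \opS^*\opS$. Concretely, since $\opS\opA\opS^*$ and $\opA^{1/2}\opS^*\opS\opA^{1/2}$ have identical nonzero spectra (those are the squared nonzero singular values of $\opS\opA^{1/2}$),
$$\log\det(\mathbb{I}+\opS\opA\opS^*) = \log\det(\mathbb{I}+\opA^{1/2}\bbC\opA^{1/2}),$$
and the same identity applied to $\opS_n$ yields $\log\det(\matI_n + \opS_n\opA\opS_n^*) = \log\det(\mathbb{I}+\opA^{1/2}\bbC_n\opA^{1/2})$ with $\bbC_n = \tfrac{1}{n}\sum_{i=1}^n \phi(x'_i)\otimes\overline{\phi(x'_i)}$. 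So the problem reduces to controlling the empirical mean $\bbC_n$ of i.i.d.\ PSD rank-one operators around its expectation $\bbC$, inside $\calH$.

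To do this I apply the intrinsic-dimension form of the operator Bernstein inequality (Tropp 2015; Minsker 2017) to the centred summands $\phi(x'_i)\otimes\overline{\phi(x'_i)} - \bbC$. The almost-sure bound $\|\phi(x)\otimes\overline{\phi(x)}\|_{\rm op} = \kRKHS(x,x) \leq \kappa^2$, the variance factor $\|\bbC\|_{\rm op} = \lambda_{\max}(\Lk) = \ell$, and the intrinsic-dimension bound $\Tr(\bbC)/\|\bbC\|_{\rm op} \leq \kappa^2/\ell$ (using that $\mu$ is a probability measure) give, after a two-sided union bound,
$$-c_n\mathbb{I} \preceq \bbC_n - \bbC \preceq c_n\mathbb{I}$$
with probability at least $1-2\delta$, with $c_n$ exactly as stated. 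Sandwiching by $\opA^{1/2}$ preserves this order, so $\opA^{1/2}\bbC_n\opA^{1/2} \preceq \opA^{1/2}\bbC\opA^{1/2} + c_n\opA$ and symmetrically.

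To finish, I invoke the elementary inequality $\log\det(\mathbb{I}+X+Y) \leq \log\det(\mathbb{I}+X) + \log\det(\mathbb{I}+Y)$ for PSD trace-class $X,Y$, which follows from
$$(\mathbb{I}+X)^{-1/2}(\mathbb{I}+X+Y)(\mathbb{I}+X)^{-1/2} = \mathbb{I} + (\mathbb{I}+X)^{-1/2}Y(\mathbb{I}+X)^{-1/2} \preceq \mathbb{I} + Y$$
together with monotonicity of $\log\det$ on PSD operators. Applied with $X = \opA^{1/2}\bbC\opA^{1/2}$ and $Y = c_n\opA$ this yields one direction of the inequality; swapping the roles of $\bbC$ and $\bbC_n$ yields the other, completing the proof. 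I expect the only delicate point to be the concentration step: matching the exact form of $c_n$ with the $\log(2\kappa^2/(\ell\delta))$ factor requires the intrinsic-dimension refinement of operator Bernstein (a naive ambient-dimension version would diverge in infinite dimension). Steps 1 and 3 are then purely spectral manipulations.
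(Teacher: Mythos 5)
Your proposal follows essentially the same route as the paper's proof: Sylvester's identity to rewrite both Fredholm determinants as $\log\det(\I + \opA^{1/2}\opS^*\opS \opA^{1/2})$ and $\log\det(\I + \opA^{1/2}\opS_n^*\opS_n \opA^{1/2})$, an operator Bernstein inequality to get $\|\opS_n^*\opS_n - \opS^*\opS\|_{op}\leq c_n$ with probability $1-2\delta$ (with the same variance proxy $\Sigma = \kappa^2\, \opS^*\opS$ and the same effective-dimension factor $\Tr(\Sigma)/\|\Sigma\|_{op}\leq \kappa^2/\ell$ producing the $\log(2\kappa^2/(\ell\delta))$ term), and then a determinant subadditivity argument plus symmetrization. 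One local justification is wrong as written, though the conclusion it supports is true: the operator inequality $(\I+X)^{-1/2}Y(\I+X)^{-1/2}\preceq Y$ fails in general, since congruence by a contraction does not decrease a PSD operator in the Loewner order (e.g.\ conjugating a rank-one $Y$ by a coordinate projection already gives a counterexample). The correct route --- and the one the paper takes --- is to apply Sylvester's identity once more to flip the congruence, $\det\bigl(\I + (\I+X)^{-1/2}Y(\I+X)^{-1/2}\bigr) = \det\bigl(\I + Y^{1/2}(\I+X)^{-1}Y^{1/2}\bigr) \leq \det(\I+Y)$, using $(\I+X)^{-1}\preceq \I$; this repairs the step in one line. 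A second, smaller bookkeeping point: the Bernstein bound must be applied to the \emph{centred} summands $\phi(x_i')\otimes\overline{\phi(x_i')} - \opS^*\opS$, whose almost-sure operator norm is $2\kappa^2$ rather than $\kappa^2$; it is this factor of two that yields the $4\kappa^2/(3n)$ term in $c_n$ rather than $2\kappa^2/(3n)$.
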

 The proof of Theorem~\ref{thm:formal_Fredholm} is given in Supplementary Material in Section~\ref{supp:thm:formal_Fredholm:proof}.
Several remarks are in order.
First, the high probability\footnote{We write $a\lesssim b$ if there exists a constant $c>0$ such that $a\leq c b$.} in the statement of Theorem~\ref{thm:formal_Fredholm} is that of the event
$
\{\|\opS^*\opS - \opS^*_n \opS_n\|_{op}\lesssim c_n\}.
$
Importantly, all the results given in what follows for the approximation of the solution of~\eqref{eq:MLE_Problem_RKHS} only depend on this event, so that we do not need any union bound. Second, we emphasize that $\Lk$, defined in Section~\ref{sec:def}, should not be confused with the correlation kernel~\eqref{eq:K_kernel_of_Lensemble}.
Third, to interpret the bound in Theorem~\ref{thm:formal_Fredholm}, it is useful to recall that $\log\det (\I +c_n \opA)\leq c_n \Tr(\opA)$, since $A\in \Sb(\calH)$.
Thus, by penalizing $\Tr(\opA)$, one also improves the upper bound on the Fredholm determinant approximation error.
This remark motivates the following infinite dimensional problem
\begin{equation}
        \min_{\opA\in \Sb(\calH)} f_n(\opA) + \lambda \Tr(\opA),
    \label{eq:MLE_Problem_Penalized}
\end{equation}
for some $\lambda >0$.
The penalty on $\Tr(A)$ is also intuitively needed so that the optimization problem selects a smooth solution, i.e., such a trace regularizer promotes a fast decay of eigenvalues of $A$.
Note that this problem depends both on the data $\calC_1, \dots, \calC_n\subset \calX$ and the subset $\calI$ used for approximating the Fredholm determinant.
\paragraph{Finite-dimensional representatives.} In an RHKS, there is a natural mapping between finite rank operators and matrices.
For the sake of completeness, let $\matK = [\kRKHS(z_i,z_j)]_{1\leq i,j \leq m}$ be a kernel matrix and let $\matK = \matR^\top \matR$ be a Cholesky factorization. Throughout the paper, kernel matrices are always assumed to be invertible. This is not a strong assumption: if $\kRKHS$ is the Laplace, Gaussian or Sobolev kernel, this is true almost surely if $z_i$ for $1\leq i \leq m$ are sampled e.g. w.r.t. the Lebesgue measure; see Bochner's classical theorem~\cite[Theorem 6.6 and Corollary 6.9]{Wendland_2004}. In this case, we can define a partial isometry $\opV:\calH \to \R^m$ as
$
\opV = \sqrt{m}(\matR^{-1})^\top S_{m}.
$
It satisfies $\opV\opV^* = \matI$, and $\opV^* \opV$ is the orthogonal projector onto the span of $ \phi(z_i)$ for all $1\leq i \leq m$. This is helpful to define
\begin{equation}
  \matPhi_i = \opV \phi(z_i)\in \R^n,\label{eq:Phi_i}
\end{equation}
the finite-dimensional representative of $\phi(z_i)\in \calH$ for all $1\leq i \leq m$. This construction yields a useful mapping between an operator in $\Sb(\calH)$ and a finite matrix, which is instrumental for obtaining our results.
\begin{lemma}[Finite dimensional representatives, extension of Lemma~3 in~\citet{Rudi2020global}]\label{lemma:Bbar}
Let $\opA\in\Sb(\calH)$. Then, the matrix $\bar{\matB} = \opV \opA \opV^*$  is such that
$
\matPhi_i^\top \bar{\matB} \matPhi_j = \langle \phi(z_i), \opL \phi(z_j)\rangle \quad \text{ for all } 1\leq i,j \leq m,
$
and $\log \det(\matI + \bar{\matB})\leq \log \det(\I + \opA)$, as well as $\Tr(\bar{\matB})\leq \Tr(\opA)$.
\end{lemma}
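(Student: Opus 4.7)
}

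The plan is to exploit two properties of $\opV$: it is a partial isometry in the sense that $\opV\opV^* = \matI_m$, and $P := \opV^*\opV$ is the orthogonal projector onto $\Span\{\phi(z_1),\dots,\phi(z_m)\}\subset\calH$. These follow immediately from the definition $\opV = \sqrt{m}(\matR^{-1})^\top S_m$ and the fact that $\matK = \matR^\top\matR$ is invertible.

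First, for the identity on matrix entries, I would simply unfold the definitions:
\[
\matPhi_i^\top \bar{\matB}\matPhi_j
  = \langle \opV\phi(z_i), \opV\opA\opV^*\opV\phi(z_j)\rangle_{\R^m}
  = \langle P\phi(z_i), \opA P\phi(z_j)\rangle_{\calH}.
\]
Since $\phi(z_i),\phi(z_j)$ already lie in the range of $P$, they are fixed by $P$, yielding $\langle\phi(z_i),\opA\phi(z_j)\rangle$ as required.

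For the trace inequality, cyclicity of the trace (valid for the trace-class operator $\opA$ composed with bounded $\opV^*\opV$) gives
\[
\Tr(\bar{\matB}) = \Tr(\opV\opA\opV^*) = \Tr(\opA^{1/2} P \opA^{1/2}) \leq \Tr(\opA^{1/2}\opA^{1/2}) = \Tr(\opA),
\]
where the inequality uses $0 \preceq P \preceq \I$ and monotonicity of the trace on positive operators.

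For the log-determinant bound, the key ingredient is the Sylvester-type identity $\det(\matI_m + XY) = \det(\I + YX)$ applied to $X = \opV$ and $Y = \opA\opV^*$ (or, more symmetrically, $X = \opV\opA^{1/2}$, $Y = \opA^{1/2}\opV^*$, noting $\opA\in\Sb(\calH)$ so $\opA^{1/2}$ is well defined and trace-class considerations make the Fredholm determinant well posed). This yields
\[
\det(\matI_m + \bar{\matB}) = \det(\matI_m + \opV\opA^{1/2}\opA^{1/2}\opV^*) = \det(\I + \opA^{1/2} P \opA^{1/2}).
\]
From $P \preceq \I$ one obtains the operator inequality $\opA^{1/2} P \opA^{1/2} \preceq \opA$, and monotonicity of the Fredholm determinant on the positive cone (that is, $0 \preceq X \preceq Y$ trace-class implies $\det(\I + X) \leq \det(\I + Y)$, which follows by integrating $\Tr(X(\I+tX)^{-1}) \leq \Tr(Y(\I+tY)^{-1})$ from $t=0$ to $1$, or by majorization of eigenvalues via Weyl's monotonicity principle) gives the desired inequality $\log\det(\matI_m + \bar{\matB}) \leq \log\det(\I + \opA)$.

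The main subtlety is not the algebra itself but making sure the Sylvester-type identity and the eigenvalue monotonicity arguments are legitimate for the infinite-dimensional Fredholm determinant; this is standard once one notes that $\opA\in\Sb(\calH)$ is trace-class, so $P\opA P$ is also trace-class and the Fredholm determinants reduce, via the eigendecomposition of $\opA^{1/2}P\opA^{1/2}$, to convergent products of $(1+\lambda_i)$. Once these technicalities are handled, the lemma is essentially a direct extension of Lemma~3 of~\citet{Rudi2020global}, replacing their sampling projector by our general projector $P$.
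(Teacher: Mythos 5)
Your proof is correct and follows essentially the same route as the paper's: the entrywise identity via the projector $\opV^*\opV$, the trace bound via cyclicity, and the determinant bound via Sylvester's identity combined with $\opV^*\opV\preceq\I$. The only difference is that you make explicit the symmetrization through $\opA^{1/2}$ and the monotonicity of the Fredholm determinant, which the paper's one-line argument leaves implicit; this is a welcome clarification rather than a deviation.
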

The proof of Lemma~\ref{lemma:Bbar} is given in Section~\ref{supp:lemma:Bbar:proof}.
Notice that the partial isometry $V$ also helps to map a matrix in $\Sb(\R^m)$ to an operator in $\Sb(\calH)$, as
$
  \matB \mapsto \opV^* \matB \opV,
$
in such a way that we have the matrix element matching $\langle \phi(z_i), \opV^* \matB \opV \phi(z_j)\rangle = \matPhi_i^\top \matB \matPhi_j$ for all $1\leq i,j \leq m$.
\paragraph{Finite rank solution thanks to a representer theorem.}
The sampling approximation of the Fredholm determinant also yields a finite rank solution for~\eqref{eq:MLE_Problem_Penalized}. For simplicity, we define $\calC \triangleq \cup_{\ell = 1 }^{s} \calC_\ell$ and recall $\calI = \{x'_1, \dots, x'_n\}$. Then, write the set of points $ \mathcal{Z}\triangleq\calC \cup \calI$ as $\{z_1,\dots, z_m\}$, with $m= |\calC| + n$, and denote the corresponding restriction operator $\opS_m:\calH\to \R^m$. Consider the kernel matrix $\matK = [\kRKHS(z_i,z_j)]_{1\leq i,j \leq m}$. In particular, since we used a trace regularizer,
the representer theorem of~\citet[Theorem 1]{Marteau-Ferey} holds: the optimal operator is of the form
\begin{equation}
  \opA = \sum_{i,j=1}^m \matC_{ij} \phi(z_i)\otimes \overline{\phi(z_j)} \text{ with } \matC\in \Sb(\R^m).\label{eq:A_representer}
\end{equation}
In this paper, we call $\matC$ the representer matrix of the operator $\opA$.
If we do the change of variables $\matB = \matR\matC\matR^\top$, we have the following identities:
$A = m S_m \matC S_m^* = V^* \matB V$ and $\Tr(A) = \Tr(\matK \matC) = \Tr(\matB)$,
thanks to Lemma~\ref{lem:eigenvalues} in Supplementary Material. Therefore, the problem~\eqref{eq:MLE_Problem_Penalized} boils down to the \emph{finite} non-convex problem:
\begin{equation}
    \min_{\matB\succeq 0}
f_{n}(\opV^* \matB \opV)+ \lambda  \Tr( \matB),\label{eq:SDP_B}
\end{equation}
where $f_{n}(\opV^* \matB \opV) = -\frac{1}{s}\sum_{\ell= 1}^{s}\log\det\left[\matPhi_i^\top \matB \matPhi_j  \right]_{i,j\in \calC_\ell}+\log\det\left[ \delta_{ij} +  \matPhi_i^\top \matB \matPhi_j /|\calI|  \right]_{i,j\in \calI}$. We assume that there is a global minimizer of~\eqref{eq:SDP_B} that we denote by $\matB_\star$.
The final estimator of the integral kernel of the likelihood $\Lsf$ depends on $\matC_\star = \matR^{-1}\matB_\star \matR^{-1\top}$ and reads
$
  \hat{\asf}(x,y) =  \sum_{i,j=1}^m \matC_{\star ij} \kRKHS(z_i,x)\kRKHS(z_j,y).
  $
The numerical strategy is summarized in Algorithm~\ref{alg:EstimateL}.
\begin{algorithm}[H]
  \begin{algorithmic}
    \Procedure{Estimate$\Lsf$}{$\lambda,\calC_1, \dots, \calC_s$} 
    \State Sample $\calI = \{x'_1, \dots, x'_n\}$ i.i.d.\ from $\mu$ \Comment{Sample $n$ points for Fredhom det. approx.}
    \State Define $\mathcal{Z} \triangleq  \cup_{\ell = 1 }^{s} \calC_\ell \cup \calI$ \Comment{Collect all samples}
    \State Compute $\matK = \matR^\top \matR$ with $\matK = [\kRKHS(z_i,z_j)]_{ i,j\in \mathcal{Z} }$ \Comment{Cholesky of kernel matrix}
    \State  Solve~\eqref{eq:SDP_B} with iteration~\eqref{eq:regPicard_with_B} to obtain $\matB_\star$ \Comment{Regularized Picard iteration}
    \State Compute $\matC_\star = \matR^{-1}\matB_\star\matR^{-1\top}$ \Comment{Representer matrix of $\hat{\asf}(x,y)$}
    \State {\bf return} $\hat{\asf}(x,y) =  \sum_{i,j=1}^m \matC_{\star ij} \kRKHS(z_i,x)\kRKHS(z_j,y) $ \Comment{Likelihood kernel}
  \EndProcedure
  \end{algorithmic}
  \caption{Estimation of the integral kernel $\asf(x,y)$ of the DPP likelihood kernel $\Lsf$.\label{alg:EstimateL}}
  \end{algorithm}
\paragraph{Estimation of the correlation kernel.}
The exact computation of the correlation kernel of the L-ensemble DPP
\begin{equation}
  \Kbb(\gamma) = \Lsf(\Lsf + \gamma \I)^{-1},\label{eq:K_L-Ensemble}
\end{equation}
requires the exact diagonalization of $\Lsf = \opS \opA\opS^*$. For more flexibility, we introduced a scale parameter $\gamma>0$ which  often takes the value $\gamma=1$. It is instructive to approximate $\Kbb$ in order to easily express the correlation functions of the estimated point process. We propose here an approximation scheme based once again on sampling. Recall the form of the solution $A =m \opS_m^* \matC \opS_m$ of~\eqref{eq:MLE_Problem_Penalized}, and consider the factorization $\matC = \matLambda^\top \matLambda$ with $\matLambda = \matF \matR^{-1\top}$ where $\matF^\top \matF = \matB_\star$ is the Cholesky factorization of $\matB_\star$. Let $\{x''_1,\dots, x''_p\}\subseteq \calX$ be sampled i.i.d.\ from the probability measure $\mu$ and denote by $\opS_p: \calH \to \R^p$ the corresponding restriction operator. The following integral operator
\begin{equation}
  \hat{\Kbb} =  m\opS \opS^*_m\matLambda^\top ( m\matLambda \opS_m \opS^*_p \opS_p \opS^*_m \matLambda^\top + \gamma \matI_m)^{-1}\matLambda \opS_m \opS^*,\label{eq:K_approx}
\end{equation}
%
%
gives an approximation of $\Kbb$.
The numerical approach for solving~\eqref{eq:K_approx} relies on the computation of $\matK_{m p} = \sqrt{mp} S_m S^*_p = [\kRKHS(z_i,x''_j)]$ with $1\leq i \leq m$ and $1\leq j \leq p$ is a rectangular kernel matrix, associated to a fixed ordering of $\mathcal{Z} = \{z_1, \dots, z_m \}$ and $\{x''_1,\dots, x''_p\}$. Our strategy is described in Algorithm~\ref{alg:EstimateK}.
\begin{algorithm}[H]
  \begin{algorithmic}
    \Procedure{Estimate$\Kbb$}{$\mathcal{Z},\matC_\star$} 
    \State Compute $\matC_\star = \matLambda^\top \matLambda$ \Comment{Factorization of representer matrix}
    \State Sample $\{x''_1,\dots, x''_p\}\subseteq \calX$ i.i.d.\ from $\mu$ \Comment{Sample $p$ points}
    \State Compute $\matK_{m p} = [\kRKHS(z_i,x''_j)]\in \R^{m\times p}$ \Comment{Cross kernel matrix}
    \State Compute $\matOmega=\matLambda^\top (\matLambda \matK_{m p}\frac{1}{p}\matK_{mp}^\top \matLambda^\top +  \matI_m)^{-1}\matLambda$ \Comment{Representer matrix of $\hat{\ksf} (x,y)$}
    \State {\bf return} $\hat{\ksf}(x,y) =  \sum_{i,j=1}^m \matOmega_{ij} \kRKHS(z_i,x)\kRKHS(z_j,y) $ \Comment{Correlation kernel}
  \EndProcedure
  \end{algorithmic}
  \caption{Estimation of the integral kernel $\ksf(x,y)$ of the DPP correlation kernel $\Kbb = \Lsf(\Lsf+ \I)^{-1}$. \label{alg:EstimateK}}
  \end{algorithm}

\section{Implementation}\label{sec:Implementation}
We propose an algorithm for solving the  discrete problem~\eqref{eq:SDP_B} associated to~\eqref{eq:MLE_Problem_Penalized}. To simplify the discussion and relate it to~\cite{pmlr-v37-mariet15}, we define the objective $g(\matX) = f_{n}(\opV^* \matB(\matX) \opV)+ \lambda \Tr( \matB(\matX))$ with the change of variables $\matB(\matX) = \matR^{-1 \top} \matX \matR^{-1}$. Then we can rephrase~\eqref{eq:SDP_B} as
\begin{equation}
  \min_{\matX\succeq 0}g(\matX) = -\frac{1}{s}\sum_{\ell= 1}^{s}\log\det(\matX_{\calC_\ell \calC_\ell})+\log\det\left( \matI_{|\calI|} + \frac{1}{n}\matX_{\calI\calI}  \right) + \lambda\Tr(  \matX \matK^{-1}),\label{eq:PicardObj}
\end{equation}
where we recall that $ n = |\calI|$.
Define for convenience $\matU_\ell$ as the matrix obtained by selecting the columns of the identity matrix which are indexed by $\calC_\ell$, so that, we have in particular $\matX_{\calC_\ell \calC_\ell} = \matU_\ell^\top \matX \matU_\ell$. Similarly, define a sampling matrix $\matU_\calI$ associated to the subset $\calI$. Recall the Cholesky decomposition $\matK = \matR^\top \matR$. To minimize~\eqref{eq:PicardObj}, we start at some $\matX_0\succ 0$ and use the following iteration
\begin{equation}
  \matX_{k+1} = \frac{1}{2\lambda}\matR^\top\left(\left(\matI_m +4\lambda \matR^{-1\top}p(\matX_k)\matR^{-1}\right)^{1/2}\matR -\matI_m\right)\matR, \label{eq:regPicard}
\end{equation}
where $p(\matX) = \matX+ \matX\matDelta \matX$  and $  \matDelta(\matX) = \frac{1}{s}\sum_{\ell= 1}^{s} \matU_\ell \matX_{\calC_\ell \calC_\ell}^{-1}\matU_\ell^\top -\matU_\calI(\matX_{\calI\calI}+ n \matI_{|\calI|})^{-1}\matU_\calI^\top.$
We dub this sequence a regularized Picard iteration, as it is a generalization of the Picard iteration which was introduced by~\cite{pmlr-v37-mariet15} in the context of learning discrete L-ensemble DPPs. In~\cite{pmlr-v37-mariet15}, the Picard iteration, defined as $\matX_{k+1} = p(\matX_k)$, is shown to be appropriate for minimizing a different objective  given by: $ -\frac{1}{s}\sum_{\ell= 1}^{s}\log\det(\matX_{\calC_\ell \calC_\ell})+\log\det( \matI + \matX )$.
The following theorem indicates that the iteration~\eqref{eq:regPicard} is a good candidate for minimizing $g(\matX)$.
\begin{theorem}\label{thm:Picard}
Let $\matX_k$ for integer $k$ be the sequence generated by~\eqref{eq:regPicard} and initialized with $\matX_0\succ 0$. Then, the sequence $g(\matX_k)$ is monotonically decreasing.
\end{theorem}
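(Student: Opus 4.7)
The plan is a majorization--minimization (MM) descent argument, generalizing the Picard iteration analysis of~\cite{pmlr-v37-mariet15} from the unregularized finite DPP setting to the trace-regularized objective $g$. The idea is to build a surrogate $h(\matX;\matX_k)\geq g(\matX)$ tight at $\matX=\matX_k$ and whose unique positive-definite minimizer is given in closed form by~\eqref{eq:regPicard}. Monotonic decrease then follows from the sandwich $g(\matX_{k+1})\leq h(\matX_{k+1};\matX_k)\leq h(\matX_k;\matX_k)=g(\matX_k)$.

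First I would recast the iteration implicitly. Using the Cholesky-based change of variables $\matY=\matR^{-1\top}\matX\matR^{-1}$, unfolding the matrix square root in~\eqref{eq:regPicard} yields $\lambda\matY_{k+1}^2+\matY_{k+1}=\matR^{-1\top}p(\matX_k)\matR^{-1}$, or equivalently
\[
\lambda\matX_{k+1}\matK^{-1}\matX_{k+1}+\matX_{k+1}=p(\matX_k)=\matX_k+\matX_k\matDelta(\matX_k)\matX_k.
\]
Taking the positive semi-definite root makes the iteration well-defined and preserves positivity. As a byproduct, any fixed point $\matX^\star$ satisfies $\matDelta(\matX^\star)=\lambda\matK^{-1}$, which is precisely the stationarity condition $\nabla g(\matX^\star)=0$, since $\nabla g(\matX)=-\matDelta(\matX)+\lambda\matK^{-1}$.

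Next, the surrogate would be built piece by piece: the concave term $\log\det(\matI+\matX_{\calI\calI}/n)$ is majorized by its tangent hyperplane at $\matX_k$, giving a linear function with coefficient $\matU_\calI((\matX_k)_{\calI\calI}+n\matI)^{-1}\matU_\calI^\top$; the linear regularizer $\lambda\Tr(\matX\matK^{-1})$ is kept as is; and each convex term $-\log\det\matX_{\calC_\ell\calC_\ell}$ is majorized via the matrix AM--GM inequality $-\log\det\mathbf{A}\leq-\log\det\mathbf{B}+\Tr(\mathbf{B}\mathbf{A}^{-1})-\dim(\mathbf{A})$, which is tight at $\mathbf{A}=\mathbf{B}$. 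The decisive algebraic check is to verify that, after the change of variables $\matY=\matR^{-1\top}\matX\matR^{-1}$, the first-order optimality condition for $h(\cdot;\matX_k)$ reduces exactly to the matrix quadratic $\lambda\matY^2+\matY=\matR^{-1\top}p(\matX_k)\matR^{-1}$ whose positive root is~\eqref{eq:regPicard}.

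I expect the main obstacle to be this consistency check: the AM--GM bound introduces terms of the form $\Tr((\matX_k)_{\calC_\ell\calC_\ell}\matX_{\calC_\ell\calC_\ell}^{-1})$, whose gradient is quadratic in $\matX_{\calC_\ell\calC_\ell}^{-1}$, whereas the iteration's fixed-point equation is quadratic in $\matX$ itself. Bridging this gap will likely require either a finer majorizer --- for instance combining AM--GM with a Bregman proximal term adapted to the geometry induced by $\matK$, or with a second inverse-type upper bound --- or, alternatively, a surrogate-free verification that $g(\matX_{k+1})-g(\matX_k)$ decomposes into manifestly nonpositive matrix-quadratic forms in $\matX_{k+1}-\matX_k$ and the residual $-\nabla g(\matX_k)=\matDelta(\matX_k)-\lambda\matK^{-1}$. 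Once this step is settled, strict monotonic decrease follows unless $\matX_k$ is already a critical point of $g$.
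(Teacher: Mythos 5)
Your overall strategy (a majorization--minimization argument generalizing \citet{pmlr-v37-mariet15}, with the sandwich $g(\matX_{k+1})\leq h(\matX_{k+1};\matX_k)\leq h(\matX_k;\matX_k)=g(\matX_k)$) is the right one, and your rewriting of the update as the implicit quadratic $\matX_{k+1}+\lambda\matX_{k+1}\matK^{-1}\matX_{k+1}=p(\matX_k)$, together with the observation that fixed points satisfy $\matDelta(\matX^\star)=\lambda\matK^{-1}$, i.e.\ $\nabla g(\matX^\star)=0$, is correct. However, the concrete surrogate you build does not work, and you say so yourself: majorizing $-\log\det\matX_{\calC_\ell\calC_\ell}$ by the AM--GM bound $-\log\det\mathbf{B}+\Tr(\mathbf{B}\matX_{\calC_\ell\calC_\ell}^{-1})-\dim$ produces a stationarity condition quadratic in $\matX_{\calC_\ell\calC_\ell}^{-1}$, which cannot be matched to the update's quadratic in $\matX$. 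Since the monotonicity claim rests entirely on the update being the minimizer of a valid majorizer, this unresolved ``consistency check'' is not a technical detail but the heart of the proof, and as written the argument is incomplete.

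The missing idea is to carry out the convex--concave split in the \emph{inverse} variable $\matSigma=\matX^{-1}$ rather than in $\matX$. Writing $g(\matSigma^{-1})=h_1(\matSigma)+h_2(\matSigma)$ with $h_1(\matSigma)=-\log\det\matSigma+\lambda\Tr(\matSigma^{-1}\matK^{-1})$ convex and
\[
h_2(\matSigma)=\log\det\matSigma+\log\det\bigl(\matI_m+\matSigma^{-1}/|\calI|\bigr)_{\calI\calI}-\frac{1}{s}\sum_{\ell=1}^{s}\log\det\bigl(\matU_\ell^\top\matSigma^{-1}\matU_\ell\bigr)
\]
concave (the first two terms combine to $\log\det(\matSigma+\matU_\calI\matU_\calI^\top)$ up to a constant, and $-\log\det(\matU_\ell^\top\matSigma^{-1}\matU_\ell)$ is concave by the lemma of \citet{pmlr-v37-mariet15}), one keeps $h_1$ exactly and linearizes $h_2$ at $\matSigma_k$. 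Note that the roles are the opposite of what you assumed: in $\matSigma$-coordinates the data-fit terms $-\log\det\matX_{\calC_\ell\calC_\ell}$ become concave and are tangent-majorized, while the regularizer $\lambda\Tr(\matSigma^{-1}\matK^{-1})$ becomes convex and is retained. The first-order condition of the resulting strictly convex surrogate, after substituting back $\matX=\matSigma^{-1}$, is exactly $\matX+\lambda\matX\matK^{-1}\matX=p(\matX_k)$, whose positive-definite root is~\eqref{eq:regPicard}; no Bregman proximal term or second majorizer is needed.
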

For a proof, we refer to Section~\ref{supp:thm:Picard:proof}.
In practice, we use the iteration~\eqref{eq:regPicard} with the inverse change of variables $\matX(\matB) = \matR^\top \matB \matR$ and solve
\begin{equation}
  \matB_{k+1} = \frac{1}{2\lambda}\left(\left( \matI_m + 4\lambda q(\matB_k)\right)^{1/2}-\matI_m\right), \text{ with } q(\matB) = \matB + \matB \matR \matDelta\big(\matX(\matB)\big) \matR^\top \matB,\label{eq:regPicard_with_B}
\end{equation}
where $\matDelta(\matX)$ is given hereabove. For the stopping criterion, we monitor the objective values of~\eqref{eq:SDP_B} and stop if the relative variation of two consecutive objectives is less than a predefined precision threshold $\mathsf{tol}$. Contrary to~\eqref{eq:PicardObj}, the objective~\eqref{eq:SDP_B} does not include the inverse of $\matK$, which might be ill-conditioned. The interplay between $\lambda$ and $n$ is best understood by considering~\eqref{eq:SDP_B} with the change of variables $\matB' = \matB/n$, yielding the equivalent problem
\[
  \min_{\matB'\succeq 0} -\frac{1}{s}\sum_{\ell= 1}^{s}\log\det\left(\matPhi^\top \matB' \matPhi  \right)_{\calC_\ell\calC_\ell}+\log\det\left( \matI +  \matPhi^\top \matB' \matPhi  \right)_{ \calI\calI} + \lambda n \Tr(\matB'),
\]
where $\matPhi = \matR$ is a matrix whose $i$-th column is $\matPhi_i$ for $1\leq i\leq m$ as defined in~\eqref{eq:Phi_i}. Notice that, up to a $\nicefrac{1}{n}$ factor, $\matPhi^\top \matB' \matPhi$ is the in-sample Gram matrix of $\hat{\asf}(x,y)$ evaluated on the data set $\mathcal{Z}$; see Algorithm~\ref{alg:EstimateL}.
Thus, in the limit $\lambda \to 0$, the above expression corresponds to the MLE estimation of a finite DPP if $\cup_\ell \calC_\ell \subseteq \calI$. This is the intuitive connection with finite DPP: the continuous DPP is well approximated by a finite DPP if the ground set $\calI$ is a dense enough sampling within $\calX$.
%
\section{Theoretical guarantees\label{sec:theory}}
We now describe the guarantees coming with the approximations presented in the previous section.
\paragraph{Statistical guarantees for approximating the maximum likelihood problem}
Next, we give a statistical guarantee for the approximation of the log-likelihood by its sample version.
\begin{theorem}[Discrete optimal objective approximates full MLE objective]\label{thm:bound_objectives_whp}
Let $\matB_\star $ be the solution of~\eqref{eq:SDP_B}. Let $\opL_\star$ be the solution of \eqref{eq:MLE_Problem_RKHS}.
Let $\delta\in (0,1/2)$. If $\lambda \geq  2 c_n(\delta)$, then with probability at least $1-2\delta$, it holds that
 \[
| f(\opL_\star) - f_n(\opV^*\matB_\star  \opV)| \leq  \frac{3}{2} \lambda \Tr ( \opL_\star ),
 \]
 with $0<c_n\lesssim 1/\sqrt{n}$ given in Theorem~\ref{thm:formal_Fredholm}.
 \end{theorem}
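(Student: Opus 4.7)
}
The plan is a sandwich argument that uses optimality of $L_\star$ in the infinite problem~\eqref{eq:MLE_Problem_RKHS}, optimality of $A_\star \triangleq V^\ast B_\star V$ in the penalized problem~\eqref{eq:MLE_Problem_Penalized}, and the uniform control $|f(A)-f_n(A)|\leq c_n \Tr(A)$ provided by Theorem~\ref{thm:formal_Fredholm}. First I would place myself on the event of probability $\geq 1-2\delta$ produced by Theorem~\ref{thm:formal_Fredholm}; as emphasized in the remark following it, this event does not depend on $A$, so the bound $|\log\det(\I+SAS^\ast) - \log\det(\matI_n + S_n A S_n^\ast)| \leq \log\det(\I + c_n A) \leq c_n \Tr(A)$ holds simultaneously for all $A \in \Sb(\calH)$. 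Since $f$ and $f_n$ differ only in the normalization log-det term, this yields the pointwise bound $|f(A)-f_n(A)| \leq c_n \Tr(A)$ for every $A \in \Sb(\calH)$.

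Next, I would use the representer theorem discussion following~\eqref{eq:A_representer}: the finite problem~\eqref{eq:SDP_B} is equivalent to~\eqref{eq:MLE_Problem_Penalized}, so $A_\star$ is a minimizer of $A\mapsto f_n(A) + \lambda \Tr(A)$ over $\Sb(\calH)$. Testing this optimality against $L_\star$ and applying the Fredholm bound gives the upper estimate
\begin{equation*}
f_n(A_\star) + \lambda \Tr(A_\star) \;\leq\; f_n(L_\star) + \lambda \Tr(L_\star) \;\leq\; f(L_\star) + (c_n + \lambda)\Tr(L_\star).
\end{equation*}
For the lower estimate, optimality of $L_\star$ for $f$ and the Fredholm bound yield
\begin{equation*}
f_n(A_\star) \;\geq\; f(A_\star) - c_n \Tr(A_\star) \;\geq\; f(L_\star) - c_n \Tr(A_\star).
\end{equation*}

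Subtracting the second from the first produces $(\lambda - c_n)\Tr(A_\star) \leq (\lambda + c_n)\Tr(L_\star)$, so when $\lambda \geq 2 c_n$ one gets the a priori trace bound $\Tr(A_\star) \leq 3\,\Tr(L_\star)$. Plugging this back into the two one-sided estimates yields
\begin{equation*}
f(L_\star) - \tfrac{3}{2}\lambda \Tr(L_\star) \;\leq\; f_n(A_\star) \;\leq\; f(L_\star) + \tfrac{3}{2}\lambda \Tr(L_\star),
\end{equation*}
using $c_n \leq \lambda/2$ to collapse $\lambda + c_n$ and $3 c_n$ into $\tfrac{3}{2}\lambda$, which is precisely the claim.

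The only genuinely non-routine step is the circular-looking control of $\Tr(A_\star)$: one cannot simply test optimality against $A=0$ because $f_n(0)=+\infty$ (the numerator log-dets blow up), so the argument must compare $A_\star$ against $L_\star$ itself and then exploit the gap $\lambda - c_n \geq \lambda/2$ to break the self-reference. Everything else is bookkeeping around Theorem~\ref{thm:formal_Fredholm} and the representer-theorem identification of $A_\star$ with the penalized minimizer.
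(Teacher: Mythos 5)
Your argument is correct and follows essentially the same route as the paper: place yourself on the uniform event of Theorem~\ref{thm:formal_Fredholm}, test the optimality of $\matB_\star$ against a competitor built from $\opL_\star$ and the optimality of $\opL_\star$ against $\opV^*\matB_\star\opV$, extract the a priori bound $\Tr(\matB_\star)\leq 3\Tr(\opL_\star)$ from the resulting sandwich, and feed it back in. The only cosmetic difference is that you obtain the key inequality $f_n(\opV^*\matB_\star\opV)+\lambda\Tr(\matB_\star)\leq f_n(\opL_\star)+\lambda\Tr(\opL_\star)$ by invoking the representer-theorem equivalence of~\eqref{eq:SDP_B} with~\eqref{eq:MLE_Problem_Penalized}, whereas the paper derives it directly by comparing $\matB_\star$ with the explicit competitor $\bar{\matB}=\opV \opL_\star \opV^*$ via Lemma~\ref{lemma:Bbar}.
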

The above result, proved in Section~\ref{supp:thm:bound_objectives_whp:proof}, shows that, with high probability, the optimal objective value of the discrete problem is not far from the optimal log-likelihood provided $n$ is large enough. As a simple consequence, the discrete solution also yields a finite rank operator $V\matB_\star  V^*$ whose likelihood is not far from the optimal likelihood $f(\opL_\star)$, as it can be shown by using a triangle inequality.
\begin{corollary}[Approximation of the full MLE optimizer by a finite rank operator]\label{corol:likelihood_approximation}
Under the assumptions of Theorem~\ref{thm:bound_objectives_whp}, if $\lambda \geq  2 c_n(\delta)$, with probability at least $1-2\delta$, it holds
 \[
| f(\opL_\star) - f(\opV^* \matB_\star  \opV)| \leq  3 \lambda \Tr ( \opL_\star )
 \]
 with $c_n\lesssim 1/\sqrt{n}$ given in Theorem~\ref{thm:formal_Fredholm}.
\end{corollary}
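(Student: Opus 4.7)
The plan is to bound the quantity via a triangle inequality that splits the error through the discretized objective $f_n$:
\[
|f(\opL_\star) - f(\opV^*\matB_\star\opV)| \leq |f(\opL_\star) - f_n(\opV^*\matB_\star\opV)| + |f_n(\opV^*\matB_\star\opV) - f(\opV^*\matB_\star\opV)|.
\]
The first term is directly controlled by Theorem~\ref{thm:bound_objectives_whp}, giving $\tfrac{3}{2}\lambda \Tr(\opL_\star)$ on the high-probability event $\mathcal E = \{\|\opS^*\opS - \opS_n^*\opS_n\|_{op}\lesssim c_n\}$. The whole point of the construction in Section~\ref{sec:MainResults} is that no union bound is needed: both theorems depend on the same event $\mathcal E$.

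For the second term, I would apply Theorem~\ref{thm:formal_Fredholm} to the operator $\opV^*\matB_\star\opV \in \Sb(\calH)$. Since $f_n$ and $f$ differ only in the Fredholm-determinant summand, this yields
\[
|f_n(\opV^*\matB_\star\opV) - f(\opV^*\matB_\star\opV)| \leq \log\det(\I + c_n \opV^*\matB_\star\opV) \leq c_n \Tr(\opV^*\matB_\star\opV) = c_n \Tr(\matB_\star),
\]
where the last identity uses that $\opV$ is a partial isometry with $\opV\opV^* = \matI_m$, so that $\Tr(\opV^*\matB_\star\opV) = \Tr(\matB_\star\opV\opV^*) = \Tr(\matB_\star)$.

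The main obstacle is therefore to control $\Tr(\matB_\star)$ by $\Tr(\opL_\star)$. For this I would exploit the optimality of $\matB_\star$ in~\eqref{eq:SDP_B} and compare to the finite-dimensional representative $\bar{\matB}= \opV\opL_\star\opV^*$ furnished by Lemma~\ref{lemma:Bbar}. Because $\bar{\matB}$ reproduces the kernel values $\langle \phi(z_i), \opL_\star \phi(z_j)\rangle$ on all $z_i\in\mathcal Z$, we have $f_n(\opV^*\bar{\matB}\opV) = f_n(\opL_\star)$, and Lemma~\ref{lemma:Bbar} gives $\Tr(\bar{\matB})\leq \Tr(\opL_\star)$, whence
\[
f_n(\opV^*\matB_\star\opV) + \lambda \Tr(\matB_\star) \leq f_n(\opL_\star) + \lambda \Tr(\opL_\star).
\]
Inserting the Fredholm approximation bounds (on event $\mathcal E$) $f_n(\opV^*\matB_\star\opV) \geq f(\opL_\star) - c_n \Tr(\matB_\star)$ (using $f(\opV^*\matB_\star\opV)\geq f(\opL_\star)$ by optimality of $\opL_\star$) and $f_n(\opL_\star) \leq f(\opL_\star) + c_n \Tr(\opL_\star)$ yields
\[
(\lambda - c_n)\Tr(\matB_\star) \leq (\lambda + c_n)\Tr(\opL_\star).
\]

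Finally, under the assumption $\lambda \geq 2c_n$, we have $\lambda - c_n \geq \lambda/2$ and $\lambda + c_n \leq 3\lambda/2$, so $\Tr(\matB_\star) \leq 3\Tr(\opL_\star)$. Substituting back, the second term of the triangle inequality is at most $c_n\cdot 3\Tr(\opL_\star) \leq \tfrac{3}{2}\lambda \Tr(\opL_\star)$, and summing with the first term gives the claimed $3\lambda\Tr(\opL_\star)$ bound on the same event of probability at least $1-2\delta$. The only subtlety to be careful about is that every inequality is valid pointwise on $\mathcal E$, so that no additional probability budget is spent beyond what Theorem~\ref{thm:formal_Fredholm} already requires.
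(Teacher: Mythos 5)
Your proposal is correct and follows essentially the same route as the paper: the same triangle inequality through $f_n(\opV^*\matB_\star\opV)$, Theorem~\ref{thm:bound_objectives_whp} for the first term, and Theorem~\ref{thm:formal_Fredholm} plus a control of $\Tr(\matB_\star)$ by $\Tr(\opL_\star)$ for the second. The only difference is that you re-derive the trace bound $(\lambda-c_n)\Tr(\matB_\star)\leq(\lambda+c_n)\Tr(\opL_\star)$ from the two optimality conditions, whereas the paper simply cites the inequality~\eqref{eq:B_bound} already established inside the proof of Theorem~\ref{thm:bound_objectives_whp}; the ingredients and constants are identical.
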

The proof of Corollary~\ref{corol:likelihood_approximation} is also provided in Section~\ref{supp:thm:bound_objectives_whp:proof}.
\paragraph{Approximation of the correlation kernel}
An important quantity for the control of the amount of points necessary to approximate well the correlation kernel is the so-called \emph{effective dimension}
$
d_{\rm eff}(\gamma) = \Tr\left(\Lsf(\Lsf+\gamma\I)^{-1}\right),
$
which is the expected sample size under the DPP with correlation kernel $\Kbb = \Lsf(\Lsf + \gamma \I)^{-1}$.
\begin{theorem}[Correlation kernel approximation]\label{thm:approx_correlation_kernel_simplified}
  Let $\delta \in (0,1)$ be a failure probability, let $\epsilon\in (0,1)$ be an acurracy parameter and let $\gamma>0$ be a scale factor.
  Let $\Kbb(\gamma)$ be the correlation kernel~\eqref{eq:K_L-Ensemble} defined with $\Lsf = S A S^*$.
  Consider $\hat{\Kbb} (\gamma)$ defined in~\eqref{eq:K_approx} with i.i.d. sampling of $p$ points in $\calX$ wrt $\mu$.
  If we take
  $
    p\geq \frac{8\kappa^2 \| A\|_{op}}{\gamma \epsilon^2}\log\left( \frac{4 d_{\rm eff}(\gamma)}{\delta \| \Kbb \|_{op}}\right),
  $
  then, with probability $1-\delta$,  the following multiplicative error bound holds
  $
    \frac{1}{1+\epsilon}\Kbb(\gamma)\preceq \hat{\Kbb} (\gamma) \preceq \frac{1}{1-\epsilon} \Kbb(\gamma).
  $
  \end{theorem}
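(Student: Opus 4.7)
The plan is to bring $\Kbb(\gamma)$ and $\hat{\Kbb}(\gamma)$ into a common push-through form so that they differ only by replacing the covariance operator $C$ with its sample version $\hat{C}_p := \opS_p^*\opS_p$; then to reduce the desired multiplicative sandwich to an operator-norm bound on a centered sum of rank-one matrices; and finally to conclude via an intrinsic-dimension matrix Bernstein inequality.

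First, factor $\matC_\star = \matLambda^\top\matLambda$ and set $W := \matLambda\,\opS_m:\calH\to\R^m$, $T := W\opS^*:L^2(\calX)\to\R^m$. Since $A = m\opS_m^*\matC_\star\opS_m = mW^*W$, we have $\Lsf = \opS A\opS^* = mT^*T$, so the push-through identity yields $\Kbb(\gamma) = mT^*(mG+\gamma\matI_m)^{-1}T$ with $G := TT^* = WCW^*$. Reading \eqref{eq:K_approx} in the same way, $\hat\Kbb(\gamma) = mT^*(m\hat{G}+\gamma\matI_m)^{-1}T$ with $\hat{G} := W\hat{C}_pW^*$. By conjugation with $\sqrt{m}T$ and inversion on the PSD cone, the target sandwich $\tfrac{1}{1+\epsilon}\Kbb\preceq\hat\Kbb\preceq\tfrac{1}{1-\epsilon}\Kbb$ is implied by
\[
(1-\epsilon)(mG+\gamma\matI_m)\preceq m\hat{G}+\gamma\matI_m\preceq(1+\epsilon)(mG+\gamma\matI_m).
\]
Introducing $M := \sqrt{m}\,(mG+\gamma\matI_m)^{-1/2}W:\calH\to\R^m$, this is exactly $\|M(\hat{C}_p-C)M^*\|_{op}\leq\epsilon$.

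Two ingredients about $M$ feed the concentration argument. On one hand, $\|A\|_{op} = m\|WW^*\|_{op}$ gives $\|M\|_{op}^2 \leq m\|WW^*\|/\gamma = \|A\|_{op}/\gamma$, hence the uniform bound $\|M\phi(x)\|^2 \leq R := \kappa^2\|A\|_{op}/\gamma$ for every $x\in\calX$. On the other hand, another push-through identity gives $MCM^* = (mG+\gamma\matI_m)^{-1}(mG)$, so $\|MCM^*\|_{op} = \|\Kbb(\gamma)\|_{op}$ and $\Tr(MCM^*) = \Tr\bigl(\Lsf(\Lsf+\gamma\I)^{-1}\bigr) = d_{\rm eff}(\gamma)$. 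Now decompose
\[
M(\hat{C}_p-C)M^* = \frac{1}{p}\sum_{i=1}^p X_i,\qquad X_i := M\phi(x''_i)\otimes\overline{M\phi(x''_i)} - MCM^*,
\]
which are i.i.d.\ self-adjoint mean-zero with $\|X_i\|_{op}\leq R$. A standard rank-one computation yields
\[
\E X_i^2 \preceq \E\bigl[\|M\phi(x'')\|^2 M\phi(x'')\otimes\overline{M\phi(x'')}\bigr] \preceq R\cdot MCM^*,
\]
so the variance matrix of $(1/p)\sum_i X_i$ is controlled in operator norm by $R\|\Kbb(\gamma)\|_{op}/p$ and in trace by $R\,d_{\rm eff}(\gamma)/p$.

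Plugging these estimates into the intrinsic-dimension matrix Bernstein inequality, with intrinsic dimension bounded above by $d_{\rm eff}(\gamma)/\|\Kbb(\gamma)\|_{op}$, and solving for the sample size $p$ that forces the resulting tail to be at most $\delta$ produces exactly the stated lower bound on $p$. The main obstacle I foresee is the careful bookkeeping of constants through the intrinsic-dimension Bernstein tail so that the prefactor $8\kappa^2\|A\|_{op}/(\gamma\epsilon^2)$ and the log-argument $4d_{\rm eff}(\gamma)/(\delta\|\Kbb(\gamma)\|_{op})$ come out in the claimed form; a secondary technical point is the justification of the push-through and conjugate-and-invert steps when $G$ or $\Lsf$ has nontrivial kernel, which is handled by a routine $\gamma'\downarrow 0$ limiting argument on the resolvents.
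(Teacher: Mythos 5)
Your proposal is correct and follows essentially the same route as the paper: the paper sets $\Psi=\sqrt{m}\,\matLambda S_m$ (your $\sqrt{m}\,W$), reduces the claim via push-through identities to a relative perturbation bound on $\Psi S^*S\Psi^*$ versus $\Psi S_p^*S_p\Psi^*$, identifies $\Tr[Q(Q+\gamma\matI)^{-1}]=d_{\rm eff}(\gamma)$ and $\|Q(Q+\gamma\matI)^{-1}\|_{op}=\|\Kbb(\gamma)\|_{op}$, and then inverts and conjugates the resulting sandwich exactly as you do. The only cosmetic difference is that the paper invokes a packaged whitened Bernstein bound (Proposition~5 of \citet{rudi2018fast}) where you unroll the whitening and the intrinsic-dimension Bernstein computation by hand.
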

The proof of Theorem~\ref{thm:approx_correlation_kernel_simplified}, given in Section~\ref{supp:thm:approx_correlation_kernel_simplified:proof}, mainly relies on a matrix Bernstein inequality. Let us make a few comments.
First, we can simply take $\gamma = 1$ in Theorem~\ref{thm:approx_correlation_kernel_simplified} to recover the common definition of the correlation kernel~\eqref{eq:K_kernel_of_Lensemble}.
Second, the presence of $d_{\rm eff}(\gamma)$ in the logarithm is welcome since it is the expected subset size of the L-ensemble.
Third, the quantity $\|A\|_{op}$ directly influences the typical sample size to get an accurate approximation.
A worst case bound is $\|A\|_{op} \leq \lambda_{\max}(\matC) \lambda_{\max}(\matK)$ with $\matK = [\kRKHS(z_i,z_j)]_{1\leq i,j \leq m}$ and where we used that $A = m S_m \matC S_m^*$ in the light of~\eqref{eq:A_representer}. Thus, the lower bound on $p$ may be large in practice. Although probably more costly, an approach inspired from approximate ridge leverage score sampling~\citep{rudi2018fast} is likely to allow lower $p$'s. We leave this to future work.
\section{Empirical evaluation}\label{sec:simulations}
We consider an L-ensemble with correlation kernel $\ksf(x,y) =\rho \exp(-\|x-y\|_2^2/\alpha^2)$ defined on $\R^d$  with $\alpha = 0.05$.
Following \cite{LaMoRu14}, this is a valid kernel if $\rho< (\sqrt{\pi}\alpha)^{-d}$. Note that the intensity, defined as $x\mapsto\ksf(x,x)$, is constant equal to $\rho$; we shall check that the fitted kernel recovers that property.
%
%
\begin{figure}
  \centering
  \includegraphics[scale = 0.3]{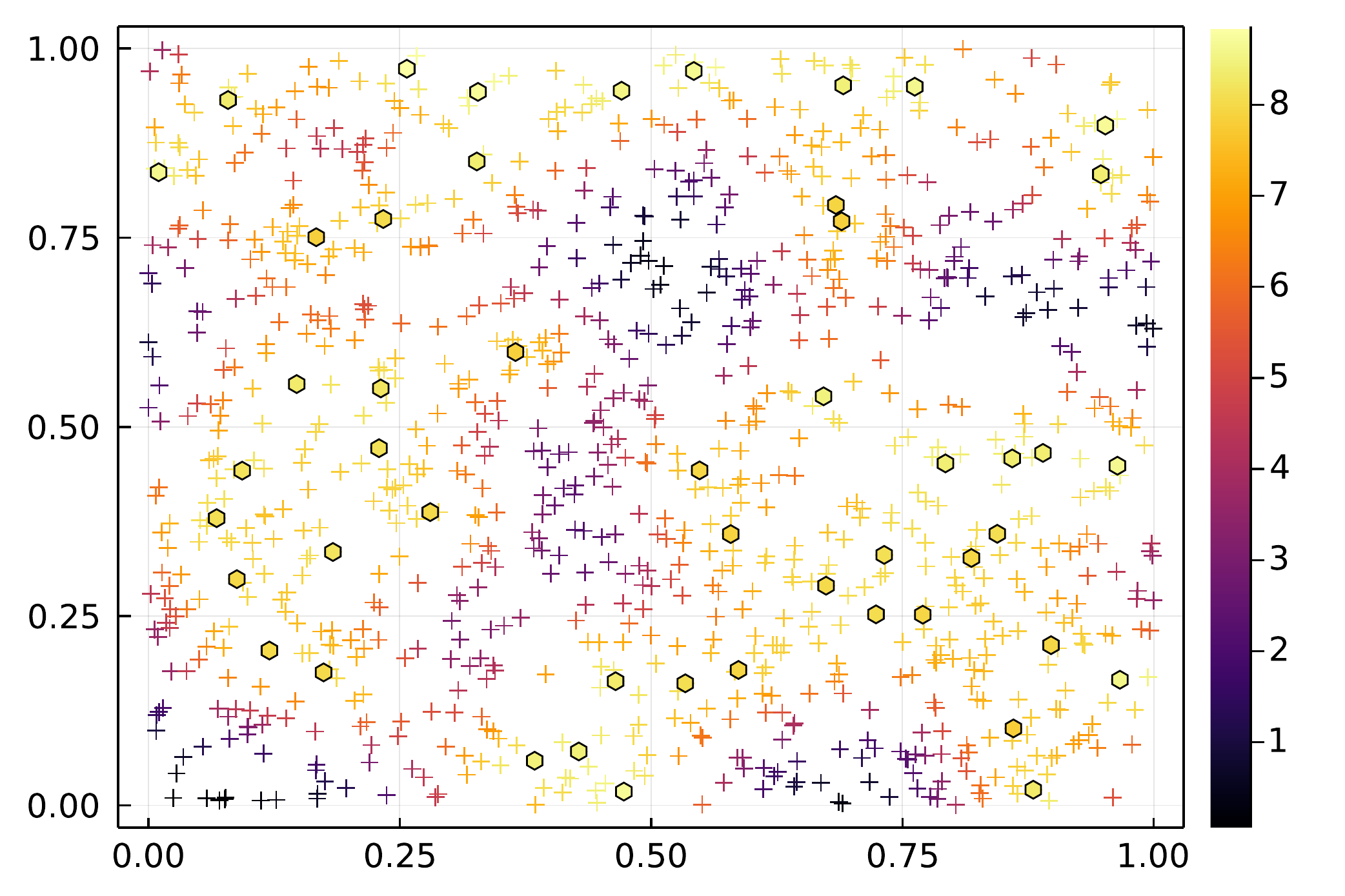}
  \includegraphics[scale = 0.3]{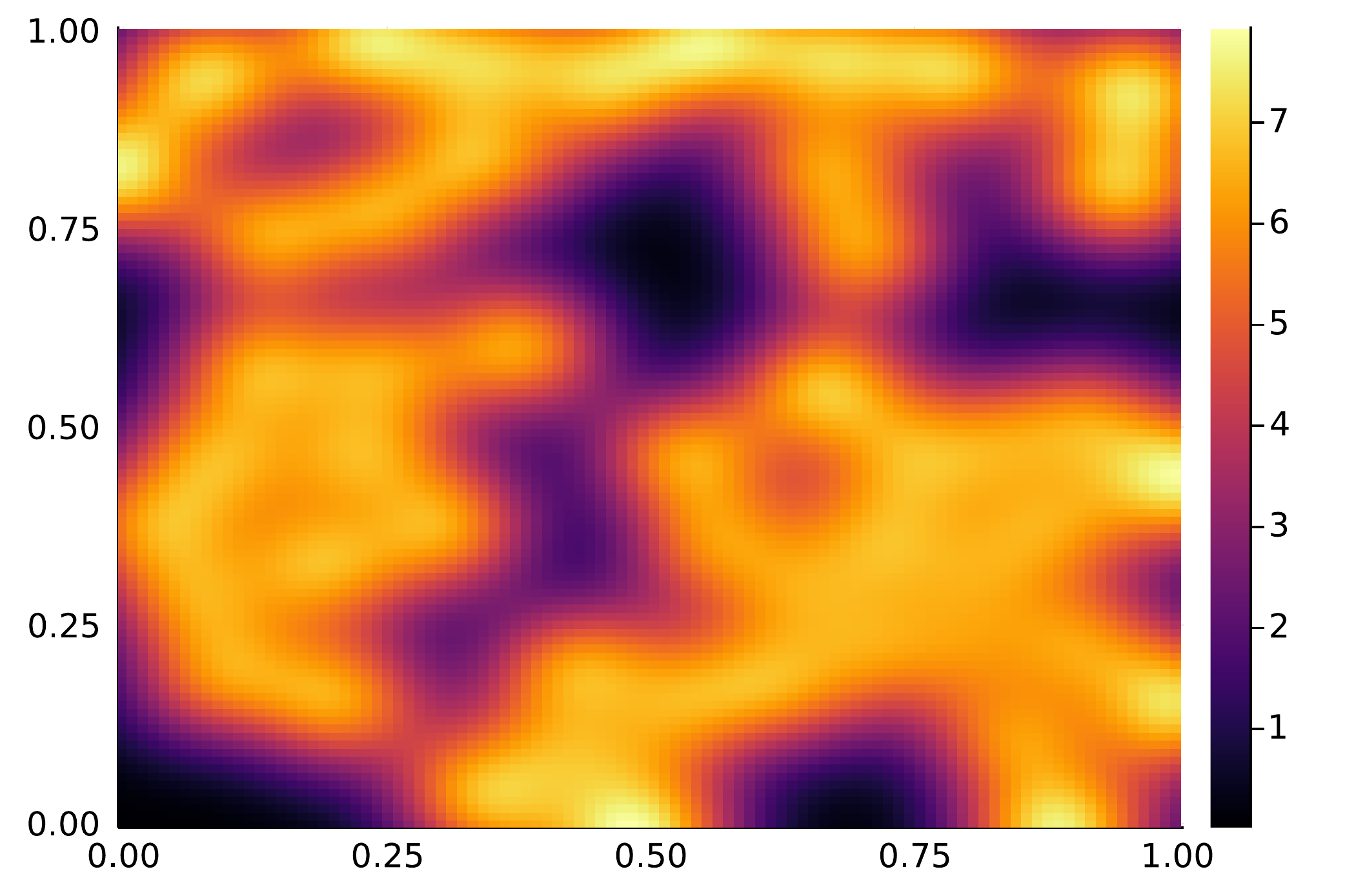}
  \includegraphics[scale = 0.3]{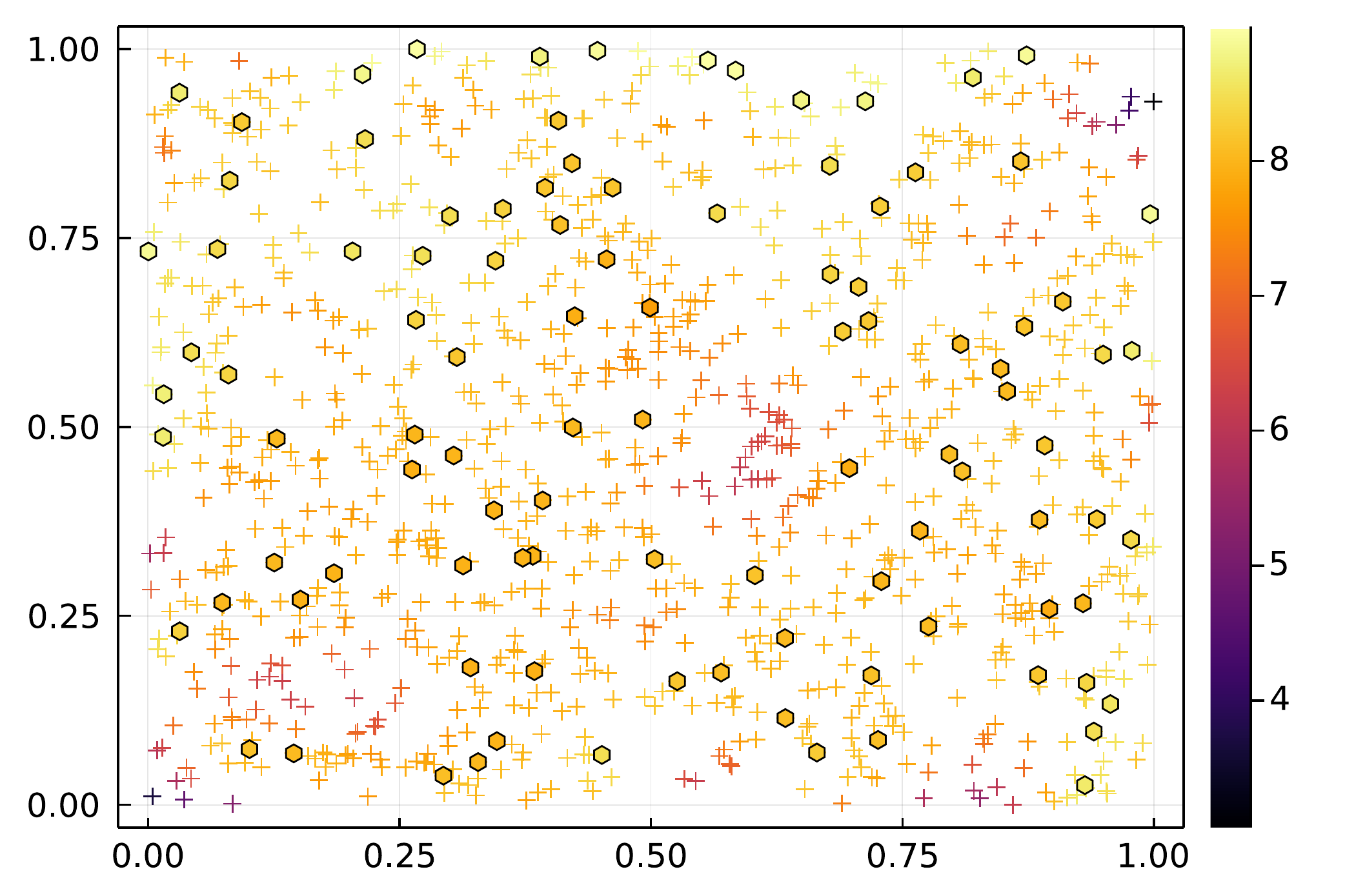}
  \includegraphics[scale = 0.3]{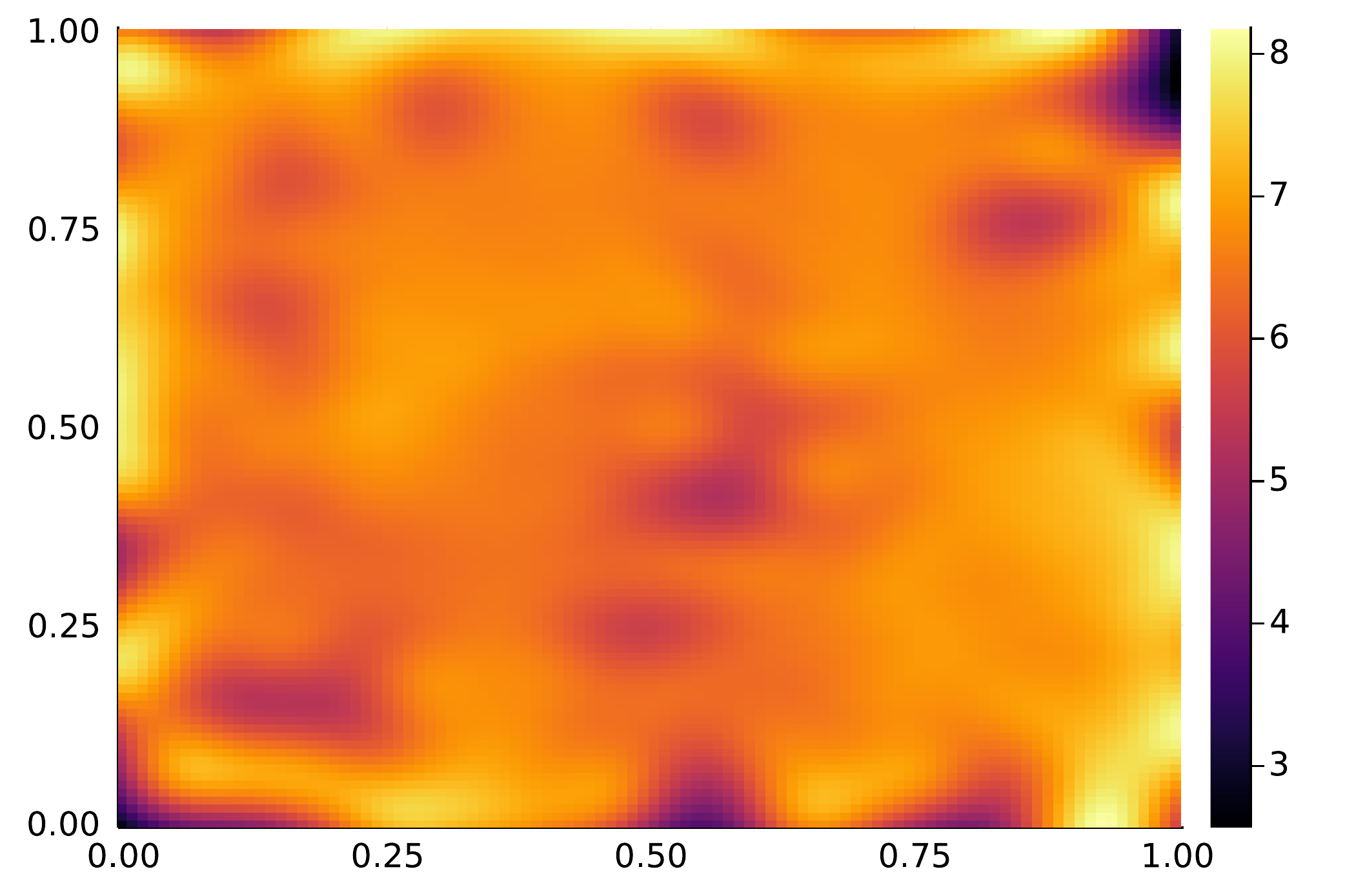}
  \caption{Intensity estimation with $\sigma = 0.1$ and $\lambda= 0.1$ from $1$ DPP sample with $\rho = 50$ (top row) and $\rho = 100$ (bottom row). On the LHS,  a DPP sample (hexagons) and $n = 1000$ uniform samples (crosses), the color is the diagonal of $\matPhi^\top \matB \matPhi$ (in-sample likelihood kernel). On the RHS, out-of-sample estimated intensity $\hat{\ksf}(x,x)$ of the learned process on a $100\times 100$ grid.
  \label{fig:Intensity1Sample}}
\end{figure}
We draw samples\footnote{We used the code of~\cite{poinas:hal-03157554}, available at \url{https://github.com/APoinas/MLEDPP}. It relies on the R package \emph{spatstat} \citep{Statspats}, available under the GPL-2 / GPL-3 licence.} from this continuous DPP in the window $\calX = [0,1]^2$.
Two such samples are shown as hexagons in the first column of Figure~\ref{fig:Intensity1Sample}, with respective intensity $\rho = 50$ and $\rho = 100$.
For the estimation, we use a Gaussian kernel $\kRKHS(x,y) = \exp\left(-\|x-y\|_2^2/(2\sigma^2)\right)$ with $\sigma>0$.
The computation of the correlation kernel always uses $p=1000$ uniform samples.
Iteration~\eqref{eq:regPicard_with_B} is run until the precision threshold $\mathsf{tol}= 10^{-5}$ is achieved. For stability, we add $10^{-10}$ to the diagonal of the Gram matrix $\matK$.
The remaining parameter values are given in captions.
We empirically observe that the regularized Picard iteration returns a matrix $\matB_\star$ such that $\matPhi^\top \matB_\star \matPhi$ is low rank; see Figure~\ref{fig:LowRank} (left).
A lesson from Figure~\ref{fig:Intensity1Sample} is that the sample size of the DPP has to be large enough to retrieve a constant intensity $\hat{k}(x,x)$. In particular, the top row of this figure illustrates a case where $\sigma$ is too small. Also, due to the large regularization $\lambda = 0.1$ and the use of only one DPP sample, the scale of $\rho$ is clearly underestimated in this example.
On the contrary, in Figure~\ref{fig:Intensity1SampleSmallLambda}, for a smaller regularization parameter $\lambda = 0.01$, the intensity scale estimate is larger.
We also observe that a large regularization parameter tends to smooth out the local variations of the intensity, which is not surprising.
\begin{figure}
  \centering
\includegraphics[scale = 0.2
]{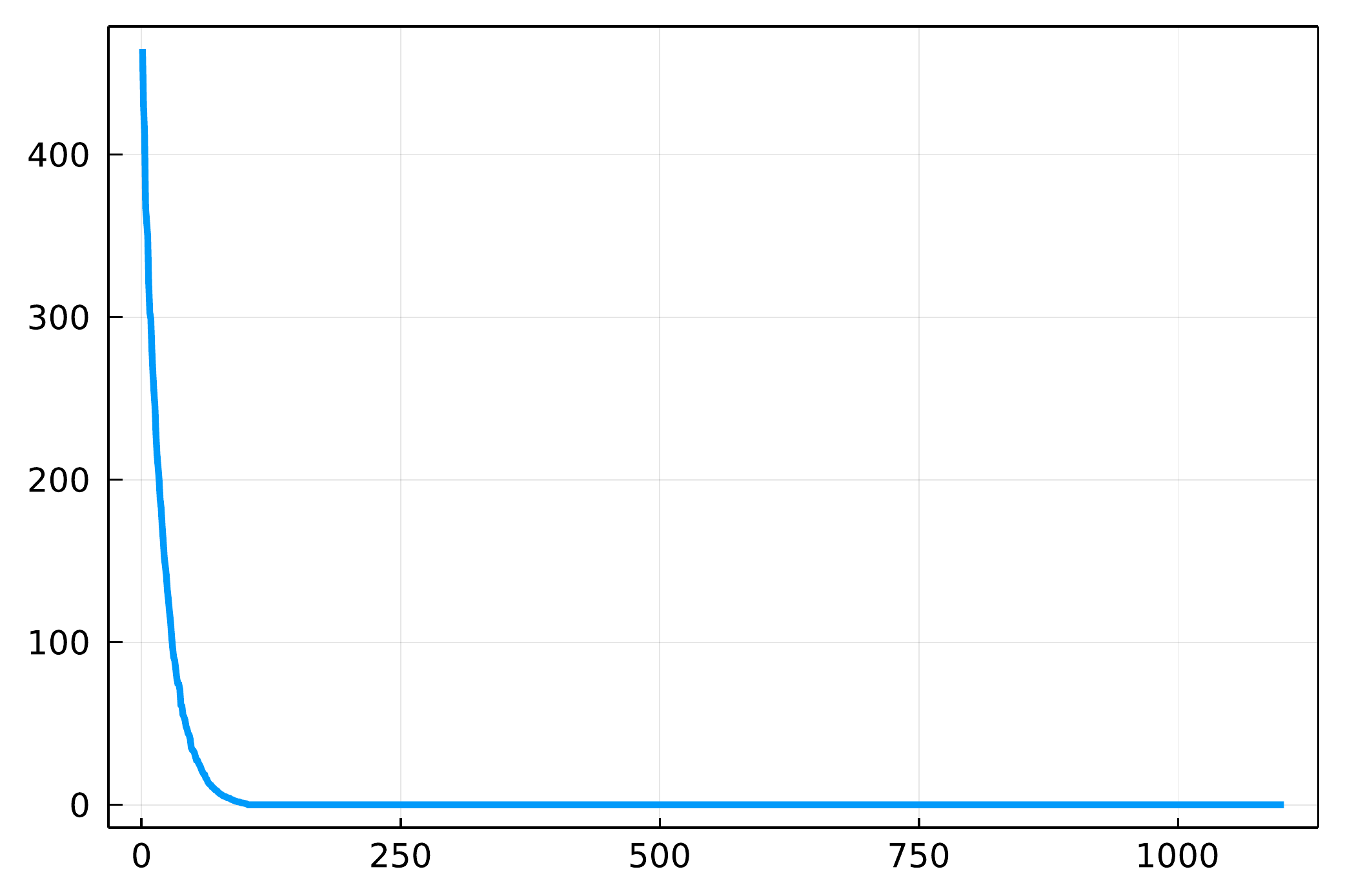}
\includegraphics[scale = 0.2
]{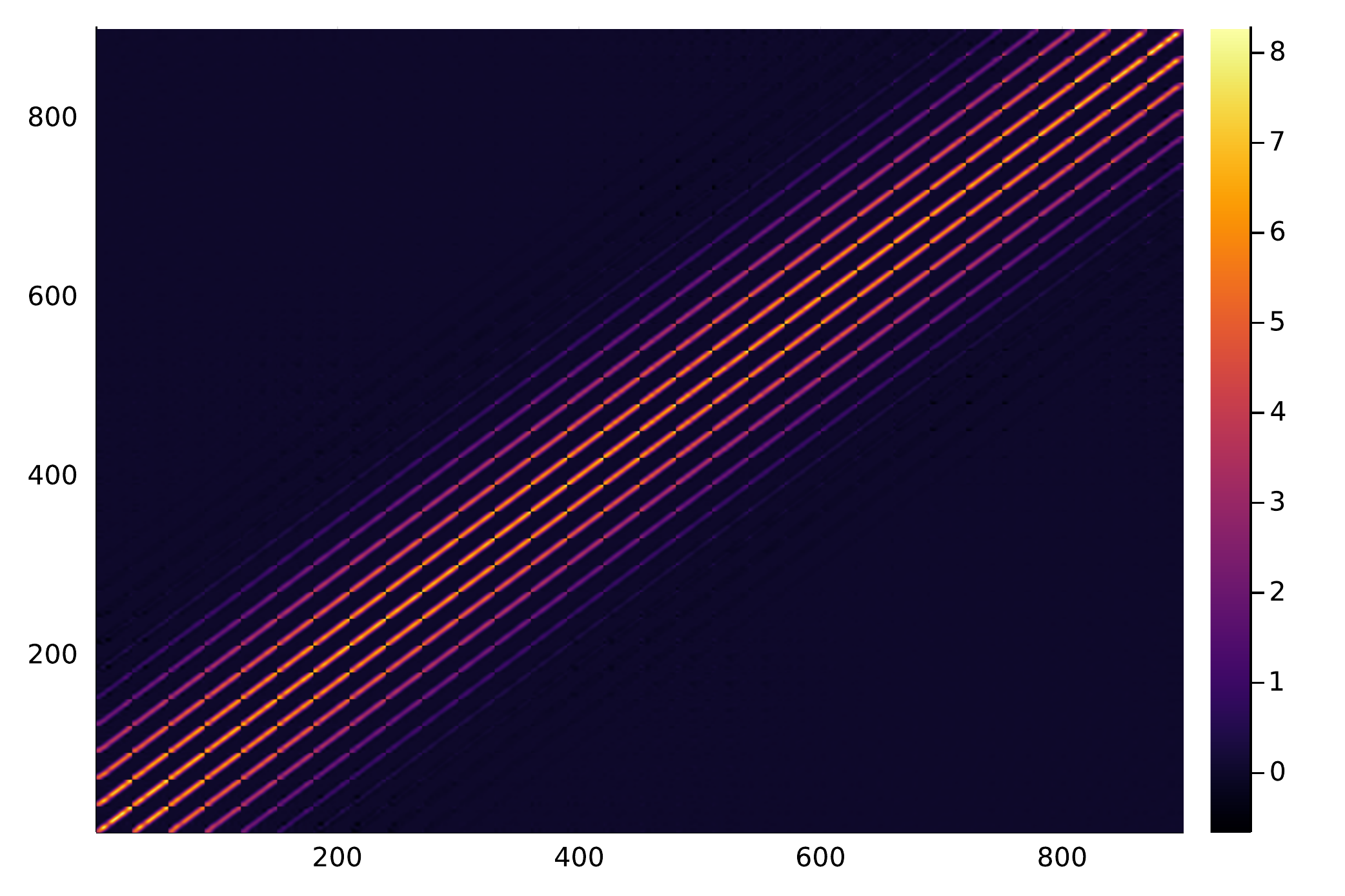}
\includegraphics[scale = 0.2
]{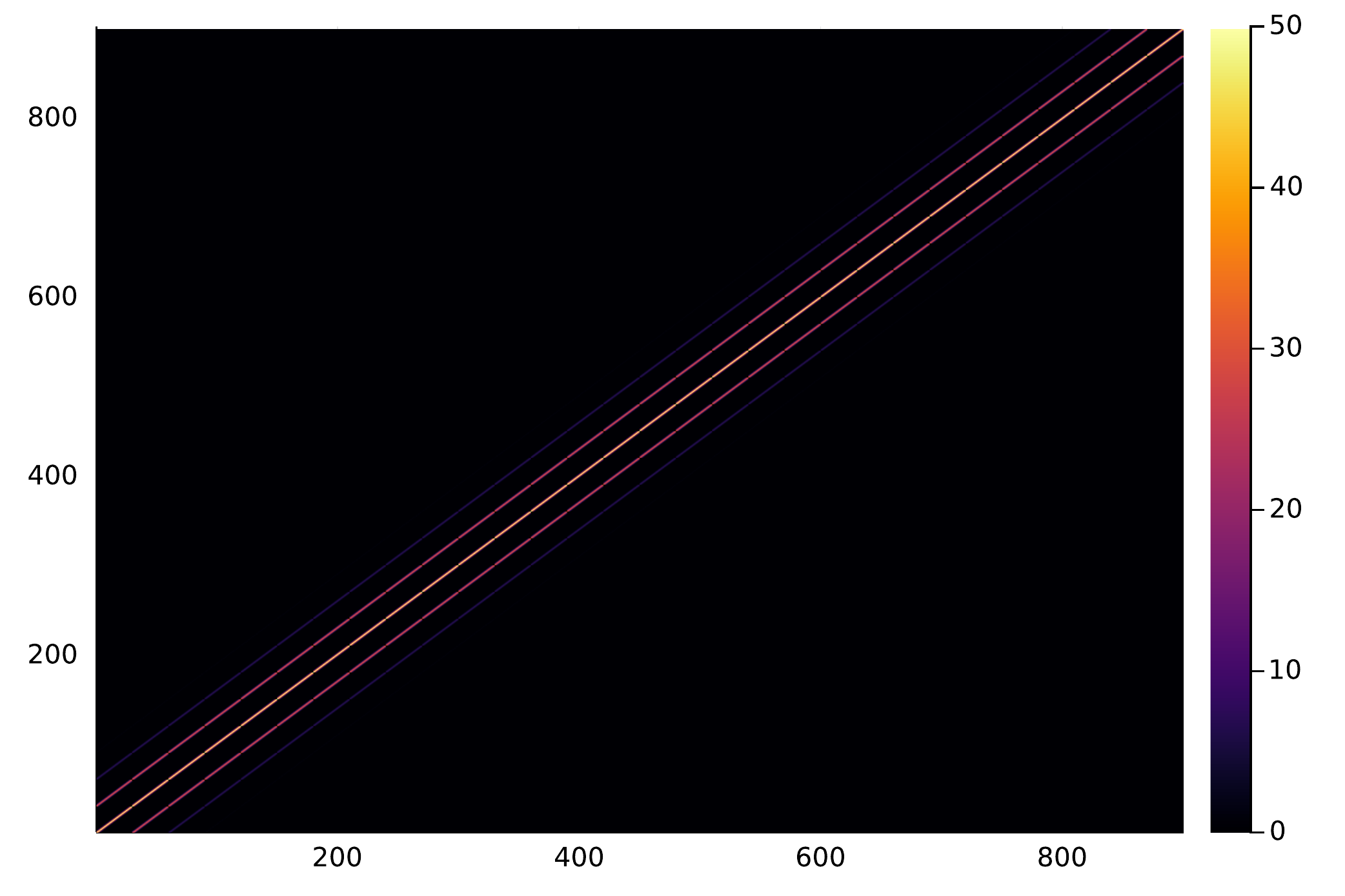}
\caption{Analysis of the solution corresponding to the example of Figure~\ref{fig:Intensity1Sample} with $\rho = 100$. Left: eigenvalues of $\matPhi^\top \matB\matPhi$. Middle: Gram matrix of $\hat{\ksf}(x,y)$ on a regular $30\times 30$ grid within $[0,1]^2$. Right: Gram matrix of $\ksf(x,y)$ on the same grid. \label{fig:LowRank}}
\end{figure}
A comparison between a Gram matrix of $\hat{\ksf}(x,y)$ and $\ksf(x,y)$ is given in Figure~\ref{fig:LowRank} corresponding to the example of Figure~\ref{fig:Intensity1Sample}. The short-range diagonal structure is recovered, while some long-range structures are smoothed out.
More illustrative simulations are given in Section~\ref{supp:OtherSimulations}, with a study of the influence of the hyperparameters, including the use of $s>1$ DPP samples. In particular, the estimation of the intensity is improved if several DPP samples are used with a smaller value of $\lambda$.
\begin{figure}
  \centering
  \includegraphics[scale = 0.3]{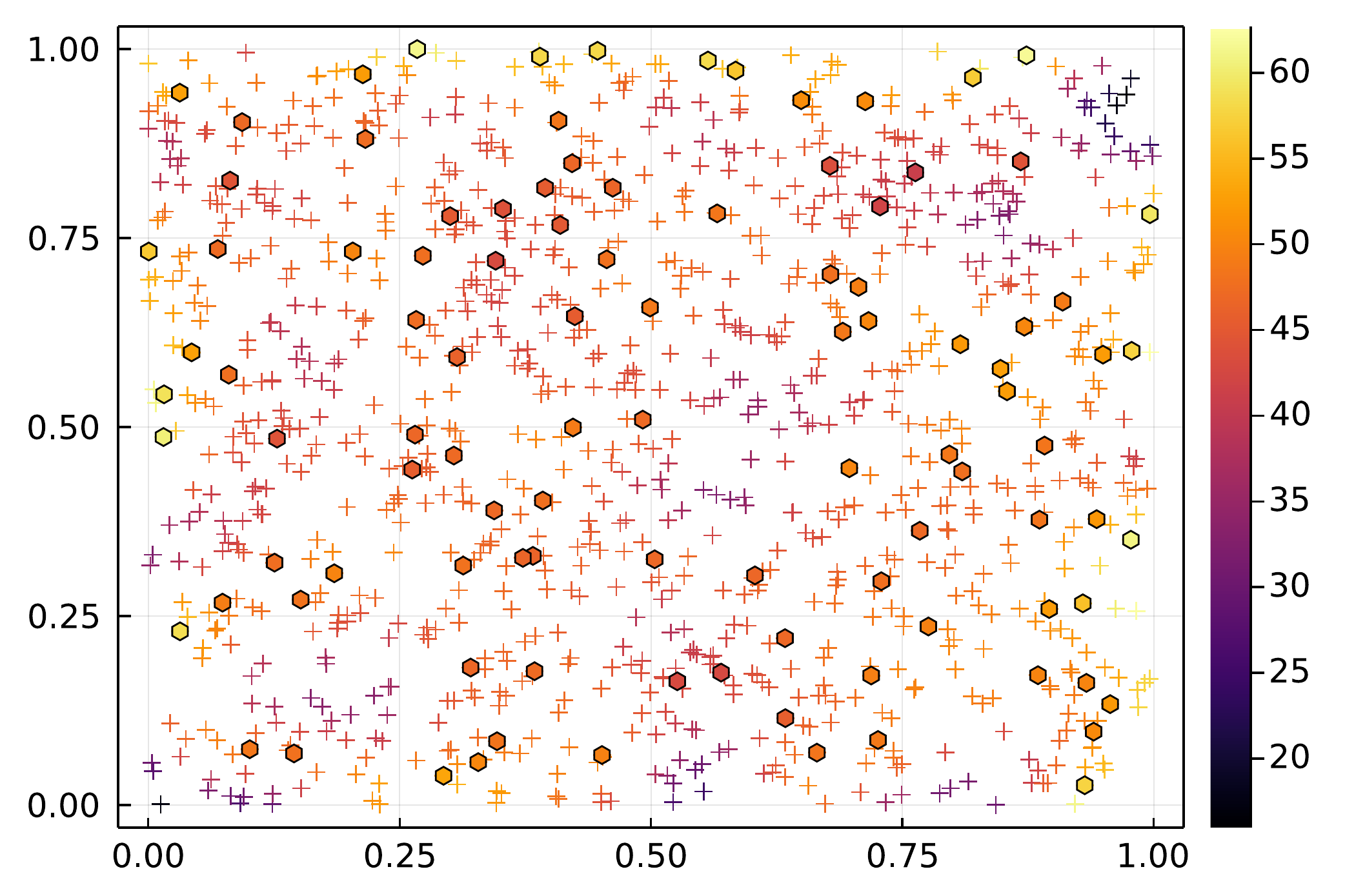}
  \includegraphics[scale = 0.3]{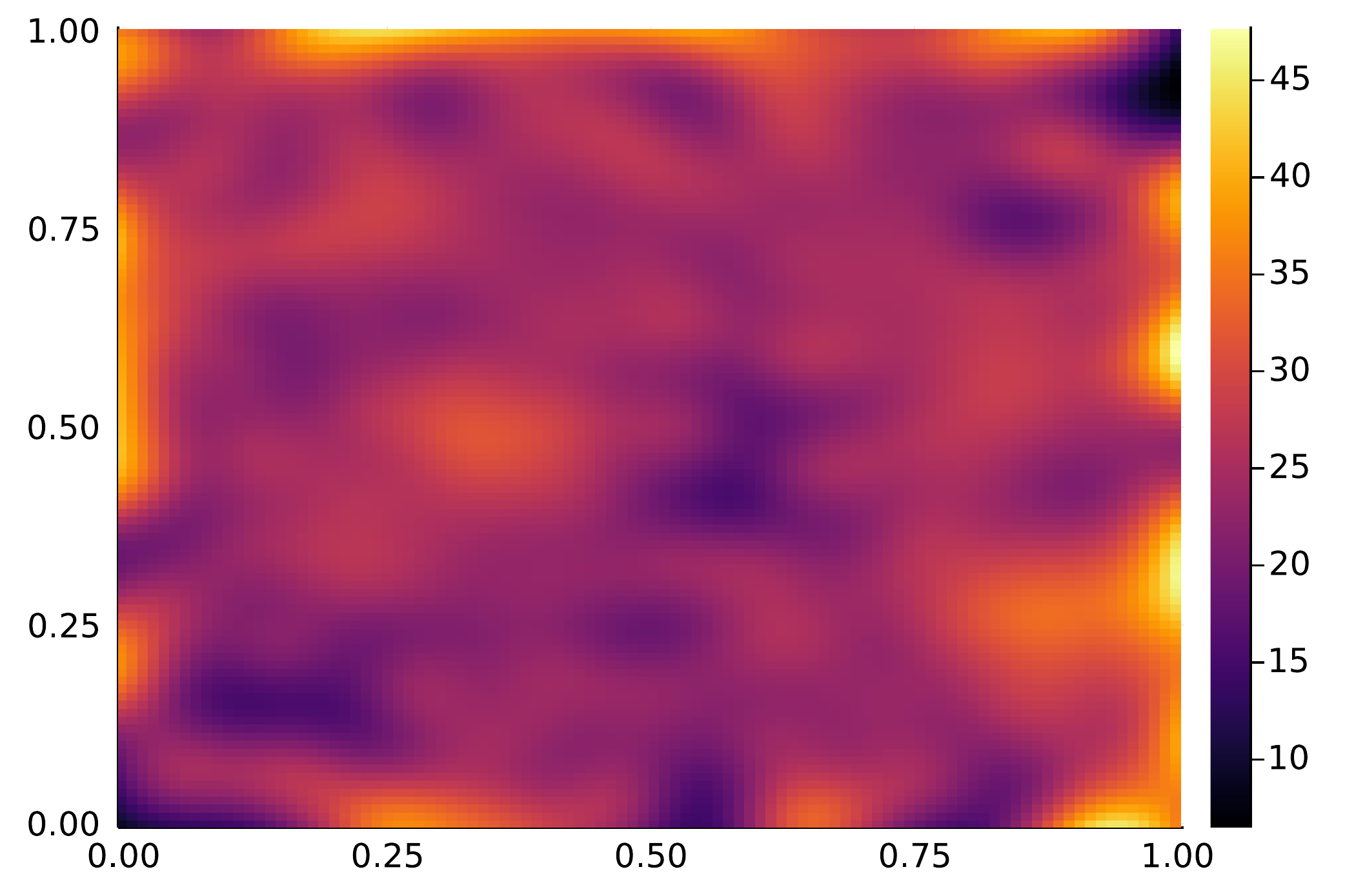}
  \caption{Intensity estimation with $\sigma = 0.1$ and $\lambda= 0.01$ from $1$ DPP sample with  $\rho = 100$. On the LHS, a DPP sample (hexagons) and $n = 1000$ uniform samples (crosses), the color is the diagonal of $\matPhi^\top \matB \matPhi$ (in-sample likelihood kernel). On the RHS, out-of-sample estimated intensity $\hat{\ksf}(x,x)$ of the learned process on a $100\times 100$ grid.
  \label{fig:Intensity1SampleSmallLambda}}
\end{figure}

\section{Discussion}
We leveraged recent progress on kernel methods to propose a nonparametric approach to learning continuous DPPs. We see three major limitations of our procedure.
First, our final objective function is nonconvex, and our algorithm is only guaranteed to increase its objective function. Experimental evidence suggests that our approach recovers the synthetic kernel, but more work is needed to study the maximizers of the likelihood, in the spirit of \citet{BMRU17Sub} for finite DPPs, and the properties of our fixed point algorithm.
Second, the estimated integral kernel does not have any explicit structure, other than being implicitly forced to be low-rank because of the trace penalty.
Adding structural assumptions might be desirable, either for modelling or learning purposes. For modelling, it is not uncommon to assume that the underlying continuous DPP is stationary, for example, which implies that the correlation kernel $\ksf(x,y)$ depends only on $x-y$.
For learning, structural assumptions on the kernel may reduce the computational cost, or reduce the number of maximizers of the likelihood.
The third limitation of our pipeline is that, like most nonparametric methods, it still requires to tune a handful of hyperparameters, and, in our experience, the final performance varies significantly with the lengthscale of the RKHS kernel or the coefficient of the trace penalty.
An automatic tuning procedure with guarantees would make the pipeline turn-key.

Maybe unexpectedly, future work could also deal with transferring our pipeline to the finite DPP setting.
Indeed, we saw in Section~\ref{sec:Implementation} that in some asymptotic regime, our regularized MLE objective is close to a regularized version of the MLE objective for a finite DPP.
Investigating this maximum a posteriori inference problem may shed some new light on nonparametric inference for finite DPPs.
Intuitively, regularization should improve learning and prediction when data is scarce.

\section*{Acknowledgements}
We thank Arnaud Poinas and Guillaume Gautier for useful discussions on the manuscript. We acknowledge support from ERC grant \textsc{Blackjack} (ERC-2019-STG-851866) and ANR AI chair \textsc{Baccarat} (ANR-20-CHIA-0002).

\appendix
\begin{center}
  \large\textbf{Supplementary Material}
\end{center}

\paragraph{Roadmap.} In Section~\ref{supp:sec:TechnicalLemmata}, we present useful technical results. Next, in a specific case, we show that the discrete problem~\eqref{eq:MLE_Problem_Penalized} admits a closed-form solution that we discuss in Section~\ref{supp:sec:ExplicitSolution}. Noticeably, this special case allows the understanding of the behaviour of the estimated DPP kernel in both the small and large regularization ($\lambda$) limits. In Section~\ref{supp:sec:Proofs}, all the deferred proofs are given.
Finally, Section~\ref{supp:OtherSimulations} provides a finer analysis of the empirical results of Section~\ref{sec:simulations} as well as a description of the convergence of the regularized Picard algorithm to the closed-form solution described in Section~\ref{supp:sec:ExplicitSolution}.

\section{Useful technical results \label{supp:sec:TechnicalLemmata}}
\subsection{Technical lemmata}
We preface the proofs of our main results with three lemmata.
\begin{lemma}\label{lem:eigenvalues}
Let $k$ be a strictly positive definite kernel of a RKHS $\calH$ and let $\matC$ be a $n\times n$ symmetric matrix. Assume that $\{x_i\}_{1\leq i\leq n}$ are such that the Gram matrix $\matK = [\kRKHS(x_i,x_j)]_{1\leq i,j\leq n}$  is non-singular.
Then, the non-zero eigenvalues of $\sum_{i,j=1}^n \matC_{ij} \phi(x_i)\otimes \overline{\phi(x_j)}$ correspond to the non-zero eigenvalues of  $\matK\matC$.
\end{lemma}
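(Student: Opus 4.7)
The plan is to recognize the operator $A := \sum_{i,j} \matC_{ij}\,\phi(x_i)\otimes\overline{\phi(x_j)}$ as a sandwich $\Phi \matC \Phi^\ast$, where $\Phi:\mathbb{R}^n \to \calH$ is the synthesis operator $\Phi \matv = \sum_i \matv_i \phi(x_i)$ and its adjoint $\Phi^\ast:\calH \to \mathbb{R}^n$ satisfies $(\Phi^\ast h)_i = \langle \phi(x_i), h\rangle = h(x_i)$. A direct computation gives $\Phi^\ast \Phi = \matK$, the Gram matrix, and a one-line expansion of the tensor-product definition verifies $A = \Phi\matC\Phi^\ast$ as endomorphisms of $\calH$.

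Once this factorization is in place, the lemma reduces to the well-known fact that for bounded operators $P : U \to W$ and $Q : W \to U$ (here with one side finite-dimensional), the operators $PQ$ and $QP$ share the same nonzero eigenvalues with the same algebraic multiplicities. I would apply this to $P := \Phi \matC : \mathbb{R}^n \to \calH$ and $Q := \Phi^\ast : \calH \to \mathbb{R}^n$, so that $PQ = \Phi\matC\Phi^\ast = A$ while $QP = \Phi^\ast \Phi \matC = \matK \matC$. This immediately yields that the nonzero spectrum of $A$ coincides with the nonzero spectrum of $\matK\matC$.

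For concreteness I would also give the elementary direct argument, since $A$ lives on the infinite-dimensional $\calH$: if $A h = \alpha h$ with $\alpha \neq 0$, then $h = \alpha^{-1} \Phi(\matC \Phi^\ast h) \in \range(\Phi)$, so write $h = \Phi \matv$; then $\Phi \matC \matK \matv = \alpha \Phi \matv$, and injectivity of $\Phi$ (guaranteed by the invertibility of $\matK = \Phi^\ast\Phi$, itself a consequence of strict positive definiteness of $\kRKHS$) forces $\matC \matK \matv = \alpha \matv$, i.e.\ $\alpha$ is an eigenvalue of $\matC\matK$, which has the same nonzero spectrum as $\matK\matC$. Conversely, any eigenvector $\matv$ of $\matK\matC$ with eigenvalue $\alpha \neq 0$ gives $h := \Phi \matC \matK^{-1} \matK \matv \in \calH$ (equivalently $h = \Phi\matv'$ with $\matv'$ built from $\matv$) with $A h = \alpha h$.

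The main obstacle is purely bookkeeping: one must check that multiplicities (and not just the sets of eigenvalues) are preserved, and that the finite-rank nature of $A$ plus the invertibility of $\matK$ prevent any pathology coming from the infinite-dimensional ambient space. Both are handled by the injectivity of $\Phi$ and the standard $PQ\leftrightarrow QP$ spectral correspondence, so no delicate estimates are needed.
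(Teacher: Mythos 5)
Your proof is correct and follows essentially the same route as the paper: the paper factors the operator through the sampling operators as $S_n^*\matC S_n$ (your $\Phi$ is just $\sqrt{n}\,S_n^*$) and matches its nonzero spectrum with that of $S_nS_n^*\matC = \tfrac1n\matK\matC$ via the same swap-the-order argument. The only cosmetic difference is that the paper completes the correspondence through a similarity with $(S_nS_n^*)^{1/2}\matC(S_nS_n^*)^{1/2}$, whereas you invoke the general $PQ\leftrightarrow QP$ spectral fact and back it up with a direct two-way eigenvector computation.
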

\begin{proof}
By definition of the sampling operator (see Section~\ref{sec:def}), it holds that
$
  S_n^* \matC S_n = \frac{1}{n}\sum_{i,j=1}^n \matC_{ij} \phi(x_i)\otimes \overline{\phi(x_j)}
$
and we have
$
S_n S_n^*\matC= \frac{1}{n}\matK\matC
$.
Thus, we need to show that the non-zero eigenvalues $S_n^* \matC S_n$ correspond to the non-zero eigenvalues of $S_n S_n^*\matC$.

Let $g_\lambda\in \calH$ be an eigenvector of $S_n^* \matC S_n$ with eigenvalue $\lambda\neq 0$.
First,  we show that $S_n g_\lambda$ is an eigenvector of $S_n S_n^*\matC$ with eigenvalue $\lambda$.
We have $S_n^* \matC S_n g_\lambda = \lambda g_\lambda$. By acting on both sides of the latter equation with $S_n$, we find
$S_n (S_n^* \matC S_n) g_\lambda = \lambda S_n g_\lambda$. This is equivalent to $S_n S_n^* \matC (S_n g_\lambda) = \lambda (S_n g_\lambda)$.
Second, since $S_n S_n^*\succ 0$, remark that $S_n S_n^*\matC$ is related by a similarity to $(S_n S_n^*)^{1/2}\matC(S_n S_n^*)^{1/2}$, which is diagonalizable. Since $S_n^* \matC S_n$ is at most of rank $n$, the non-zero eigenvalues of $S_n^* \matC S_n$ match the non-zero eigenvalues of $(S_n S_n^*)^{1/2}\matC(S_n S_n^*)^{1/2}$, which in turn are the same as the non-zero eigenvalues of $S_n S_n^*\matC$.
\end{proof}
\begin{lemma}\label{lem:concavity_h1} Let $\matSigma\in \Sb(\R^m)$ and let $\calI$ be a subset of $\{1,\dots,m\}$.
Then, the function
\begin{equation}
  \matSigma \mapsto   \log\det(\matSigma) + \log\det(\matI_m+\matSigma^{-1}/|\calI|)_{\calI\calI}
\label{e:function}
\end{equation}
is strictly concave on $\{\matSigma\succ 0\}$.
\end{lemma}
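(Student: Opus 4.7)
My plan is to simplify the expression into a single log-determinant of an affine function of $\matSigma$, after which strict concavity is immediate.

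Write $n=|\calI|$ and let $\matU_\calI$ be the $m\times n$ matrix whose columns are the standard basis vectors of $\R^m$ indexed by $\calI$, so that $\matU_\calI^\top \matU_\calI = \matI_n$ and $M_{\calI\calI} = \matU_\calI^\top M \matU_\calI$ for every $m\times m$ matrix $M$. Applied to $M = \matI_m + \matSigma^{-1}/n$, this gives
\[
(\matI_m + \matSigma^{-1}/n)_{\calI\calI} = \matI_n + \matU_\calI^\top \matSigma^{-1} \matU_\calI / n.
\]
I would then invoke Sylvester's determinant identity $\det(\matI_n + XY)=\det(\matI_m+YX)$ with $X=\matU_\calI^\top\matSigma^{-1}/n$ and $Y=\matU_\calI$, yielding
\[
\log\det\bigl(\matI_n + \matU_\calI^\top \matSigma^{-1} \matU_\calI / n\bigr) = \log\det\bigl(\matI_m + \matU_\calI\matU_\calI^\top \matSigma^{-1}/n\bigr).
\]
Factoring $\matSigma^{-1}$ on the right via $\matI_m + \matU_\calI\matU_\calI^\top\matSigma^{-1}/n = (\matSigma + \matU_\calI\matU_\calI^\top/n)\,\matSigma^{-1}$ rewrites the second term of~\eqref{e:function} as $\log\det(\matSigma+\matU_\calI\matU_\calI^\top/n)-\log\det(\matSigma)$, and after adding the leading $\log\det(\matSigma)$ the two $\log\det(\matSigma)$ contributions cancel, leaving
\[
h(\matSigma) \;=\; \log\det\!\bigl(\matSigma + \matU_\calI\matU_\calI^\top/n\bigr).
\]

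From here strict concavity is a standard fact: $M\mapsto \log\det M$ is strictly concave on the cone of positive definite matrices (for instance, a direct Hessian computation gives $-\Tr((M^{-1}H)^2)<0$ for every nonzero symmetric $H$), and $\matSigma\mapsto \matSigma+\matU_\calI\matU_\calI^\top/n$ is an affine bijection that sends $\{\matSigma\succ 0\}$ into $\{M\succ 0\}$. Strict concavity is preserved under composition with an affine bijection, which concludes the argument.

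The only real obstacle is spotting the telescoping cancellation of $\log\det(\matSigma)$ after the Sylvester/factorization step; once the expression collapses to a single log-determinant of an affine function of $\matSigma$, nothing more is required beyond the classical strict concavity of $\log\det$.
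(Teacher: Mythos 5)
Your proof is correct and follows essentially the same route as the paper's: both use Sylvester's identity to collapse the objective to $\log\det(\matSigma + \matU_\calI\matU_\calI^\top/|\calI|)$ (the paper first rescales $\matSigma$ to absorb the $1/|\calI|$ factor, which only shifts the function by a constant) and then conclude by strict concavity of $\log\det$ on the positive definite cone. The only cosmetic difference is that the paper verifies the Hessian of the collapsed expression directly, whereas you invoke the standard strict concavity of $\log\det$ composed with an injective affine map.
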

\begin{proof}
To simplify the expression, we do the change of variables $\matSigma\mapsto \matSigma/|\calI|$ and analyse $\log\det(\matSigma) + \log\det(\matI_m+\matSigma^{-1})_{\calI\calI}$ which differs from the original function by an additive constant.
Let $\matU_\calI$ be the matrix obtained by selecting the columns of the identity matrix which are indexed by $\calI$.
We rewrite the second term in \eqref{e:function} as
\[
  \log\det(\matI_m + \matSigma^{-1})_{\calI\calI} = \log\det(\matI_{|\calI|} + \matU_\calI^\top \matSigma^{-1}\matU_{\calI}) = \log\det(\matI_m+ \matSigma^{-1/2}\matU_{\calI}\matU_\calI^\top\matSigma^{-1/2}),
\]
by Sylvester's identity.
This leads to
\[
  \log\det(\matSigma) + \log\det(\matI_m+\matSigma^{-1})_{\calI\calI} = \log\det(\matSigma+ \matU_{\calI}\matU_\calI^\top).
\]
To check that $\log\det(\matSigma+ \matU_{\calI}\matU_\calI^\top)$ is strictly concave on $\{\matSigma\succ 0\}$, we verify that its Hessian is negative any direction $\matH$.
Its first order directional derivative reads $\Tr\left( (\matSigma+ \matU_{\calI}\matU_\calI^\top)^{-1}\matH\right)$.
Hence, the second order directional derivative in the direction of $\matH$ writes
\[
  -\Tr\left( (\matSigma+ \matU_{\calI}\matU_\calI^\top)^{-1} \matH (\matSigma+ \matU_{\calI}\matU_\calI^\top)^{-1} \matH\right),
\]
which is indeed a negative number.
\end{proof}
The following result is borrowed from~\citet[Lemma 2.3.]{pmlr-v37-mariet15}.
\begin{lemma}\label{lem:concavity_h2} Let $\matSigma\in \Sb(\R^m)$ and let $\matU\in \R^{m\times \ell}$ be a matrix with $\ell\leq m$ orthonormal columns. Then, the function
$
-\log\det(\matU^\top \matSigma^{-1} \matU)
$
is concave on $\matSigma\succ 0$.
\end{lemma}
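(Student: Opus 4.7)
The plan is to show that $(\matU^\top \matSigma^{-1}\matU)^{-1}$ is a Schur complement of $\matSigma$, hence jointly operator concave in $\matSigma$, and then to pass through $\log\det$ using its (scalar) concavity and operator monotonicity on $\Sb(\R^\ell)$.

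First, I would complete $\matU$ to an orthogonal matrix $\matQ = [\matU,\matU_\perp]\in \R^{m\times m}$, which is possible because $\matU^\top \matU = \matI_\ell$ and $\ell \leq m$. Since $\matQ^\top \matSigma^{-1}\matQ = (\matQ^\top \matSigma \matQ)^{-1}$, the block $\matU^\top \matSigma^{-1}\matU$ is exactly the $(1,1)$-block of the inverse of
$$\matQ^\top \matSigma \matQ = \begin{pmatrix} \matU^\top \matSigma \matU & \matU^\top \matSigma \matU_\perp \\ \matU_\perp^\top \matSigma \matU & \matU_\perp^\top \matSigma \matU_\perp \end{pmatrix}.$$
The block-inverse formula then gives
$$M(\matSigma) \triangleq (\matU^\top \matSigma^{-1}\matU)^{-1} = \matU^\top \matSigma \matU - (\matU^\top \matSigma \matU_\perp)(\matU_\perp^\top \matSigma \matU_\perp)^{-1}(\matU_\perp^\top \matSigma \matU),$$
i.e.\ the Schur complement of the lower-right block in $\matQ^\top \matSigma \matQ$.

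Next, I would establish matrix concavity of $M$ via the variational identity
$$M(\matSigma) = \min_{\mathbf{W}\in\R^{m\times\ell},\ \matU^\top \mathbf{W} = \matI_\ell} \mathbf{W}^\top \matSigma\, \mathbf{W},$$
where the minimum is in the Loewner order. It is attained at $\mathbf{W}^\star = \matSigma^{-1}\matU(\matU^\top\matSigma^{-1}\matU)^{-1}$, and the bound $\mathbf{W}^\top \matSigma \mathbf{W} \succeq M(\matSigma)$ for any feasible $\mathbf{W}$ follows from writing $\mathbf{W} = \mathbf{W}^\star + \Delta$ with $\matU^\top \Delta = 0$ and expanding, the cross-terms vanishing because $\matSigma \mathbf{W}^\star = \matU(\matU^\top\matSigma^{-1}\matU)^{-1}$. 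As a pointwise Loewner-infimum of linear functions of $\matSigma$, the map $M$ is therefore jointly operator-concave: for any $t\in(0,1)$ and $\matSigma_0,\matSigma_1\succ 0$,
$$M\bigl((1-t)\matSigma_0 + t\matSigma_1\bigr) \succeq (1-t) M(\matSigma_0) + t M(\matSigma_1).$$

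Finally, operator monotonicity of $\log\det$ followed by its scalar concavity on $\Sb(\R^\ell)$ yields
$$\log\det M(\matSigma_t) \geq \log\det\bigl((1-t)M(\matSigma_0) + t M(\matSigma_1)\bigr) \geq (1-t)\log\det M(\matSigma_0) + t\log\det M(\matSigma_1),$$
which is exactly the concavity of $-\log\det(\matU^\top \matSigma^{-1}\matU) = \log\det M(\matSigma)$. The only delicate point is the matrix concavity of the Schur complement; I would either cite a standard reference (e.g.\ Ando, or Boyd--Vandenberghe) or include the short variational derivation sketched above. Everything else is routine linear-algebra manipulation.
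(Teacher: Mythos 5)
Your argument is correct. Where it differs from the paper is that the paper does not prove this lemma at all: it simply defers to Lemma~2.3 of Mariet and Sra (the cited \texttt{pmlr-v37-mariet15}), which asserts the convexity of $\matSigma \mapsto \log\det(\matU^\top\matSigma^{-1}\matU)$ on $\{\matSigma \succ 0\}$ for $\matU$ with orthonormal columns. You instead supply a complete, self-contained derivation: identifying $(\matU^\top\matSigma^{-1}\matU)^{-1}$ as the Schur complement of the lower-right block of $\matQ^\top\matSigma\matQ$, establishing its joint operator concavity via the variational identity $M(\matSigma)=\min\{\mathbf{W}^\top\matSigma\mathbf{W} : \matU^\top\mathbf{W}=\matI_\ell\}$ (the cross terms do vanish as you claim, since $\matSigma\mathbf{W}^\star=\matU(\matU^\top\matSigma^{-1}\matU)^{-1}$ and $\matU^\top\Delta=0$), and then composing with $\log\det$, which is both Loewner-monotone and concave on the positive definite cone. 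All three steps check out, including the key point that the feasible set for $\mathbf{W}$ does not depend on $\matSigma$, so the minimizer at $\matSigma_t$ is feasible at $\matSigma_0$ and $\matSigma_1$, which is exactly what makes the Loewner-infimum-of-linear-maps argument go through. Your route buys self-containedness and makes transparent \emph{why} the statement holds (it is the operator concavity of the matrix map $\matSigma\mapsto(\matU^\top\matSigma^{-1}\matU)^{-1}$, a relative of the concavity of the harmonic mean); the paper's route buys one line of text at the cost of an external dependency. The only polish I would suggest is to state explicitly that $\matU^\top\matSigma^{-1}\matU\succ 0$ (so all the $\log\det$'s are well defined) and to note that ``minimum in the Loewner order'' means the attained value is $\preceq$ every feasible value, which your expansion indeed establishes.
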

\begin{proof}
The function $-\log\det(\matU^\top \matSigma^{-1} \matU)$ is concave on $\matSigma\succ 0$ since $\log\det(\matU^\top \matSigma^{-1} \matU)$ is convex on $\matSigma\succ 0$ for any $\matU$ such that $\matU^\top \matU = \matI$ as stated in~\citet[Lemma 2.3.]{pmlr-v37-mariet15}.
\end{proof}
\subsection{Use of the representer theorem}
We here clarify the definition of the representer theorem used in this paper.
\subsubsection{Extended representer theorem}

In Section~\ref{sec:MainResults}, we used a slight extension of the representer theorem of~\citet{Marteau-Ferey} which we clarify here.
Let us first define some notations. Let $\calH$ be a RKHS with feature map $\phi(\cdot)$ and $\{z_1, \dots, z_m\}$ be a data set such as defined in Section~\ref{sec:def}. Define 
\[
    h_A(z) = \langle \phi(z), A \phi(z)\rangle \text{ and } h_A(z,z') = \langle \phi(z), A \phi(z')\rangle.
\]
In this paper, we consider the problem
\begin{equation}
    \min_{A\in \Sb(\calH)} L\left(h_A(z_i,z_j)\right)_{1\leq i,j\leq m} + \Tr(A),\label{eq:extended_rep_problem}
\end{equation}
where $L$ is a loss function (specified below). In contrast, the first term of the problem considered in~\citet{Marteau-Ferey} is of the following form: $L(h_A(z_i))_{1\leq i\leq m}$. In other words, the latter loss function involves only diagonal elements $\langle \phi(z_i), A \phi(z_i)\rangle$ for $1\leq i\leq m$ while~\eqref{eq:extended_rep_problem} also involves off-diagonal elements.
Now, denote by $\Pi_m$ the projector on $\Span\{\phi(z_i), i=1,\dots, m\}$, and define
\[
    \mathcal{S}_{m,+}(\calH) = \{\Pi_m A \Pi_m : A \in \Sb(\calH)\}.
\]

Then, we have the following proposition.
\begin{proposition}[Extension of Proposition~7 in \cite{Marteau-Ferey}]
    Let $L$ be a lower semi-continuous function such that $L(h_A(z_i,z_j))_{1\leq i,j\leq m} + \Tr(A)$ is bounded below. Then~\eqref{eq:extended_rep_problem} has a solution $A_\star$ which is in $\mathcal{S}_{m,+}(\calH)$.
\end{proposition}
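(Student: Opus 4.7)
The plan is to mimic the standard representer-theorem argument: use the orthogonal projector $\Pi_m$ onto $\Span\{\phi(z_1),\dots,\phi(z_m)\}$ to show that (i) the data term is invariant under $A \mapsto \Pi_m A \Pi_m$, (ii) the trace penalty can only decrease under this replacement, and (iii) on the reduced finite-dimensional cone $\mathcal{S}_{m,+}(\calH)$ the minimum is actually attained thanks to lower semi-continuity and coercivity of the trace.

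The first step is the invariance observation. Since $\phi(z_i)\in\range(\Pi_m)$ for every $i$, we have $\Pi_m\phi(z_i)=\phi(z_i)$, so
\[
h_A(z_i,z_j)=\langle\phi(z_i),A\phi(z_j)\rangle=\langle\Pi_m\phi(z_i),A\,\Pi_m\phi(z_j)\rangle=h_{\Pi_m A\Pi_m}(z_i,z_j)
\]
for all $1\le i,j\le m$. Hence $L\bigl(h_A(z_i,z_j)\bigr)_{i,j}=L\bigl(h_{\Pi_m A\Pi_m}(z_i,z_j)\bigr)_{i,j}$. This already takes care of the extension from the diagonal case treated in \cite{Marteau-Ferey} to the bilinear case needed here; the rest of the argument never looks at $L$ again.

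The second step compares the traces. Writing $\Pi_m^\perp=\I-\Pi_m$ and using the orthogonal decomposition $\I=\Pi_m+\Pi_m^\perp$, one gets
\[
\Tr(A)=\Tr(\Pi_m A\Pi_m)+\Tr(\Pi_m^\perp A\Pi_m^\perp),
\]
after cancelling the two cross terms by cyclicity of the trace together with $\Pi_m\Pi_m^\perp=0$. Since $A\succeq 0$, the sandwich $\Pi_m^\perp A\Pi_m^\perp$ is PSD and trace-class, hence $\Tr(\Pi_m^\perp A\Pi_m^\perp)\ge 0$ and therefore $\Tr(\Pi_m A\Pi_m)\le \Tr(A)$. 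Note also that $\Pi_m A\Pi_m\in\mathcal{S}_{m,+}(\calH)\subset\Sb(\calH)$, so this projection is admissible. Combining with step one, for any $A\in\Sb(\calH)$ the replacement $A\mapsto \Pi_m A\Pi_m$ preserves the data term and cannot increase $\Tr(A)$. Consequently, the infimum of \eqref{eq:extended_rep_problem} over $\Sb(\calH)$ equals the infimum over $\mathcal{S}_{m,+}(\calH)$, and any minimizer of the reduced problem is also a minimizer of the original one.

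It remains to show the reduced problem attains its infimum. Identifying $B\in \mathcal{S}_{m,+}(\calH)$ with its matrix of coefficients in the basis obtained from the Cholesky factor $\matR$ (as in Lemma~\ref{lemma:Bbar}), the reduced problem becomes minimization over a finite-dimensional closed PSD cone of a function which is the sum of $L$ composed with a continuous linear map (hence lower semi-continuous) and a continuous, coercive trace term. By assumption the objective is bounded below, so a minimizing sequence has bounded trace and therefore lives in a compact subset of the cone; lower semi-continuity then yields a minimizer $A_\star\in\mathcal{S}_{m,+}(\calH)$. The main thing to check carefully, and the only real departure from the proof in \cite{Marteau-Ferey}, is the invariance displayed in the first step; once this is in place the remainder of the argument is essentially that of the original proposition.
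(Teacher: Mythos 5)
Your argument is essentially the paper's own proof (which is itself only a sketch): the same invariance identity $h_A(z_i,z_j)=h_{\Pi_m A\Pi_m}(z_i,z_j)$ coming from $\Pi_m\phi(z_i)=\phi(z_i)$, the same trace inequality $\Tr(\Pi_m A\Pi_m)\le\Tr(A)$, and then a deferral of the attainment step to the finite-dimensional argument of the original representer theorem. The one place you go beyond the paper's sketch is slightly loose --- boundedness below of $L+\Tr$ does not by itself force a minimizing sequence to have bounded trace (that coercivity is what the stronger hypothesis ``$L$ bounded below'' in the original Proposition~7 delivers) --- but the paper makes exactly the same gloss when it asserts that the weakened hypothesis ``does not alter the argument,'' so your write-up is at the same level of rigor as the source.
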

\begin{proof}[Proof sketch] The key step is the following identity
    \[
        h_A(z_i,z_j) = \langle \phi(z_i), A \phi(z_j)\rangle = \langle \phi(z_i), \Pi_m A \Pi_m \phi(z_j)\rangle = h_{\Pi_m A \Pi_m}(z_i,z_j), \ (1\leq i,j\leq m)
    \]
    which is a direct consequence of the definition of $\Pi_m$. Also, we have $\Tr(\Pi_m A \Pi_m)\leq \Tr(A)$. The remainder of the proof follows exactly the same lines as in~\cite{Marteau-Ferey}. Notice that, compared with~\citet[Proposition~7]{Marteau-Ferey}, we do not require the loss $L$ to be lower bounded but rather ask the full objective to be lower bounded, which is a weaker assumption but does not alter the argument.
\end{proof}
\subsubsection{Applying the extended representer theorem \label{sec:applying_representer}}
Now, let us prove that the objective of~\eqref{eq:MLE_Problem_Penalized}, given by 
$  
     f_n(\opA) + \lambda \Tr(\opA),
$
is lower bounded.
We recall that
\[
    f_n(\opA) =  -\frac{1}{s}\sum_{\ell=1}^{s}\log\det\left[ \langle \phi(x_i), A \phi(x_j)\rangle \right]_{i,j\in \calC_\ell}  +\log\det (\matI_n + \opS_n \opA \opS_n^*).
  \]
For clarity, we recall that $\calC \triangleq \cup_{\ell = 1 }^{s} \calC_\ell$ and  $\calI = \{x'_1, \dots, x'_n\}$. Then, write the set of points $ \mathcal{Z}\triangleq\calC \cup \calI$ as $\{z_1,\dots, z_m\}$ with $m= |\calC| + n$. Recall that we placed ourselves on an event happening with probability one where the sets $\calC_1, \dots, \calC_s, \calI$ are disjoint. Now, we denote by $\Pi_m$ the orthogonal projector on $\Span\{\phi(z_i): 1\leq i\leq m\}$, which writes
\begin{equation}
    \Pi_m = \sum_{i,j=1}^{m} (K^{-1})_{ij} \phi(z_i)\otimes \overline{\phi(z_j)} .\label{eq:projector_m}
\end{equation}

First, we have $\log\det (\matI_n + \opS_n \opA \opS_n^*)\geq 0$. Second, we use
\[
    \Tr(A) = \Tr\left(\Pi_m A\Pi_m\right)+ \Tr\left(\Pi_m^\perp A\Pi_m^\perp\right)\geq \Tr\left(\Pi_m A\Pi_m\right) = \Tr\left(\Pi_m A\right).
\]
Thus, a lower bound on the objective~\eqref{eq:MLE_Problem_Penalized} is
\begin{align*}
    f_n(\opA) + \lambda \Tr(\opA)\geq -\frac{1}{s}\sum_{\ell=1}^{s}\log\det\left[ \langle \phi(x_i), A \phi(x_j)\rangle \right]_{i,j\in \calC_\ell} + \lambda \Tr\left(\Pi_m A\right).
\end{align*}
Now we define the matrix $M$ with elements $M_{ij} = \langle \phi(z_i), A \phi(z_j)\rangle$ for $1\leq i,j\leq m$ and notice that \[\Tr\left(\Pi_m A\right) = \Tr(M K^{-1})\geq \Tr(M)/\lambda_{\max}(K).\]
Remark that the matrix defined by $M^{(\ell)} = \left[ \langle \phi(x_i), A \phi(x_j)\rangle \right]_{i,j\in \calC_\ell}$  associated with the $\ell$-th DPP sample for $1\leq \ell\leq s$ is a \emph{principal} submatrix of the $m\times m$ matrix $M = \left[ \langle \phi(z_i), A \phi(z_j)\rangle \right]_{1\leq i,j\leq m}$. Since the sets $\calC_\ell$ are disjoint, we have $\sum_{\ell=1}^{s}\Tr(M^{(\ell)}) \leq \Tr(M)$.
Denoting $\lambda' = \frac{\lambda}{\lambda_{\max}(K)}$, by using the last inequality, we find
\begin{align*}
    f_n(\opA) + \lambda \Tr(\opA) & \geq -\frac{1}{s}\sum_{\ell=1}^{s}\log\det M^{(\ell)} +  \lambda' \sum_{\ell=1}^{s}\Tr(M^{(\ell)})\\
    &= \frac{1}{s}\sum_{\ell=1}^{s}\left( - \log\det M^{(\ell)} + s \lambda'  \Tr(M^{(\ell)}) \right).
\end{align*}
Finally, we use the following proposition to show that each term in the above sum is bounded from below.
\begin{proposition}\label{prop:h_bounded} Let $a>0$. The function  $ h(\sigma) = -\log(\sigma) + a \sigma$ satisfies $h(\sigma) \geq h(1/a)= 1 + \log(a)$ for all $\sigma>0$.
\end{proposition}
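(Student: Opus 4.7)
The plan is a standard one-variable calculus argument. I would first observe that $h$ is differentiable on $(0,\infty)$ with $h'(\sigma) = -1/\sigma + a$, and that $h'(\sigma) = 0$ if and only if $\sigma = 1/a$. This gives a unique critical point on the positive real line.

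Next, I would establish strict convexity of $h$ on $(0,\infty)$, either by computing $h''(\sigma) = 1/\sigma^2 > 0$, or alternatively by noting that $-\log$ is strictly convex and $a\sigma$ is affine, so their sum is strictly convex. Strict convexity together with the uniqueness of the critical point implies that $\sigma = 1/a$ is the unique global minimizer of $h$ on $(0,\infty)$.

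Finally, I would evaluate $h$ at the minimizer to close the bound:
\[
h(1/a) = -\log(1/a) + a \cdot (1/a) = \log(a) + 1,
\]
which gives $h(\sigma) \geq 1 + \log(a)$ for all $\sigma > 0$, as claimed. There is no real obstacle here; the statement is purely a warm-up computation used in Section~\ref{sec:applying_representer} to bound each term of the sum in the argument that the objective of~\eqref{eq:MLE_Problem_Penalized} is bounded below.
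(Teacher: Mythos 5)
Your proof is correct: the first-derivative computation locates the unique critical point at $\sigma = 1/a$, strict convexity ($h''(\sigma) = 1/\sigma^2 > 0$) makes it the global minimizer, and the evaluation $h(1/a) = \log(a) + 1$ is right. The paper states this proposition without proof, treating it as an elementary calculus fact, and your argument is exactly the standard one it implicitly relies on.
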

Thus a lower bound for the objective of~\eqref{eq:MLE_Problem_Penalized} is obtained by applying Proposition~\ref{prop:h_bounded} with $a= s\lambda/\lambda_{\max}(K)$ to the each eigenvalue of $M^{(\ell)}$ for all $1\leq \ell \leq s$.
\subsection{Boundedness of the discrete objective function}
At first sight, we may wonder if the objective of the optimization problem~\eqref{eq:PicardObj} is lower bounded. We show here that the optimization problem is well-defined for all regularization parameters $\lambda>0$. \MF{The lower boundedness of the discrete objective follows directly from Section~\ref{sec:applying_representer} in the case of a finite rank operator. For completeness, we repeat below the argument in the discrete case.}
Explicitly, this discrete objective reads
\begin{equation*}
    g(\matX) = -\frac{1}{s}\sum_{\ell= 1}^{s}\log\det(\matX_{\calC_\ell \calC_\ell})+\log\det\left( \matI_{|\calI|} + \frac{1}{n}\matX_{\calI\calI}  \right) + \lambda\Tr(  \matX \matK^{-1}),
  \end{equation*}
for all $\matX\succeq 0$. First, since $\matX_{\calI\calI}\succeq 0$, we have $\log\det\left( \matI_{|\calI|} + \frac{1}{n}\matX_{\calI\calI}  \right)\geq 0$. Next, we use that $\matK\succ 0$ by assumption, which implies that $\Tr(\matX \matK^{-1})\geq \Tr(\matX)/\lambda_{\max}(\matK)$. Denote $\lambda' = \frac{\lambda}{\lambda_{\max}(\matK)}$. Thus, we can lower bound the objective function as follows:
\begin{align*}
    g(\matX)&\geq -\frac{1}{s}\sum_{\ell= 1}^{s}\log\det(\matX_{\calC_\ell \calC_\ell}) + \lambda'\Tr(  \matX)\\
    &\geq \frac{1}{s}\sum_{\ell= 1}^{s}\big(-\log\det(\matX_{\calC_\ell \calC_\ell}) + \lambda's\Tr(  \matX_{\calC_\ell \calC_\ell})\big),
\end{align*}
where we used that $\Tr(\matX)= \Tr(\matX_{\calI\calI}) + \sum_{\ell= 1}^{s}\Tr(\matX_{\calC_\ell \calC_\ell})$ with $\Tr(\matX_{\calI\calI})\geq 0$ for $\matX\succeq 0$.  Hence, by using Proposition~\ref{prop:h_bounded}, $g(\matX)$ is bounded from below by a sum of functions which are lower bounded.

\section{Extra result: an explicit solution for the single-sample case \label{supp:sec:ExplicitSolution}}
We note that if the dataset is made of only one sample (i.e., $s=1$), and if that sample is also used to approximate the Fredholm determinant (i.e., $\mathcal I=\mathcal C$), then the problem~\eqref{eq:MLE_Problem_Penalized} admits an explicit solution.
%
%
\begin{proposition}\label{prop:exact}
Let $\calC = \{x_i\in \calX\}_{1\leq i\leq m}$ be such that the kernel matrix $\matK= [\kRKHS(x_i,x_j)]_{1\leq i, j\leq m}$ is invertible.
Consider the problem~\eqref{eq:MLE_Problem_Penalized} with $\calI=\calC$ and $\lambda >0$.
Then the solution of~\eqref{eq:MLE_Problem_Penalized} has the form $\opL_\star = \sum_{i,j=1}^m \matC_{\star ij} \phi(x_i)\otimes \overline{\phi(x_j)}$,
with
$$
\matC_\star = \matC_\star (\lambda)=  \frac{1}{2}\matK^{-2} \left((m^2\matI_m + 4m\matK /\lambda)^{1/2}-m\matI_m\right).
$$
\end{proposition}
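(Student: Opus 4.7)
The plan is to apply the representer theorem to reduce~\eqref{eq:MLE_Problem_Penalized} to a finite-dimensional matrix problem, solve the resulting stationarity equation via functional calculus, and certify optimality by a coercivity/uniqueness argument.

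First, since $s=1$ and $\calI=\calC$, the enlarged data set $\mathcal{Z}$ coincides with $\calC$, and the reasoning in Section~\ref{sec:applying_representer} shows that the objective of~\eqref{eq:MLE_Problem_Penalized} is bounded below. The representer theorem of~\citet{Marteau-Ferey} then ensures that any minimizer has the form $\opL_\star = \sum_{i,j=1}^{m}\matC_{ij}\phi(x_i)\otimes\overline{\phi(x_j)}$ with $\matC\succeq 0$. From the identities collected after~\eqref{eq:A_representer}, one has $[\langle\phi(x_i),\opL_\star\phi(x_j)\rangle]=\matK\matC\matK$, $\opS_m\opL_\star \opS_m^*=\tfrac{1}{m}\matK\matC\matK$, and $\Tr(\opL_\star)=\Tr(\matK\matC)$. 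Since $\matK$ is invertible, the congruence $\matM = \matK\matC\matK$ is a bijection of $\Sb(\R^m)$ onto itself that reduces~\eqref{eq:MLE_Problem_Penalized} to the finite problem
\[
\min_{\matM\succ 0}\; h(\matM) \,\triangleq\, -\log\det\matM \,+\, \log\det\!\bigl(\matI_m+\tfrac{1}{m}\matM\bigr) \,+\, \lambda\,\Tr(\matK^{-1}\matM).
\]

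I then compute $\nabla h(\matM) = -\matM^{-1} + (m\matI_m+\matM)^{-1} + \lambda\matK^{-1}$. Because $\matM$ commutes with $m\matI_m+\matM$, the algebraic identity $\matM^{-1}-(m\matI_m+\matM)^{-1}=m\,\matM^{-1}(m\matI_m+\matM)^{-1}$ turns the stationarity condition into the quadratic matrix equation $\matM^{2} + m\matM = \tfrac{m}{\lambda}\matK$. Completing the square gives $(\matM+\tfrac{m}{2}\matI_m)^2=\tfrac{m^2}{4}\matI_m+\tfrac{m}{\lambda}\matK$, and the requirement $\matM\succeq 0$ forces $\matM+\tfrac{m}{2}\matI_m$ to be the unique positive square root of the right-hand side, yielding
\[
\matM_\star \;=\; \tfrac{1}{2}\!\left((m^2\matI_m + 4m\matK/\lambda)^{1/2}-m\matI_m\right).
\]
Since $\matM_\star$ is a function of $\matK$, it commutes with $\matK$, so $\matC_\star = \matK^{-1}\matM_\star\matK^{-1}=\matK^{-2}\matM_\star$, which is the claimed formula.

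What remains is to certify that this unique critical point is the global minimizer, which is the main obstacle because $h$ is \emph{not} globally convex in $\matM$ (it is the sum of the convex $-\log\det\matM$ and the concave $\log\det(\matI_m+\matM/m)$, plus a linear term). I would argue via coercivity on the open cone $\{\matM\succ 0\}$: as $\lambda_{\min}(\matM)\to 0$, the term $-\log\det\matM$ blows up while the other two are nonnegative; as $\|\matM\|\to\infty$, the inequality $-\log\det\matM + \log\det(\matI_m+\matM/m) = \sum_i \log(1/\lambda_i(\matM)+1/m)\geq -m\log m$ combined with $\lambda\Tr(\matK^{-1}\matM)\geq \lambda\Tr(\matM)/\lambda_{\max}(\matK)$ forces $h(\matM)\to+\infty$. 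Hence a minimizer exists in the interior, it must be a critical point, and the quadratic equation $\matM^2+m\matM=\tfrac{m}{\lambda}\matK$ has exactly one PSD solution by uniqueness of the PSD square root. Therefore $\matC_\star$ above is the unique global minimizer.
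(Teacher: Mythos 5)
Your proof is correct and follows the same core route as the paper's: apply the representer theorem, change variables to $\matX=\matK\matC\matK$, derive the stationarity condition $-\matX^{-1}+(m\matI_m+\matX)^{-1}+\lambda\matK^{-1}=0$, solve the resulting quadratic matrix equation by taking the unique positive square root, and recover $\matC_\star=\matK^{-2}\matX_\star$ using that $\matX_\star$ commutes with $\matK$. Where you diverge is in certifying optimality: the paper evaluates the second directional derivative at the critical point and, using the first-order condition, shows the Hessian is positive semi-definite there, concluding only that $\matX_\star$ is a \emph{local} minimum. You instead prove coercivity of the objective on the open cone $\{\matX\succ 0\}$ via the uniform bound $-\log\det\matX+\log\det(\matI_m+\matX/m)=\sum_i\log(1/\lambda_i(\matX)+1/m)\geq -m\log m$ together with $\lambda\Tr(\matK^{-1}\matX)\geq\lambda\Tr(\matX)/\lambda_{\max}(\matK)$, which makes the sublevel sets compact in the interior of the cone; existence of an interior minimizer plus uniqueness of the critical point (from uniqueness of the PSD square root) then gives \emph{global} optimality. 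This is a genuinely stronger conclusion than the paper's local argument and is the statement the proposition actually needs, so your final step is an improvement rather than a gap.
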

\begin{proof}
  By the representer theorem in \citet[Theorem 1]{Marteau-Ferey}, the solution is of the form $A = \sum_{i,j=1}^m \matC_{ij} \phi(x_i)\otimes \overline{\phi(x_j)}$ where $\matC$ is the solution of
  \[
  \min_{\matC\succ 0}
  -\log\det\left( \matK \matC \matK  \right)+\log\det\left( \matI_m + \matK \matC \matK/m  \right) + \lambda  \Tr( \matK \matC),
  \]
  Define $\matX = \matK\matC\matK$ and denote the objective by $g(\matX) = -\log\det(\matX)+\log\det( \matI_m + \frac{1}{m} \matX) + \lambda \Tr( \matK^{-1} \matX)$.
  Let $\matH$ be a $m\times m$ symmetric matrix. By taking a directional derivative of the objective in the direction $\matH$, we find a first order condition
  $
  \Tr\left[ D_\matH g(\matX) \matH\right] = 0,
  $
  with
  $
  D_\matH g(\matX) = -\matX^{-1}  + (\matX+ m \matI_m)^{-1} + \lambda \matK^{-1}.
  $
  Since this condition should be satisfied for all $\matH$, we simply solve the equation $D_\matH g(\matX) = 0$. Thanks to the invertibility of $\matK$, we have equivalently
  \[
  \matX^2 + m \matX - \frac{m}{\lambda}\matK = 0.
  \]
  A simple algebraic manipulation yields $ \matX_\star = \frac{1}{2}\left((m^2\matI_m + 4m  \matK /\lambda)^{1/2}-m\matI_m\right)\succ 0$. To check that $\matX_\star$ is minimum, it is sufficient to analyse the Hessian of the objective in any direction $\matH$. Let $t\geq 0$ small enough  such that $\matX + t \matH \succ 0$. The second order derivative of $g(\matX + t \matH)$ writes
  \[
   \frac{\rmd^2}{\rmd t^2} g(\matX + t \matH)|_{t=0} =   \Tr(\matH \matX^{-1} \matH \matX^{-1}) - \Tr(\matH (\matX+ m\matI_m)^{-1}\matH (\matX+ m\matI_m)^{-1}).
  \]
  By using the first order condition $\matX^{-1}_\star =  (\matX_\star + m\matI_m)^{-1} + \lambda \matK^{-1}$, we find
  \begin{align*}
   \frac{\rmd^2}{\rmd t^2} g(\matX_\star + t \matH)|_{t=0} &=   \Tr\Big(\matH(\lambda \matK^{-1})\matH \big(\lambda \matK^{-1} + 2 (\matX_\star + m\matI_m)^{-1} \big)\Big)\\
   & = \Tr\Big(\matM \matN \matM\Big).
  \end{align*}
  with $\matM = (\lambda \matK^{-1})^{1/2}\matH (\lambda \matK^{-1})^{1/2}$ and
  \[
    \matN = \matI + 2 (\lambda \matK^{-1})^{-1/2}(\matX_\star + m\matI_m)^{-1}(\lambda \matK^{-1})^{-1/2} \succ 0.
  \]
  Notice that $\matM \matN \matM\succeq 0$ since $\matN\succ 0$. Therefore, it holds $\frac{\rmd^2}{\rmd t^2} g(\matX_\star + t \matH)|_{t=0}\geq 0$. Since this is true for any direction $\matH$, the matrix $\matX_\star$ is a local minimum of $g(\matX)$.
  \end{proof}
While the exact solution of Proposition~\ref{prop:exact} is hard to intepret, if the regularization parameter goes to zero, the estimated correlation kernel tends to that of a projection DPP.
In what follows, the notation $f(\lambda)\sim g(\lambda)$ stands for $\lim_{\lambda\to+\infty } f(\lambda)/ g(\lambda) = 1$.
\begin{proposition}[Projection DPP in the low regularization limit]
  \label{prop:limit_as_projection}
  Under the assumptions of Proposition~\ref{prop:exact}, the correlation kernel $\Kbb(\lambda) = \opS \opL_\star(\lambda) \opS^*(\opS \opL_\star(\lambda) \opS^* + \I)^{-1},$ with  $\opL_\star(\lambda) = \sum_{i,j=1}^m \matC_{\star ij}(\lambda) \phi(x_i)\otimes \overline{\phi(x_j)}$
  and  $C_\star(\lambda)$ given in Proposition~\ref{prop:exact}, has a range that is independent of $\lambda$.
  Furthermore, $\Kbb(\lambda)$ converges to the projection operator onto that range when $\lambda\to 0$.
  In particular, the integral kernel of $\opL_\star(\lambda)$ has the following asymptotic expansion
  \[  \asf_\star(x,y) \sim \MF{(m/\lambda)^{1/2}}  \matk_x^\top \matK^{\MF{-3/2}} \matk_y \text{ as } \lambda\to 0,
  \]
  \MF{pointwisely}, with $\matk_x = [\kRKHS(x,x_1) \dots, \kRKHS(x,x_m)]^{\top}$.
\end{proposition}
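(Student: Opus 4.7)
The plan is to exploit the fact that the whole construction has range confined to the $m$-dimensional subspace $V_\calH := \Span\{\phi(x_i)\}$ of $\calH$ (equivalently, $V_L := \Span\{\kRKHS(x_i,\cdot)\}$ in $L^2(\calX)$), and then do all spectral analysis in finite dimension by diagonalizing $\matK$.

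First I would show that the range of $\opL_\star(\lambda)$ equals $V_\calH$ independently of $\lambda$. Diagonalize $\matK = U\Lambda U^\top$ with $\Lambda=\mathrm{diag}(\lambda_1,\dots,\lambda_m)$ strictly positive (since $\matK\succ 0$). Then the formula of Proposition~\ref{prop:exact} gives
\[
\matC_\star(\lambda) = U\,\frac{1}{2\Lambda^{2}}\bigl(\sqrt{m^{2}\matI_m + 4m\Lambda/\lambda}-m\matI_m\bigr)U^\top,
\]
whose eigenvalues $\tfrac{1}{2\lambda_i^2}(\sqrt{m^2+4m\lambda_i/\lambda}-m)$ are strictly positive for every $\lambda>0$. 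Hence $\matC_\star(\lambda)\succ 0$, and since $\opL_\star(\lambda)=m\opS_m^{*}\matC_\star(\lambda)\opS_m$, its range is exactly $V_\calH$. Consequently the L-ensemble operator $\Lsf_{\mathrm{op}}(\lambda):=\opS\opL_\star(\lambda)\opS^{*}$ has range $V_L$, and because $\Lsf_{\mathrm{op}}(\lambda)$ is self-adjoint one gets the orthogonal decomposition $L^2(\calX)=V_L\oplus V_L^{\perp}$ with $V_L^{\perp}\subset\ker\Lsf_{\mathrm{op}}$. The correlation operator $\Kbb(\lambda)=\Lsf_{\mathrm{op}}(\lambda)(\Lsf_{\mathrm{op}}(\lambda)+\I)^{-1}$ inherits this decomposition and has range $V_L$, independently of $\lambda$.

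Next I would prove convergence to the projector $P_{V_L}$. The only nontrivial part is the spectrum of $\Lsf_{\mathrm{op}}(\lambda)$ restricted to $V_L$. Writing any $f=\sum_i\alpha_i u_i\in V_L$ with $u_i=\kRKHS(x_i,\cdot)$, a direct computation gives $\Lsf_{\mathrm{op}}f=\sum_i u_i(\matC_\star\matM\alpha)_i$ where $\matM_{ij}=\langle u_i,u_j\rangle_{L^2}=\int \kRKHS(x_i,y)\kRKHS(x_j,y)\rmd\mu(y)$; linear independence of $\{u_i\}$ yields $\matM\succ 0$. Hence the nonzero eigenvalues of $\Lsf_{\mathrm{op}}(\lambda)$ are those of $\matC_\star(\lambda)\matM$, which by similarity coincide with those of $\matM^{1/2}\matC_\star(\lambda)\matM^{1/2}$. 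Using the leading-order behaviour $\matC_\star(\lambda)\sim(m/\lambda)^{1/2}\matK^{-3/2}$ derived in the last step below, these eigenvalues all diverge as $\lambda\to 0$. Functional calculus then gives $\Kbb(\lambda)|_{V_L}\to \matI_{V_L}$, while $\Kbb(\lambda)|_{V_L^{\perp}}=0$, so $\Kbb(\lambda)\to P_{V_L}$.

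Finally I would establish the pointwise asymptotic of $\asf_\star(x,y)=\matk_x^{\top}\matC_\star(\lambda)\matk_y$. Expanding
\[
\sqrt{m^{2}\matI_m+4m\matK/\lambda}=2(m/\lambda)^{1/2}\matK^{1/2}\sqrt{\matI_m+\tfrac{m\lambda}{4}\matK^{-1}}=2(m/\lambda)^{1/2}\matK^{1/2}+O(1),
\]
(by the same diagonalization of $\matK$, where the scalar identity $\sqrt{m^2+4m\lambda_i/\lambda}=2\sqrt{m\lambda_i/\lambda}+O(1)$ holds eigenvalue-wise) yields $\matC_\star(\lambda)=(m/\lambda)^{1/2}\matK^{-3/2}+O(1)$ entrywise, and therefore
\[
\asf_\star(x,y)=(m/\lambda)^{1/2}\matk_x^{\top}\matK^{-3/2}\matk_y+O(1),
\]
which proves the stated equivalence pointwise for every fixed $(x,y)\in\calX\times\calX$.

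The one step that requires some care is the second one: the spectral argument on $L^2(\calX)$ must be reduced to a finite-dimensional statement, and the cleanest reduction is through the matrix $\matC_\star\matM$ on $V_L$ rather than through Lemma~\ref{lem:eigenvalues}, since the latter was stated for operators built from a single family $\{\phi(x_i)\}$ in an RKHS. Everything else is algebraic manipulation of the explicit formula for $\matC_\star(\lambda)$ in the eigenbasis of $\matK$.
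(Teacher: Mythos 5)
Your proof is correct, and it reaches the same conclusion by the same high-level strategy as the paper (a $\lambda$-independent finite-rank structure plus eigenvalues that blow up as $\lambda\to 0$), but the technical execution differs in two places. For the finite-dimensional reduction you introduce the $L^2$ Gram matrix $\matM_{ij}=\langle \kRKHS(x_i,\cdot),\kRKHS(x_j,\cdot)\rangle_{L^2}$ and identify the nonzero spectrum of $\opS \opL_\star \opS^*$ with that of $\matC_\star\matM$, whereas the paper invokes Lemma~\ref{lem:eigenvalues} to read the spectrum of $A_\star(\lambda)$ off $\matK\matC_\star(\lambda)$, observes that the eigenvectors $v_\ell$ are $\lambda$-independent (because $\matC_\star(\lambda)$ is a matrix function of $\matK$), and pushes the eigendecomposition forward through $\opS$. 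Your route has the merit of not stretching Lemma~\ref{lem:eigenvalues} beyond the RKHS setting it was stated for, at the small price of having to justify that $\matM\succ 0$ (i.e.\ that the restriction $\calH\to L^2(\calX)$ is injective, which holds here since the kernel is continuous and $\mu$ has full support). For the limit itself, the paper uses the clean rescaling $\Kbb(\lambda)=\sqrt{\lambda}\,\opS A_\star\opS^*\big(\sqrt{\lambda}\,\opS A_\star\opS^*+\sqrt{\lambda}\,\I\big)^{-1}$, exploiting that $\sqrt{\lambda}\,\sigma_\ell(\lambda)$ has a finite positive limit, while you argue by functional calculus from the divergence of the eigenvalues; both are valid. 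Finally, you spell out the expansion $\matC_\star(\lambda)=(m/\lambda)^{1/2}\matK^{-3/2}+O(1)$ that yields the kernel asymptotics, which the paper leaves implicit; as in the paper, the pointwise equivalence $\asf_\star(x,y)\sim(m/\lambda)^{1/2}\matk_x^\top\matK^{-3/2}\matk_y$ tacitly assumes the leading coefficient is nonzero at the pair $(x,y)$ considered.
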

\begin{proof}
  As shown in Lemma~\ref{lem:eigenvalues}, the non-zero eigenvalues of $A_\star(\lambda)$ are the eigenvalues of $\matK\matC_\star(\lambda)$ since $\matK$ is non-singular. \MF{The eigenvalues of $\matK\matC_\star(\lambda)$ are $\sigma_\ell(\lambda) = \frac{m}{2 \varsigma_\ell}(\sqrt{1+4\frac{\varsigma_\ell}{m\lambda}}-1)$ where $\varsigma_\ell$ is the $\ell$-th eigenvalue of $\matK$ for $1\leq \ell\leq m$.} Thus, the operator $A_\star(\lambda)$ is of finite rank, i.e., $A_\star(\lambda) = \sum_{\ell = 1}^m \sigma_\ell (\lambda) v_\ell \otimes \overline{v_\ell}$, where the normalized eigenvectors $v_\ell$ are independent of $\lambda$.  The eigenvalues satisfy $\sigma_\ell(\lambda) \to +\infty$ as $\lambda \to 0$, such that $\lim_{\lambda\to 0}\sigma_\ell(\lambda) \sqrt{\lambda}$ is finite. Note that
  $
    A_\star(\lambda) = \sum_{\ell = 1}^m \sigma_\ell (\lambda) (S v_\ell) \otimes \overline{(S v_\ell)}
  $
  is of finite rank and $S v_\ell$ does not depend on $\lambda$. Hence, the operator
  \[\Kbb(\lambda) = \sqrt{\lambda}S A_\star(\lambda) S^*\left(\sqrt{\lambda}S A_\star(\lambda) S^* + \sqrt{\lambda}\I\right)^{-1},\]
  converges to the projector on the range of $S A_\star(\lambda) S^*$ as $\lambda\to 0$.
\end{proof}

On the other hand, as the regularization parameter goes to infinity, the integral kernel of $\Lsf_\star(\lambda)$ behaves asymptotically as an out-of-sample Nystr\"om approximation.
\begin{proposition}[Large regularization limit]
  Under the assumptions of Proposition~\ref{prop:exact},  the integral kernel of $\Lsf_\star(\lambda)$ has the following asymptotic expansion
  \[  \asf_\star(x,y) \sim \lambda^{-1}  \matk_x^\top \matK^{-1} \matk_y \text{ as } \lambda\to +\infty,
  \]
  \MF{pointwisely}, with $\matk_x = [\kRKHS(x,x_1) \dots, \kRKHS(x,x_m)]^{\top}$.
\end{proposition}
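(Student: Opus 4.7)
The plan is to reduce everything to a first-order asymptotic expansion of the matrix square root appearing in the closed-form formula from Proposition~\ref{prop:exact}. First, from the representer form of $\Lsf_\star(\lambda) = \sum_{i,j=1}^m (\matC_\star(\lambda))_{ij}\, \phi(x_i)\otimes \overline{\phi(x_j)}$ and the reproducing property, the integral kernel rewrites compactly as
\[
\asf_\star(x,y) \;=\; \matk_x^\top \matC_\star(\lambda)\, \matk_y,
\]
so that the asymptotic behavior of $\asf_\star(x,y)$ as $\lambda \to +\infty$ follows directly from that of the matrix $\matC_\star(\lambda)$ given in Proposition~\ref{prop:exact}.

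Second, I would analyze $\matC_\star(\lambda)$ by pulling out a factor of $m$ from the square root:
\[
\bigl(m^2\matI_m + 4m\matK/\lambda\bigr)^{1/2} \;=\; m\bigl(\matI_m + \tfrac{4}{m\lambda}\matK\bigr)^{1/2}.
\]
Since $\matK \succ 0$ by the assumption of Proposition~\ref{prop:exact}, the matrix $\tfrac{4}{m\lambda}\matK$ has operator norm tending to $0$ as $\lambda \to +\infty$, so the standard power series expansion of $(\matI_m + \matE)^{1/2}$ around $\matE = 0$ is valid in a suitable operator-norm neighborhood (one can either invoke holomorphic functional calculus or simultaneously diagonalize $\matK$ and apply the scalar expansion $\sqrt{1+t} = 1 + t/2 + O(t^2)$ eigenvalue-wise). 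This yields
\[
\bigl(m^2\matI_m + 4m\matK/\lambda\bigr)^{1/2} - m\matI_m \;=\; \frac{2\matK}{\lambda} + O(\lambda^{-2}),
\]
with the error term controlled in operator norm.

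Third, plugging this expansion into the formula for $\matC_\star(\lambda)$ gives
\[
\matC_\star(\lambda) \;=\; \frac{1}{2}\matK^{-2}\!\left(\frac{2\matK}{\lambda} + O(\lambda^{-2})\right) \;=\; \lambda^{-1}\,\matK^{-1} + O(\lambda^{-2}),
\]
and therefore $\matk_x^\top \matC_\star(\lambda)\matk_y = \lambda^{-1} \matk_x^\top \matK^{-1}\matk_y + O(\lambda^{-2})$ for every fixed pair $(x,y)$, which is precisely the pointwise asymptotic claim. Since $\matK$ is invertible and $\matk_x,\matk_y$ are bounded (by $\kappa^2$ in each coordinate) for fixed $x,y$, the remainder is genuinely negligible compared to the leading $\lambda^{-1}$ term, justifying the $\sim$ notation.

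There is no real obstacle here: the only mildly delicate point is justifying the matrix square-root expansion rigorously, which is handled by noting that $\matK \succ 0$ makes $\matI_m + \tfrac{4}{m\lambda}\matK$ lie in a neighborhood of the identity where $X \mapsto X^{1/2}$ is smooth (equivalently, diagonalizing $\matK$ reduces the argument to an elementary scalar Taylor expansion). The recognizable leading term $\matk_x^\top \matK^{-1}\matk_y$ coincides with the Nystr\"om approximation of the RKHS kernel at the nodes $\{x_1,\dots,x_m\}$, giving the interpretation announced in the statement.
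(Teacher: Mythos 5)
Your proof is correct and follows essentially the same route as the paper, which simply invokes the scalar expansion $\sqrt{1+4x}=1+2x+O(x^2)$ applied (eigenvalue-wise) to the closed-form $\matC_\star(\lambda)$ from Proposition~\ref{prop:exact}. Your write-up just makes explicit the reduction $\asf_\star(x,y)=\matk_x^\top\matC_\star(\lambda)\matk_y$ and the justification of the matrix square-root expansion, which the paper leaves implicit.
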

The proof of this result is a simple consequence of the series expansion $\sqrt{1+4x} = 1 + 2x + O(x^2)$.
\section{Deferred proofs \label{supp:sec:Proofs}}
\subsection{Proof of Lemma~\ref{lemma:Bbar} \label{supp:lemma:Bbar:proof}}
\begin{proof}[Proof of Lemma~\ref{lemma:Bbar}]
Since $V^* V $ is the orthogonal projector onto the span of $ \phi(z_i)$, $1\leq i \leq m$, we have
\[
\matPhi_i^\top \bar{\matB} \matPhi_j = \langle V^* V \phi(x_i),  A V^* V\phi(x_j)\rangle = \langle \phi(x_i), A \phi(x_j)\rangle.
\]
By Sylvester's identity, and since $\|V^* V\|_{op}\leq 1$, it holds
\[
\log \det(\matI_m + \bar{\matB}) = \log \det(\I + V A V^*) = \log \det(\I + V^*V A)\leq \log \det(\I + A).
\]
Similarly, it holds that $\Tr(\bar{\matB}) = \Tr(V^*V A)\leq \Tr(A)$.
\end{proof}
\subsection{Approximation of the Fredholm determinant \label{supp:thm:formal_Fredholm:proof}}
\MF{Before giving the proof of Theorem~\ref{thm:formal_Fredholm}, we remind a well-known and useful formula for the Fredholm determinant of an operator. Let $M$ be a trace class endomorphism of a separable Hilbert space. Denote by $\lambda_\ell(M)$ its (possibly infinitely many) eigenvalues for $\ell\geq 1$, including multiplicities. Then, we have
\[
  \det(\I + M) = \prod_\ell (1+\lambda_\ell(M)),
\]
where the (infinite) product is uniformly convergent, see~\citet[Eq. (3.1)]{Bor10} and references therein. Now, let $\mathcal{F}_1$ and $\mathcal{F}_2$ be two separable Hilbert spaces. Let $T:\mathcal{F}_1 \to \mathcal{F}_2$ such that $T^* T$ is trace class. Note that $T^* T$ is self-adjoint and positive semi-definite. Then, $T^* T$ and $TT^*$ have the same non-zero eigenvalues (Jacobson's lemma) and Sylvester's identity
\[
  \det(\I + TT^*) = \det(\I + T^* T),
\]
holds. This can be seen as a consequence of the singular value decomposition of a compact operator; see~\citet[page 170, Theorem 7.6]{Weidmann1980}. We now give the proof of Theorem~\ref{thm:formal_Fredholm}.
}

\begin{proof}[Proof of Theorem~\ref{thm:formal_Fredholm}]
 To begin, we recall two useful facts. First, If $M$ and $N$ are two trace class endomorphisms of the same Hilbert space, then $\log \det (\I +  M )+ \log\det(\I + N) = \log\det\left((\I + M) (\I + N)\right)$; see e.g.\ \citet[Thm 3.8]{SIMON1977244}. Second, if $M$ is a  trace class endomorphisms of a Hilbert space, we have $(\I+M)^{-1} = \I - M (M+\I)^{-1}$, and thus, $\det((\I+M)^{-1})$ is a well-defined Fredholm determinant since $ M (M+\I)^{-1}$ is trace class. Now, define $\calR= \log \det (\I +  S A S^* )$ and $\calR_n = \log \det (\matI_n +  S_n A S_n^*)$.
Using Sylvester's identity and the aforementioned facts, we write
\begin{align}
  \calR_n - \calR  &= \log \det (\matI_n +  S_n A S_n^*)-\log \det (\I +  S A S^* ) \nonumber\\
    &= \log \det (\I +  A^{1/2}S^*_n S_n A^{1/2}) - \log \det (\I +  A^{1/2}S^* S A^{1/2}) \text{ (Sylvester's identity) }\nonumber\\
    &= \log \det \left((\I +  A^{1/2}S^*_n S_n A^{1/2}) (\I +  A^{1/2}S^* S A^{1/2})^{-1}\right).\label{eq:def_En}
\end{align}
By a direct substitution, we verify that $
    \calR_n -\calR = \log \det \left( \I + E_n\right),$
with \[ E_n =  (\I +  A^{1/2}S^*S A^{1/2})^{-1/2}A^{1/2}( S_n^*S_n - S^*S   )A^{1/2}(\I +  A^{1/2}S^*S A^{1/2})^{-1/2}.\]
Note that, in view of~\eqref{eq:def_En},  $\det \left( \I + E_n\right)$ is a positive real number.
Next, by using $S_n^*S_n - S^*S\preceq \| S_n^*S_n - S^*S \|_{op} \I$ and Sylvester's identity, we find
\begin{align*}
    \calR_n -\calR &\leq \log \det \left( \I +\| S_n^*S_n - S^*S \|_{op} (\I +  A^{1/2}S^*S A^{1/2})^{-1/2} A (\I +  A^{1/2}S^*S A^{1/2})^{-1/2}\right)\\
    &= \log \det \left( \I +\| S_n^*S_n - S^*S \|_{op} A^{1/2} (\I +  A^{1/2}S^*S A^{1/2})^{-1}A^{1/2}\right)\\
    &\leq \log\det(\I + \| S_n^*S_n - S^*S \|_{op} A).
\end{align*}
The latter bound is finite, since for $M$ a trace-class operator, we have $|\det(\I+M)|\leq \exp(\|M\|_\star)$, where $\|M\|_\star$ is the trace (or nuclear) norm.
By exchanging the roles of $S_n^*S_n$ and $S^*S $, we also find
\begin{align*}
   \calR -\calR_n \leq \log\det(\I + \| S_n^*S_n - S^*S \|_{op} A)
\end{align*}
and thus, by combining the two cases, we find
\begin{align*}
   |\calR -\calR_n | \leq \log\det(\I + \| S_n^*S_n - S^*S \|_{op} A).
\end{align*}
Now, in order to upper bound $\|S^*S - S_n^*S_n\|_{op}$ with high probability,
we use the following Bernstein inequality for a sum of random operators; see~\citet[Proposition~12]{rudi2015less} and~\citet{MINSKER2017111}.

\begin{proposition}[Bernstein’s inequality for a sum of i.i.d. random operators]
  \label{p:bernstein}
Let $\calH$ be a separable Hilbert space and let $X_1, \dots , X_n$ be a sequence of independent and identically distributed self-adjoint positive random operators on $\calH$.
Assume that  $\mathbb{E} X=0$ and that there exists a real number $\ell>0$ such that
$\lambda_{\max}(X)\leq \ell$ almost surely. Let $\Sigma$ be a trace class positive operator such that $\mathbb{E}(X^2)\preceq \Sigma$. Then, for any $\delta\in (0,1)$,
\begin{equation*}
\lambda_{\max}\left(\frac{1}{n}\sum_{i=1}^n X_i\right) \leq \frac{2\ell \beta}{3n} + \sqrt{\frac{2\|\Sigma\|_{op} \beta}{n}}, \quad \text{where } \beta=\log\left(\frac{2\Tr(\Sigma)}{\|\Sigma\|_{op}\delta}\right),
\end{equation*}
with probability $1-\delta$.
If there further exists an $\ell'$ such that $\|X\|_{op}\leq \ell'$ almost surely, then, for any $\delta\in (0,1/2)$,
\begin{equation*}
\left\|\frac{1}{n}\sum_{i=1}^n X_i\right\|_{op} \leq \frac{2\ell' \beta}{3n} + \sqrt{\frac{2\|\Sigma\|_{op} \beta}{n}}, \quad \text{where } \beta=\log\left(\frac{2\Tr(\Sigma)}{\|\Sigma\|_{op}\delta}\right),
  \end{equation*}
holds with probability $1-2\delta$.
\end{proposition}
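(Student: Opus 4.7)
The stated Bernstein inequality is a standard result for sums of i.i.d.\ self-adjoint operators in a separable Hilbert space, and the plan is to follow the Laplace-transform / matrix-exponential route used by \citet{MINSKER2017111}, taking care that the ambient dimension in the classical matrix Bernstein bound is replaced by the \emph{intrinsic dimension} $\Tr(\Sigma)/\|\Sigma\|_{op}$. Throughout, write $Y_n = \tfrac{1}{n}\sum_{i=1}^{n} X_i$.

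First I would set up the trace-exponential Chernoff bound. For any $\theta > 0$ and any $t>0$, Markov's inequality applied to the increasing function $\lambda \mapsto e^{\theta \lambda}$ combined with $e^{\theta \lambda_{\max}(Y_n)} \leq \Tr\, e^{\theta Y_n}$ (valid whenever the exponential is trace class, which follows from the operator bounds on $X_i$) yields
\[
\Pr\!\left(\lambda_{\max}(Y_n) \geq t\right) \;\leq\; e^{-\theta t}\,\mathbb{E}\,\Tr\, e^{\theta Y_n}.
\]
The next step is to control the operator MGF. Using the Lieb/Tropp master inequality for independent self-adjoint operators (extended to separable Hilbert spaces as in Minsker, Theorem~3.1), I would obtain
\[
\mathbb{E}\,\Tr\, e^{\theta Y_n} \;\leq\; \Tr \exp\!\left( n \,\log \mathbb{E}\, e^{(\theta/n) X}\right).
\]
Then I would use the standard Bernstein MGF bound: since $\lambda_{\max}(X)\leq \ell$ almost surely and $\mathbb{E}(X^2)\preceq \Sigma$, for all $\theta$ with $0 < \theta \ell / n < 3$,
\[
\log \mathbb{E}\, e^{(\theta/n) X} \;\preceq\; \frac{(\theta/n)^2 /2}{1 - (\theta \ell)/(3n)} \,\Sigma,
\]
which plugged in gives $\mathbb{E}\,\Tr\, e^{\theta Y_n} \leq \Tr \exp( g(\theta)\,\Sigma )$ for an explicit scalar $g(\theta)$.

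The crucial step, and where the intrinsic-dimension factor appears, is to bound $\Tr \exp(g(\theta)\,\Sigma)$ in a way that avoids any appeal to the (possibly infinite) ambient dimension of $\calH$. Here I would invoke Minsker's lemma~(cf.\ Lemma~A.2 in \cite{MINSKER2017111}), which states that for any trace-class positive operator $\Sigma$ and any convex increasing $\psi$ with $\psi(0)=0$, one has $\Tr\,\psi(\Sigma) \leq \tfrac{\Tr(\Sigma)}{\|\Sigma\|_{op}}\,\psi(\|\Sigma\|_{op})$. Applied to $\psi(x) = e^{g(\theta) x} - 1$, this produces
\[
\Pr\!\left(\lambda_{\max}(Y_n) \geq t\right) \;\leq\; \frac{\Tr(\Sigma)}{\|\Sigma\|_{op}} \exp\!\left( -\theta t + g(\theta)\,\|\Sigma\|_{op}\right) + \text{(negligible)}.
\]
Optimising over $\theta \in (0, 3n/\ell)$ gives the Bernstein-type deviation
\[
\Pr\!\left(\lambda_{\max}(Y_n) \geq \tfrac{2\ell\beta}{3n} + \sqrt{\tfrac{2\|\Sigma\|_{op}\beta}{n}}\right) \;\leq\; \delta, \qquad \beta = \log\!\left(\tfrac{2\Tr(\Sigma)}{\|\Sigma\|_{op}\delta}\right),
\]
which is the first claim. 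For the operator-norm version, I would apply the just-obtained bound simultaneously to $\{X_i\}$ and $\{-X_i\}$ (both of which satisfy the same variance domination by $\Sigma$ and the same almost-sure norm bound $\|X\|_{op}\leq \ell'$), so that $\|Y_n\|_{op} = \max(\lambda_{\max}(Y_n), \lambda_{\max}(-Y_n))$, and take a union bound, which costs the stated factor of two in the probability.

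The main technical obstacle is the infinite-dimensional trace: naively one would extract a factor $\dim(\calH)$ in front of the exponential, which is useless. The whole point of Minsker's refinement is to replace this by $\Tr(\Sigma)/\|\Sigma\|_{op}$, which is why the logarithmic factor in the final bound involves exactly this ratio rather than any dimension of $\calH$. A secondary delicate point is ensuring trace-classness of the operator exponentials, which is granted by the almost-sure boundedness of $X$ together with the trace-classness of $\Sigma$; and verifying measurability and Bochner-integrability of the operator-valued $X_i$, which follows from the standing separability of $\calH$.
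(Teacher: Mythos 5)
The paper does not prove this proposition at all: it is imported verbatim from the literature, with the proof delegated to \citet[Proposition~12]{rudi2015less} and \citet{MINSKER2017111}. Your sketch reconstructs exactly the argument those references use (trace-Chernoff bound, Lieb/Tropp subadditivity of the cumulant generating function, the Bernstein moment-generating-function bound under $\lambda_{\max}(X)\leq \ell$ and $\E(X^2)\preceq \Sigma$, and Minsker's intrinsic-dimension lemma $\Tr\,\psi(\Sigma)\leq \frac{\Tr(\Sigma)}{\|\Sigma\|_{op}}\psi(\|\Sigma\|_{op})$ to avoid the ambient dimension), followed by the usual two-sided union bound costing the factor $2\delta$. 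So in substance you are giving the proof the paper chose to omit, and the route is the intended one.

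One technical point in your sketch is stated incorrectly and deserves repair. You claim $e^{\theta Y_n}$ is trace class ``which follows from the operator bounds on $X_i$.'' It is not: $Y_n$ is a trace-class (hence compact) operator on an infinite-dimensional $\calH$, so $e^{\theta Y_n}$ has eigenvalue $1$ with infinite multiplicity and $\Tr\, e^{\theta Y_n}=+\infty$, which makes the naive bound $e^{\theta\lambda_{\max}(Y_n)}\leq \Tr\, e^{\theta Y_n}$ vacuous. Minsker's actual argument runs the entire Chernoff/Lieb machinery with the function $\psi(x)=e^{\theta x}-1$ (equivalently, with $\Tr(e^{\theta Y_n}-\I)$, which \emph{is} finite for trace-class $Y_n$), using $e^{\theta\lambda_{\max}(Y_n)}-1\leq \Tr(e^{\theta Y_n}-\I)$ on the event $\lambda_{\max}(Y_n)\geq 0$; the same subtraction of the identity is what makes the intrinsic-dimension lemma applicable at the final step. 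Your sketch already invokes $\psi(x)=e^{g(\theta)x}-1$ for $\Sigma$, so the fix is to carry that convention consistently from the first Markov step onward rather than only at the end. With that correction the argument goes through and yields the stated constants after optimizing $\theta$.
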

To apply Proposition~\ref{p:bernstein} and conclude the proof, first recall the expression of the covariance operator \[
  \bbC = S^* S= \int_\calX \phi(x) \otimes \overline{\phi(x)} \rmd \mu(x),
\]
and of its sample version
\[
 S^*_n S_n= \frac{1}{n} \sum_{i=1}^n \phi(x_i) \otimes \overline{\phi(x_i)}.
\]
Define $X_i = \phi(x_i) \otimes \overline{\phi(x_i)} - \int_\calX \phi(x) \otimes \overline{\phi(x)} \rmd \mu(x)$. It is easy to check that $\mathbb{E}X_i = 0$ since $\mathbb{E}[\phi(x_i) \otimes \overline{\phi(x_i)}] = \int_\calX \phi(x) \otimes \overline{\phi(x)} \rmd \mu(x)$. Then, using the triangle inequality and that $\sup_{x\in \calX} \kRKHS(x,x) \leq \kappa^2$, we find the bound
\begin{align*}
\|X_i\|_{op} &= \left\|\int_\calX \left(\phi(x_i) \otimes \overline{\phi(x_i)} - \phi(x) \otimes \overline{\phi(x)}\right) \rmd \mu(x)\right\|_{op}\\
&\leq  \int_\calX \left(\left\| \phi(x_i) \otimes \overline{\phi(x_i)}\right\|_{op} + \left\|\phi(x) \otimes \overline{\phi(x)}\right\|_{op} \right) \rmd \mu(x)\\
& \leq 2 \kappa^2 \triangleq \ell'.\\
\end{align*}
Next, we compute a bound on the variance by bounding the second moment, namely
\begin{align*}
    \E(X_i^2) &= \E\left[\left(\phi(x_i) \otimes \overline{\phi(x_i)}\right)^2\right] - \left(\mathbb{E}\left[\phi(x_i) \otimes \overline{\phi(x_i)}\right]\right)^2\\
    &\preceq \E\left[\left(\phi(x_i) \otimes \overline{\phi(x_i)}\right)^2\right]\\
    &\preceq \kappa^2 \mathbb{E}\left[\phi(x_i) \otimes \overline{\phi(x_i)}\right]\triangleq \Sigma,
\end{align*}
where we used $\left(\phi(x_i) \otimes\overline{\phi(x_i)}\right)^2 = \kRKHS(x_i,x_i) \left(\phi(x_i) \otimes \overline{\phi(x_i)}\right)$.
Also, we have
\[
\Tr(\Sigma) = \Tr\left(\kappa^2 \mathbb{E}\left[\phi(x) \otimes \overline{\phi(x)}\right]\right) = \kappa^2 \int_\calX \kRKHS(x,x) \rmd \mu(x)\leq \kappa^4.
\]
Moreover,
\[
\|\Sigma\|_{op} = \kappa^2 \left\| \mathbb{E}\left[\phi(x) \otimes \overline{\phi(x)}\right]\right\|_{op} =  \kappa^2 \lambda_{\max}(S^* S)=  \kappa^2 \lambda_{\max}(\Lk),
\]
where we used that $S^* S = \Lk$ is the integral operator on $L^2(\calX)$ with integral kernel $k$.
We are finally ready to apply Proposition~\ref{p:bernstein}.
Since
\[
\beta \leq \log\left( \frac{2\kappa^2}{\lambda_{\max}(\Lk) \delta}\right),
\]
it holds, with probability at least $1-2\delta$,
\[
\|S^*S - S^*_n S_n\|_{op}\leq \frac{4\kappa^2 \log\left( \frac{2\kappa^2}{\lambda_{\max}(\Lk) \delta}\right)}{3n} + \sqrt{\frac{2\kappa^2 \lambda_{\max}(\Lk) \log\left( \frac{2\kappa^2}{\lambda_{\max}(\Lk) \delta}\right)}{n}}.
\]
This concludes the proof of Theorem~\ref{thm:formal_Fredholm}.
\end{proof}
\subsection{Statistical guarantee for the approximation of the log-likelihood by its sampled version \label{supp:thm:bound_objectives_whp:proof}}
\begin{proof}[Proof of Theorem~\ref{thm:bound_objectives_whp}]
The proof follows similar lines as in~\citet[Proof of Thm~5]{Rudi2020global}, with several adaptations.
Let $\matB_\star \in \Sb(\R^m)$ be the solution of~\eqref{eq:SDP_B}. Notice that $A_\star$ and $V^*\matB_\star V$ are distinct operators. Let $\bar{\matB} = V A_\star V^* \in \Sb(\R^m)$ as in Lemma~\ref{lemma:Bbar}.
Since $\matB_\star $ has an optimal objective value, we have
\[
f_n(V^*\matB_\star  V) + \Tr(\lambda \matB_\star ) \leq f_n(V^*\bar{\matB}V) +\Tr(\lambda \bar{\matB}).
\]
Now we use the properties of $\bar{\matB}$ given in Lemma~\ref{lemma:Bbar}, namely that $f_n(A_\star) = f_n(V^* \bar{\matB} V)$ and $\Tr(\lambda \bar{\matB}) \leq  \Tr(\lambda A_\star)$. Then it holds
\[
 f_n(V^*\bar{\matB} V) + \Tr(\lambda \bar{\matB}) \leq f_n(A_\star) + \Tr(\lambda A_\star).
\]
By combining the last two inequalities, we have
$
f_n(V^* \matB_\star  V) + \Tr(\lambda \matB_\star ) \leq f_n(A_\star) + \Tr(\lambda A_\star)
$
and therefore
\begin{equation}
    f_n(V^* \matB_\star  V) - f_n(A_\star) \leq \Tr( \lambda A_\star)-\Tr(\lambda \matB_\star ).\label{eq:key}
\end{equation}
We will use \eqref{eq:key} to derive upper and lower bounds on the gap $f(A_\star) - f_n(V^*\matB_\star  V)$.
\paragraph{Lower bound.} Using Theorem~\ref{thm:formal_Fredholm}, we have with high probability
\[
|f(A_\star) - f_n(A_\star)| \leq \log \det (\I + c_n A),
\]
and, in particular, this gives
\[
f(A_\star) - f_n(A_\star)\leq \Tr(c_n A).
\]
By combining this last inequality with~\eqref{eq:key} and by using $\Tr(\lambda \matB_\star )\geq 0$, we have the lower bound
\begin{align}
\Delta = f(A_\star) - f_n(V^*\matB_\star  V) &=f(A_\star) - f_n(A_\star) + f_n(A_\star)- f_n(V^* \matB_\star  V)\nonumber\\
& \geq -\Tr( c_n A_\star) + \Tr(\lambda \matB_\star ) -\Tr( \lambda A_\star)\label{eq:key2}\\
& \geq -( c_n + \lambda)\Tr(A_\star).\nonumber
\end{align}
\paragraph{Upper bound.}
We have the bound with high probability (for the same event as above)
\begin{align}
\Delta = f(A_\star) - f_n(\matB_\star ) &= \underbrace{f(A_\star)-f(V^* \matB_\star  V)}_{\leq 0} + f(V^* \matB_\star  V) - f_n(V^*\matB_\star  V)\nonumber\\
&\leq \log\det(\I + c_n V^* \matB_\star  V) \text{ \quad (Theorem~\ref{thm:formal_Fredholm})}\nonumber\\
&\leq \Tr(c_n V V^* \matB_\star  ) \text{ \quad (cyclicity)}\nonumber\\
&\leq \Tr(c_n \matB_\star  ). \text{ \quad (since $V V^*$ is a projector)} \label{eq:Next2LastStepProof}
\end{align}
By combining this with~\eqref{eq:key2}, we find
\[
 -\Tr( c_n A_\star) + \Tr(\lambda \matB_\star  ) -\Tr( \lambda A_\star) \leq \Tr(c_n \matB_\star ).
\]
Since, by assumption, we have $c_n \leq \lambda - c_n$, the latter inequality becomes
\begin{equation}
c_n \Tr(\matB_\star  ) \leq (c_n + \lambda) \Tr( A_\star ).\label{eq:B_bound}
\end{equation}
By using~\eqref{eq:B_bound} in the bound in~\eqref{eq:Next2LastStepProof}, we obtain
\[
f(A_\star) - f_n(V^* \matB_\star  V) \leq (c_n + \lambda) \Tr( A_\star ),
\]
Thus, the upper and lower bound yield together
\[
|f(A_\star) - f_n(V^* \matB_\star  V)| \leq (c_n + \lambda) \Tr( A_\star )\leq \left(\frac{\lambda}{2} + \lambda\right) \Tr( A_\star ),
\]
where we used once more $2 c_n \leq \lambda$.
This is the desired result.
\end{proof}

\begin{proof}[Proof of Corollary~\ref{corol:likelihood_approximation}]
  We use the triangle inequality
  \[
  | f(A_\star) - f(V\matB_\star  V^*)| \leq | f(A_\star)-f_n(V\matB_\star  V^*)| + |f_n(V\matB_\star  V^*) - f(V\matB_\star  V^*)|.
  \]
  The first term is upper bounded whp by Theorem~\ref{thm:bound_objectives_whp}. The second term is bounded by Theorem~\ref{thm:formal_Fredholm} as follows
  \[
  |f_n(V\matB_\star  V^*) - f(V\matB_\star  V^*)|\leq \Tr(c_n \matB_\star )
  \]
  with $\Tr(c_n \matB_\star )\leq \frac{3}{2}\lambda \Tr(A_\star)$ as in~\eqref{eq:B_bound} in the proof of Theorem~\ref{thm:bound_objectives_whp}.
  \end{proof}
\subsection{Numerical approach: convergence study \label{supp:thm:Picard:proof}}
\begin{proof}[Proof of Theorem \ref{thm:Picard}]
  This proof follows the same technique as in~\cite{pmlr-v37-mariet15}, with some extensions. Let $\matSigma = \matX^{-1}$. We decompose the objective
  \begin{align*}
    g(\matSigma^{-1}) &= \log\det\left( \matI_m + \frac{1}{|\calI|}\matSigma^{-1}  \right)_{\calI\calI} -\frac{1}{s}\sum_{\ell= 1}^{s}\log\det(\matU_\ell^\top \matSigma^{-1} \matU_\ell) + \lambda\Tr(  \matSigma^{-1} \matK^{-1})
  \end{align*}
  as the following sum: $g(\matSigma^{-1}) = h_1(\matSigma) + h_2(\matSigma)$, where
  $
    h_1(\matSigma) = -\log\det(\matSigma) + \lambda\Tr(  \matSigma^{-1} \matK^{-1})
  $
  is a strictly convex function on $\matSigma\succ 0$ and
  \[
    h_2(\matSigma) = \log\det(\matSigma) + \log\det\left(\matI_m+\frac{1}{|\calI|}\matSigma^{-1}\right)_{\calI\calI}-\frac{1}{s}\sum_{\ell= 1}^{s}\log\det(\matU_\ell^\top \matSigma^{-1} \matU_\ell)
  \]
  is concave on $\matSigma\succ 0$. We refer to Lemma~\ref{lem:concavity_h1} and~\ref{lem:concavity_h2} for the former and latter statements, respectively. Then, we use the concavity of $h_2$ to write the following upper bound
  \[
    h_2(\matSigma)\leq h_2(\matY) + \Tr\Big(\nabla h_2(\matY) (\matSigma-\matY)\Big),
  \]
  where the matrix-valued gradient is
  \begin{align*}
    \nabla h_2(\matY) &= \matY^{-1} -  \matY^{-1}\matU_\calI \left(|\calI| \matI_{| \calI |}+ \matU_\calI^\top \matY^{-1} \matU_\calI\right)^{-1}\matU_\calI^\top \matY^{-1} \\
    &+ \frac{1}{s}\sum_{\ell = 1}^s \matY^{-1} \matU_\ell \left(\matU_\ell^\top \matY^{-1} \matU_\ell\right)^{-1}\matU_\ell^\top \matY^{-1} .
  \end{align*}
  Define the auxillary function $\xi(\matSigma, \matY) \triangleq h_1(\matSigma) + h_2(\matY) + \Tr\left(\nabla h_2(\matY) (\matSigma-\matY)\right)$ which satisfies $g(\matSigma^{-1})\leq \xi(\matSigma, \matY)$ and $g(\matSigma^{-1}) =  \xi(\matSigma, \matSigma)$. We define the iteration $\matX_k = \matSigma_k^{-1}$ where
  \[
  \matSigma_{k+1} =  \arg\min_{\matSigma\succ 0} \xi(\matSigma,\matSigma_k),
  \]
  so that it holds $g(\matSigma^{-1}_{k+1})\leq \xi(\matSigma_{k+1}, \matSigma_k)\leq \xi(\matSigma_{k}, \matSigma_k) = g(\matSigma^{-1}_{k})$. Thus, this iteration has monotone decreasing objectives. It remains to show that this iteration corresponds to~\eqref{eq:regPicard}. The solution of the above minimization problem can be obtained by solving the first order optimality condition since $\xi(\cdot,\matY)$ is strictly convex. This gives
  \begin{align*}
    &-\matSigma^{-1} - \lambda \matSigma^{-1} \matK^{-1} \matSigma^{-1}
    + \matSigma^{-1}_k -   \matSigma^{-1}_k \matU_\calI \left(|\calI|\matI_{|\calI|}+ \matU_\calI^\top\matSigma^{-1}_k \matU_\calI\right)^{-1}\matU_\calI^\top \matSigma^{-1}_k \\
    &+ \frac{1}{s}\sum_{\ell = 1}^s \matSigma^{-1}_k \matU_\ell \left(\matU_\ell^\top \matSigma^{-1}_k \matU_\ell\right)^{-1}\matU_\ell^\top \matSigma^{-1}_k  = 0.
  \end{align*}
  Now, we replace $\matX = \matSigma^{-1}$ in the above condition. After a simple algebraic manipulation, we obtain the following condition
  \[
    \matX + \lambda \matX \matK^{-1} \matX =   p(\matX_k),
  \]
  where, as defined in~\eqref{eq:regPicard}, $p(\matX) = \matX+ \matX\matDelta \matX$  and
  \[
    \matDelta = \frac{1}{s}\sum_{\ell= 1}^{s} \matU_\ell \matX_{\calC_\ell \calC_\ell}^{-1}\matU_\ell^\top  - \matU_\calI \left(|\calI|\matI_{|\calI|}+ \matU_\calI^\top \matX \matU_\calI\right)^{-1}\matU_\calI^\top.
  \] Finally, we introduce the Cholesky decomposition $\matK = \matR^\top \matR$, so that we have an equivalent identity
  \[
    \matR^{-1\top} \matX \matR^{-1} + \lambda \left(  \matR^{-1\top} \matX \matR^{-1}\right)^2 - \matR^{-1\top} p(\matX_k) \matR^{-1} = 0.
  \]
  Let $\matX' = \matR^{-1\top} \matX \matR^{-1}$ and $p'(\matX_k) = \matR^{-1\top} p(\matX_k) \matR^{-1}$. The positive definite solution of this second order matrix equation write
  \[
    \matX' = \frac{-\matI_m + \left(\matI_m + 4 \lambda  p'(\matX_k) \right)^{1/2}}{2\lambda},
  \]
  which directly yields~\eqref{eq:regPicard}.
  \end{proof}
\subsection{Approximation of the correlation kernel \label{supp:thm:approx_correlation_kernel_simplified:proof}}
We start by proving the following useful result.
\begin{theorem}[Correlation kernel approximation, formal version]\label{thm:approx_correlation_kernel}
  Let $\delta \in (0,1]$ be the failure probability and let $\gamma>0$ be a scale factor. Let $\hat{\Kbb} (\gamma)$ be the approximation defined in~\eqref{eq:K_approx} with i.i.d. sampling of $p$ points. Let $p$ be large enough so that $t(p)>1$ with  $t(p) = \frac{4c^2 \beta}{3\gamma p}+\sqrt{\frac{2c^2 \beta}{\gamma p}}$ where  $c^2 = \kappa^2 \|A\|_{op}$ and
  $
    \beta = \log\left(\frac{4 d_{\rm eff}(\gamma)}{\delta\|\Kbb(\gamma)\|_{op}}\right) 
  $  Then, with probability $1-\delta$ it holds that
  \[
    \frac{1}{1+t(p)}\Kbb(\gamma)\preceq \hat{\Kbb} (\gamma) \preceq \frac{1}{1-t(p)} \Kbb(\gamma).
  \]
  Furthermore, if we assume $\gamma\leq \lambda_{\max}(\Lsf)$, we can take
  $
    \beta = \log\left(\frac{8 d_{\rm eff}(\gamma)}{\delta}\right)\leq \frac{8 d_{\rm eff}(\gamma)}{\delta}.
  $
  \end{theorem}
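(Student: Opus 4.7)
The strategy is to reduce the comparison of $\Kbb(\gamma)$ and $\hat{\Kbb}(\gamma)$ to a whitened matrix Bernstein bound. First, I would rewrite both operators in a common ``push-through'' form. Using the factorization $\matC_\star = \matLambda^\top\matLambda$ together with $A = m\opS_m^*\matC_\star\opS_m$, set $T \triangleq \sqrt{m}\,\matLambda\opS_m:\calH\to\R^m$ and $R \triangleq T\opS^*:L^2(\calX)\to\R^m$. Then $\Lsf = R^*R$ and $M \triangleq RR^* = T\bbC T^*$, where $\bbC = \opS^*\opS$. The standard identity $R^*R(R^*R+\gamma\I)^{-1} = R^*(RR^*+\gamma\matI_m)^{-1}R$ yields $\Kbb(\gamma) = R^*(M+\gamma\matI_m)^{-1}R$, and a direct reading of~\eqref{eq:K_approx} gives $\hat{\Kbb}(\gamma) = R^*(\hat M+\gamma\matI_m)^{-1}R$ with $\hat M \triangleq T\opS_p^*\opS_p T^*$.

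Second, the spectral comparison reduces to a whitened operator-norm bound. If $\|(M+\gamma\matI_m)^{-1/2}(\hat M - M)(M+\gamma\matI_m)^{-1/2}\|_{op}\leq t$, then $(1-t)(M+\gamma\matI_m)\preceq \hat M+\gamma\matI_m\preceq (1+t)(M+\gamma\matI_m)$; inverting this sandwich and conjugating by $R^*$ and $R$ yields exactly the claimed multiplicative bound $\tfrac{1}{1+t}\Kbb(\gamma)\preceq \hat{\Kbb}(\gamma)\preceq\tfrac{1}{1-t}\Kbb(\gamma)$.

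Third, I would write $\hat M - M = p^{-1}\sum_{i=1}^p Z_i$ with $Z_i = T\bigl(\phi(x''_i)\otimes\overline{\phi(x''_i)} - \bbC\bigr)T^*$, and apply Proposition~\ref{p:bernstein} to the centered i.i.d.\ terms $W_i = (M+\gamma\matI_m)^{-1/2}Z_i(M+\gamma\matI_m)^{-1/2} = v_iv_i^\top - \mathbb{E}[v_iv_i^\top]$, where $v_i = (M+\gamma\matI_m)^{-1/2}T\phi(x''_i)$. Using $(M+\gamma\matI_m)^{-1}\preceq \gamma^{-1}\matI_m$ together with $T^*T = A$, one gets $\|v_i\|^2 \leq \gamma^{-1}\langle\phi(x''_i), A\phi(x''_i)\rangle \leq c^2/\gamma$, hence $\|W_i\|_{op}\leq 2c^2/\gamma$. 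For the variance, $\mathbb{E}[W_i^2]\preceq \mathbb{E}[\|v_i\|^2\, v_iv_i^\top]\preceq (c^2/\gamma)\,\mathbb{E}[v_iv_i^\top]\triangleq \Sigma$; the identity $\mathbb{E}[T\phi(x'')\otimes T\phi(x'')] = M$ and the fact that $M(M+\gamma\matI_m)^{-1}$ and $\Kbb(\gamma)=R^*R(R^*R+\gamma\I)^{-1}$ share the same nonzero spectrum then give $\|\Sigma\|_{op} = (c^2/\gamma)\|\Kbb(\gamma)\|_{op}$ and $\Tr(\Sigma) = (c^2/\gamma)\,d_{\rm eff}(\gamma)$.

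Plugging these into the two-sided form of Proposition~\ref{p:bernstein}, with $\delta\to\delta/2$ to convert the $1-2\delta$ probability into $1-\delta$, yields $\|p^{-1}\sum_i W_i\|_{op}\leq t(p)$ with the stated $\beta = \log(4d_{\rm eff}(\gamma)/(\|\Kbb(\gamma)\|_{op}\delta))$; the residual factor $\sqrt{\|\Kbb(\gamma)\|_{op}}\leq 1$ is absorbed into the square-root term of $t(p)$. The simplified estimate $\beta\leq \log(8d_{\rm eff}(\gamma)/\delta)$ when $\gamma\leq\lambda_{\max}(\Lsf)$ follows since then $\|\Kbb(\gamma)\|_{op} = \lambda_{\max}(\Lsf)/(\lambda_{\max}(\Lsf)+\gamma)\geq 1/2$. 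The main technical obstacle is the variance step: it requires recognizing $\mathbb{E}[T\phi(x'')\otimes T\phi(x'')]=M$ and exploiting the shared-spectrum identity to pull the trace through to $d_{\rm eff}(\gamma)$ without introducing extra dependence on $m$.
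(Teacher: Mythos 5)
Your proposal is correct and follows essentially the same route as the paper's proof: rewrite $\Kbb(\gamma)$ and $\hat{\Kbb}(\gamma)$ via the push-through identity, control the whitened deviation $\|(M+\gamma\matI_m)^{-1/2}(\hat M-M)(M+\gamma\matI_m)^{-1/2}\|_{op}$, and transfer the resulting sandwich inequality through conjugation by $R^*$ and $R$, with the same constants $c^2=\kappa^2\|A\|_{op}$, $\Tr(\Sigma)\propto d_{\rm eff}(\gamma)$, $\|\Sigma\|_{op}\propto\|\Kbb(\gamma)\|_{op}$. The only cosmetic difference is that the paper invokes the packaged whitened Bernstein bound of \citet{rudi2018fast} (Proposition~\ref{prop:Bernstein_effective}), whereas you re-derive that bound from the generic operator Bernstein inequality (Proposition~\ref{p:bernstein}) applied to the explicitly whitened terms $W_i$ — the content is the same.
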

  \begin{proof}
    For simplicity, define $\Psi: \calH \to \R^m$ as $\Psi = \sqrt{m}\matLambda S_m$, which is such that $A_\star = \Psi^* \Psi$. Then, we write
    \[
      \hat{\Kbb}  = S \Psi^* (\Psi S^*_p S_p \Psi^* + \gamma \matI_m)^{-1}\Psi S^*,
    \]
    where we recall that $S^*_p S_p = \frac{1}{p}\sum_{i=1}^p \phi(x''_i)\otimes \overline{\phi(x''_i)}$.
    Next, we used the following result.
    \begin{proposition}[Proposition~5 in \cite{rudi2018fast} with minor adaptations]\label{prop:Bernstein_effective}
      Let $\gamma >0$ and $v_1, \dots, v_p$ with $p\geq 1$ be identically distributed random vectors on a separable Hilbert space $H$, such that there exists $c^2>0$ for which $\|v\|_H\leq c^2$ almost surely. Denote by $Q$ the Hermitian operator $Q=\frac{1}{p}\sum_{i=1}^p \E[v_i \otimes \overline{v_i}]$.
      Let $Q_p = \frac{1}{p}\sum_{i=1}^p v_i \otimes \overline{v_i}$. Then for any $\delta\in (0,1]$, the following holds
      \[
        \|(Q + \gamma \mathbb{I})^{-1/2}(Q-Q_p)(Q+\gamma \mathbb{I})^{-1/2}\|_{op} \leq \frac{4c^2 \beta}{3\gamma p}+\sqrt{\frac{2c^2 \beta}{\gamma p}},
      \]
      with probability $1-\delta$ and
      $\beta = \log\frac{4 \Tr\left(Q(Q+\gamma\I)^{-1}\right)}{\delta\|Q(Q+\gamma\I)^{-1}\|_{op}}\leq 8 \frac{c^2/\|Q\|_{op}+\Tr\left(Q(Q+\gamma\I)^{-1}\right)}{ \delta}$.
    \end{proposition}
    Then, we merely define the following vector
    $\matv_i = \Psi \phi(x''_i)$ for $ 1\leq i \leq p$ so that
    \[
      \matQ_p = \Psi S_p^* S_p \Psi^* = \frac{1}{p}\sum_{i=1}^p\Psi\left(\phi(x''_i)\otimes \overline{\phi(x''_i)}\right)\Psi^*.
    \]
    Furthermore, we define $\matQ=  \Psi S^* S\Psi^*$. Also, we have $S^* S = \int_\calX \phi(x)\otimes \overline{\phi(x)}\rmd\mu(x)$, so that $\E[S^*_p S_p] = S^* S$. Hence, it holds $ \E[\matQ_p] = \matQ$. First, by using $\Psi^* \Psi \preceq \|\Psi^* \Psi \|_{op} \I$ , we have
    \[
      \|\matv\|^2_2 = \langle \phi(x), \Psi^* \Psi \phi(x)\rangle\leq \kRKHS(x,x)\|\Psi^* \Psi\|_{op} \leq \kappa^2 \|\Psi^* \Psi\|_{op} = \kappa^2 \|A_\star\|_{op},
    \]
    almost surely.
    Next, we calculate the following quantity
    \begin{align*}
      \Tr\left[\matQ(\matQ+\gamma\matI_m)^{-1}\right]
      &= \Tr\left[ \Psi S^* S\Psi^* (\Psi S^* S\Psi^*+\gamma \matI_m)^{-1} \right]\\
      &= \Tr\left[ S \Psi^* (\Psi S^* S \Psi^*+\gamma \matI_m)^{-1}\Psi S^*\right] \\
      &= \Tr\left[ S \Psi^* \Psi S^*( S \Psi^*\Psi S^*+\gamma \mathbb{I})^{-1}  \right] \\
      &= \Tr\left[\Lsf(\Lsf + \gamma \I)^{-1}\right],
    \end{align*}
    where we used the push-through identity at the next to last equality.

    For obtaining the bound on $\beta$, we first write
    \[
      \|\matQ(\matQ+\gamma\matI_m)^{-1}\|_{op} = \|\Lsf (\Lsf + \gamma \I)^{-1}\|_{op} = (1+ \gamma/\lambda_{\max}(\Lsf))^{-1}.
    \]
    To lower bound the latter quantity we require $\gamma\leq \lambda_{\max}(\Lsf)$ and hence $\|\matQ(\matQ+\gamma\matI_m)^{-1}\|_{op}\geq 1/2$.
    For the remainder of the proof, we show the main matrix inequality.
    For convenience, define the upperbound in Proposition~\ref{prop:Bernstein_effective} as
    \begin{equation}
      t(p) = \frac{4c^2 \beta}{3\gamma p}+\sqrt{\frac{2c^2 \beta}{\gamma p}}.\label{eq:t(p)}
    \end{equation}
    Thanks to Proposition~\ref{prop:Bernstein_effective}, we know that with probability $1-\delta$, we have
    \[
      -t \left(\Psi S S^* \Psi^* + \gamma \matI_m\right)\preceq \Psi S S^* \Psi^* - \Psi S_p^* S_p \Psi^* \preceq t \left(\Psi S S^* \Psi^* + \gamma \matI_m\right),
    \]
    or equivalently
    \[
      \Psi S S^* \Psi^* - t (\Psi S S^* \Psi^* + \gamma\matI_m)\preceq  \Psi S_p^* S_p \Psi^* \preceq \Psi S S^* \Psi^* + t (\Psi S S^* \Psi^* +\gamma\matI_m).
    \]
    By simply adding $\gamma\matI_m$ to these inequalities, we obtain
    \[
      (1-t)(\Psi S S^* \Psi^* + \gamma\matI_m)\preceq  \Psi S_p^* S_p \Psi^* +  \gamma\matI_m \preceq(1 + t) (\Psi S S^* \Psi^* +\gamma\matI_m).
    \]
    Hence, if $t<1$, by a simple manipulation, we find
    \[
      (1+t)^{-1}(\Psi S S^* \Psi^* + \gamma\matI_m)^{-1} \preceq (\Psi S^*_p S_p \Psi^* +\gamma \matI_m)^{-1}\preceq (1-t)^{-1} (\Psi S S^* \Psi^* + \gamma\matI_m)^{-1}.
    \]
    By acting with $S \Psi^*$ on the left and $\Psi S^* $ on the right, and then, using the push-through identity
    \[
      S \Psi^*(\Psi S S^* \Psi^* + \gamma\matI_m)^{-1}\Psi S^* = (S \Psi^*\Psi S  + \gamma\I)^{-1}S^* \Psi^*\Psi S^*,
    \]
    the desired result follows.
    \end{proof}

    We can now prove Theorem~\ref{thm:approx_correlation_kernel_simplified} by simplifying some of the bounds given in Theorem~\ref{thm:approx_correlation_kernel}.
    \begin{proof}[Proof of Theorem~\ref{thm:approx_correlation_kernel_simplified}]
      Consider the upper bound given in~\eqref{eq:t(p)}. We will simplify it to capture the dominant behavior as $p\to +\infty$. Assume $\sqrt{\frac{2c^2 \beta}{\gamma p}}<1$, or equivalently
      $
        p> \frac{2c^2 \beta}{\gamma}.
      $
      In this case, we give a simple upper bound on $t(p)$ as follows
      \[
        t(p)< \left(\frac{2}{3} + 1\right) \sqrt{\frac{2c^2 \beta}{\gamma p}}< \sqrt{\frac{8c^2 \beta}{\gamma p}},
      \]
      so that we avoid manipulating cumbersome expressions. Thus, if we want the latter bound to be smaller than $\epsilon \in (0,1)$, we require
      \[
        p \geq \frac{8c^2 \beta}{\gamma \epsilon^2},
      \]
      which is indeed larger than $\frac{2c^2 \beta}{\gamma}$ since $1/\epsilon> 1$. Thus, by using the same arguments as in the proof of Theorem~\ref{thm:approx_correlation_kernel}, we have the multiplicative error bound
      \[
        \frac{1}{1+\epsilon}\Kbb(\gamma)\preceq \hat{\Kbb} (\gamma) \preceq \frac{1}{1-\epsilon} \Kbb(\gamma),
      \]
      with probability at least $1-\delta$ if
      \[
        p \geq \frac{8c^2 \beta}{\gamma \epsilon^2} = \frac{8\kappa^2 \| A\|_{op}}{\gamma \epsilon^2}\log\left( \frac{4 d_{\rm eff}(\gamma)}{\delta \| \Kbb \|_{op}}\right)
      \]
      where the last equality is obtained by substituting
      $c^2 = \kappa^2 \|A\|_{op}$ and
    $
    \beta = \log\left(\frac{4 d_{\rm eff}(\gamma)}{\delta\|\Kbb(\gamma)\|_{op}}\right)
    $
    given in Theorem~\ref{thm:approx_correlation_kernel}.
    \end{proof}
    \section{Supplementary empirical results \label{supp:OtherSimulations}}
    \subsection{Finer analysis of the Gaussian L-ensemble estimation problem of Section~\ref{sec:simulations}}
    In this section, we report results corresponding to the simulation setting of Section~\ref{sec:simulations} with $\rho = 100$.
    \paragraph{Intensity estimation from several DPP samples.}
    In Figure~\ref{supp:fig:s=10Intensity}, we replicate the setting of Figure~\ref{fig:Intensity1Sample} with $s=3$ and $s = 10$ DPP samples and a smaller regularization parameter. The estimated intensity is then closer to the ground truth ($\rho = 100$) for a large value of $s$, although there are small areas of high intensity at the boundary of the domain $[0,1]^2$. A small improvement is also observed by increasing $s$ from $3$ (left) to $10$ (right), namely the variance of the estimated intensity tends to decrease when $s$ increases.
    \begin{figure}[h!]
      \centering
      \includegraphics[scale = 0.3]{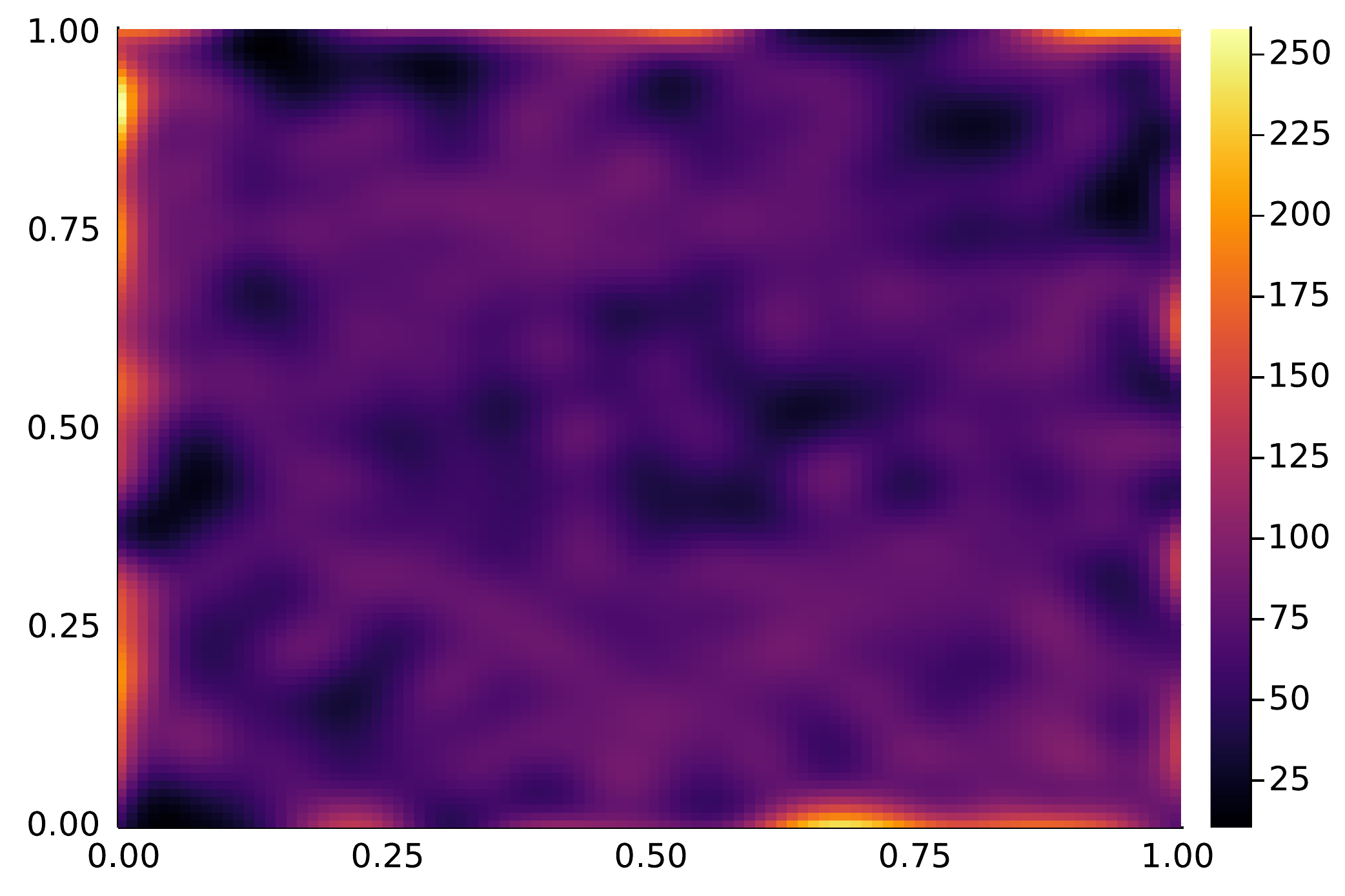}
      \hfill
      \includegraphics[scale = 0.3]{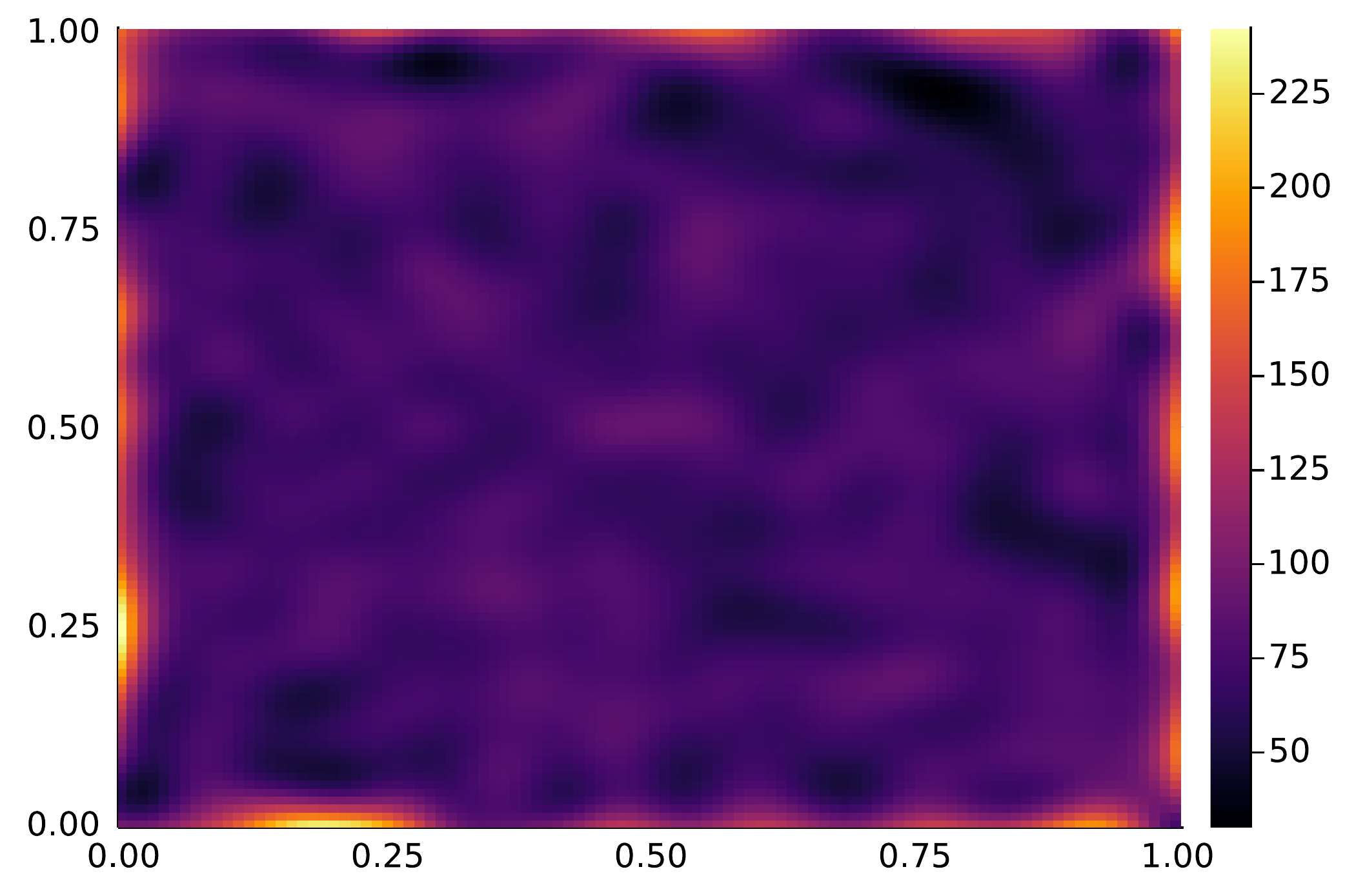}
      \includegraphics[scale = 0.3]{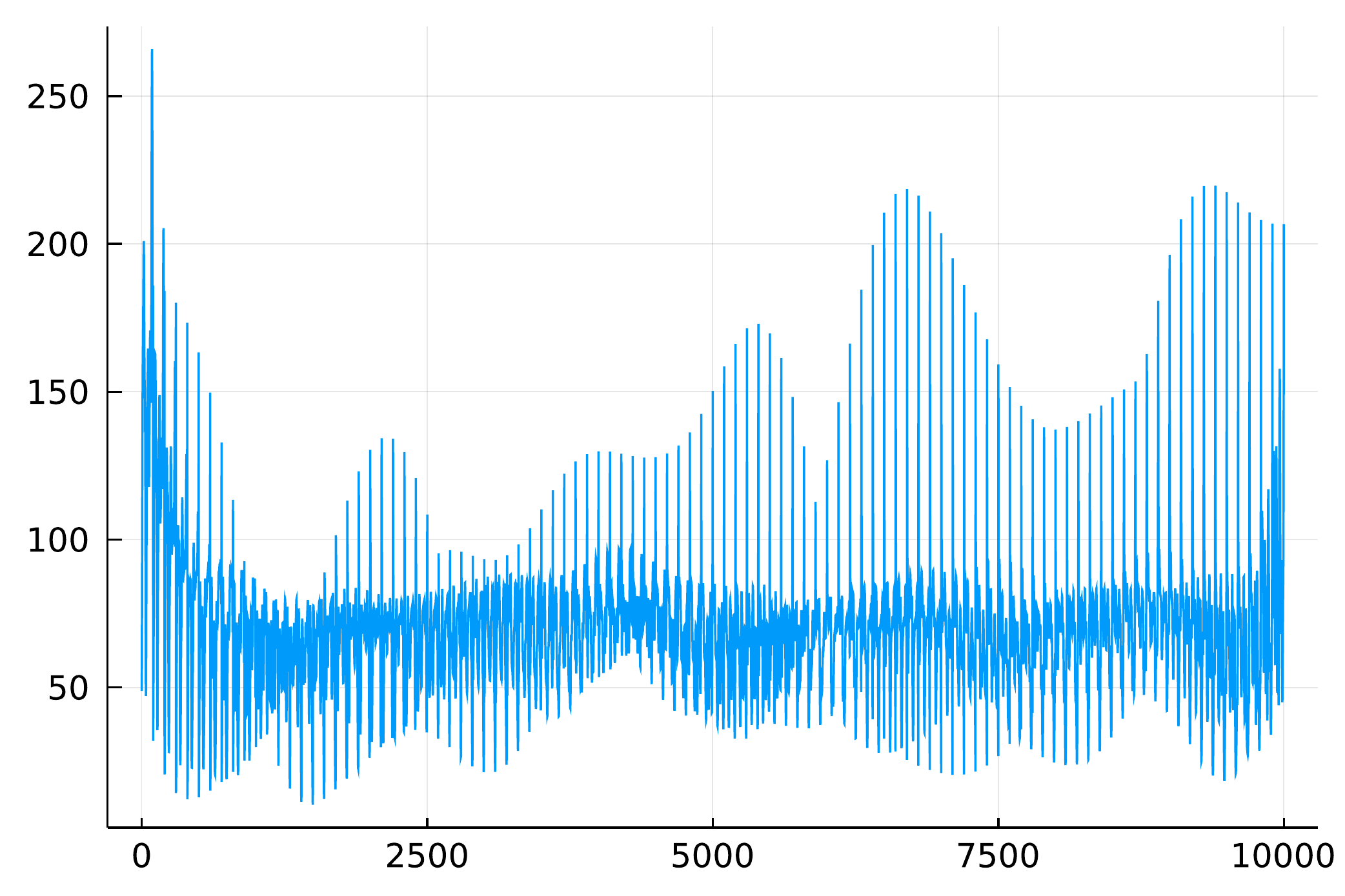}
      \hfill
      \includegraphics[scale = 0.3]{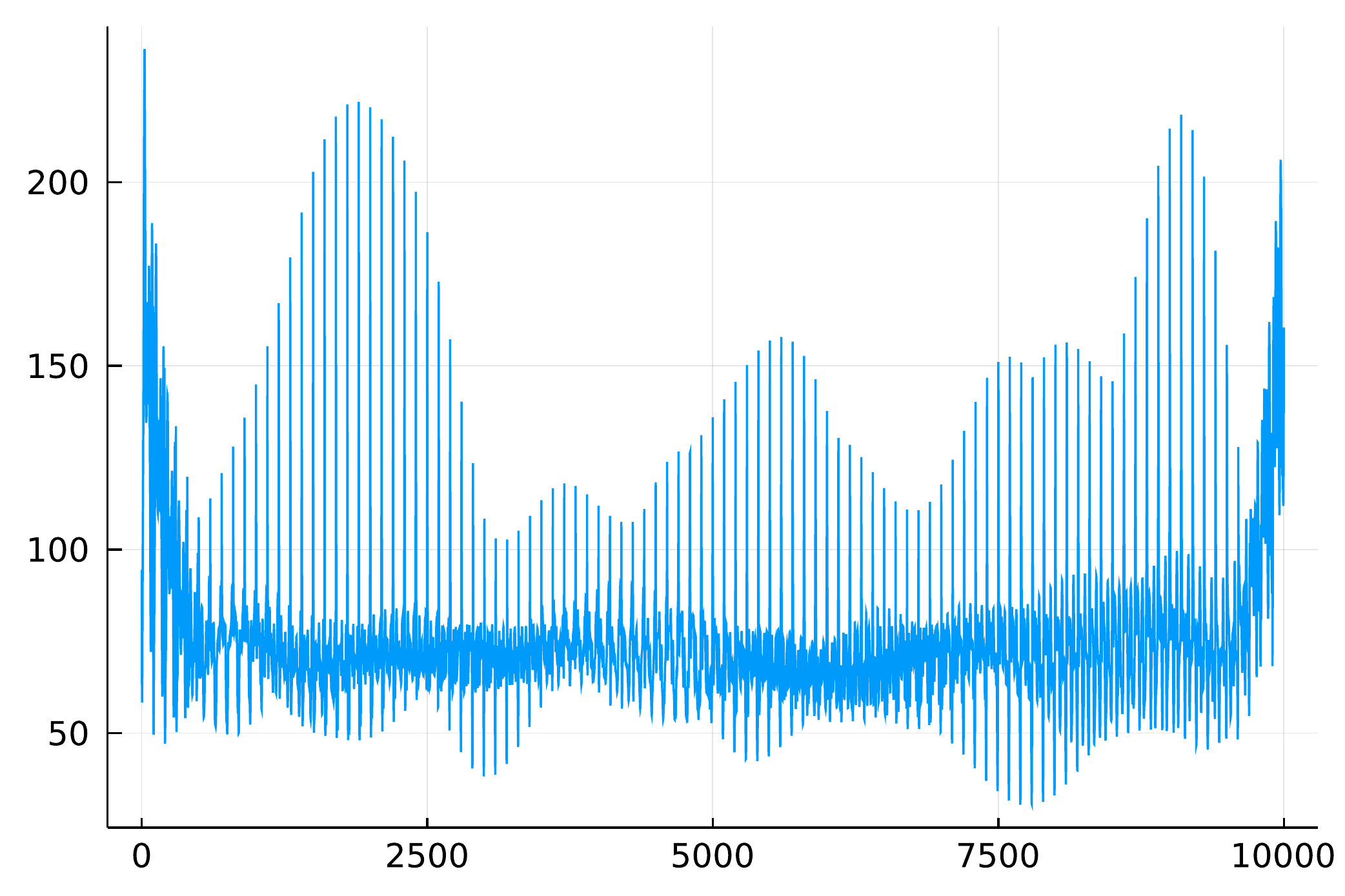}
      \caption{Effet of the number of samples on intensity estimation (with $\sigma = 0.1$ and $\lambda= 10^{-4}$, as in Figure~\ref{fig:Intensity1Sample}). Left column: estimation from $s=3$ DPP samples. Right column: estimation from $s=10$ DPP samples. The first row is a heatmap of the intensity $\hat{\ksf}(x,x)$ on a $100\times 100$ grid within $[0,1]^2$. \MF{The second row is the same data matrix in a flattened format, that is, each column of the $100\times 100$ data matrix is concatenated to form a $10000\times 1$ matrix whose entries are plotted.} Notice that the sharp peaks are due to boundary effects. \MF{These peaks are regularly spaced due to the column-wise unfolding.}\label{supp:fig:s=10Intensity}}
    \end{figure}
    \begin{figure}[h!]
      \centering
      \includegraphics[scale = 0.3]{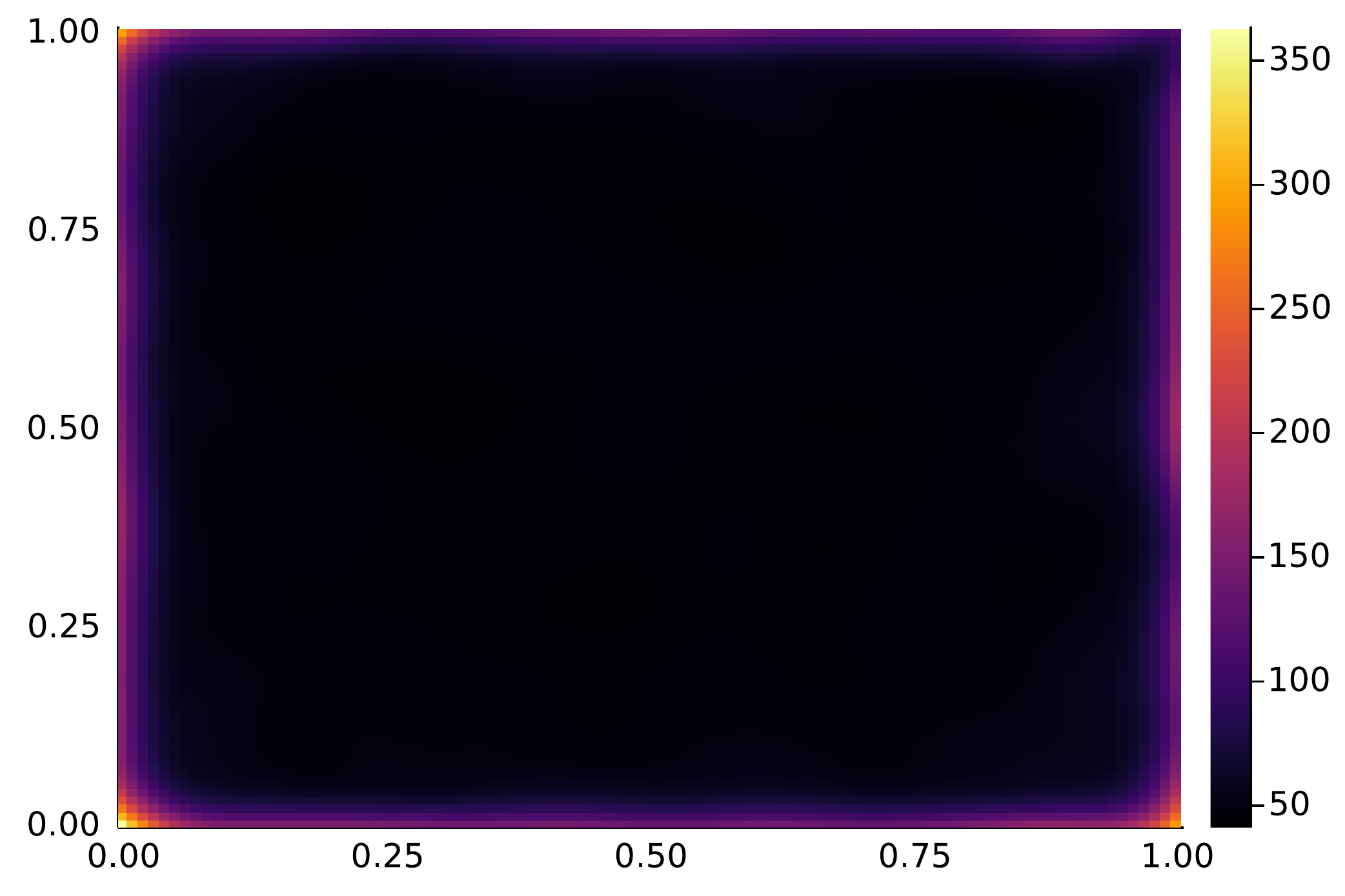}
      \hfill
      \includegraphics[scale = 0.3]{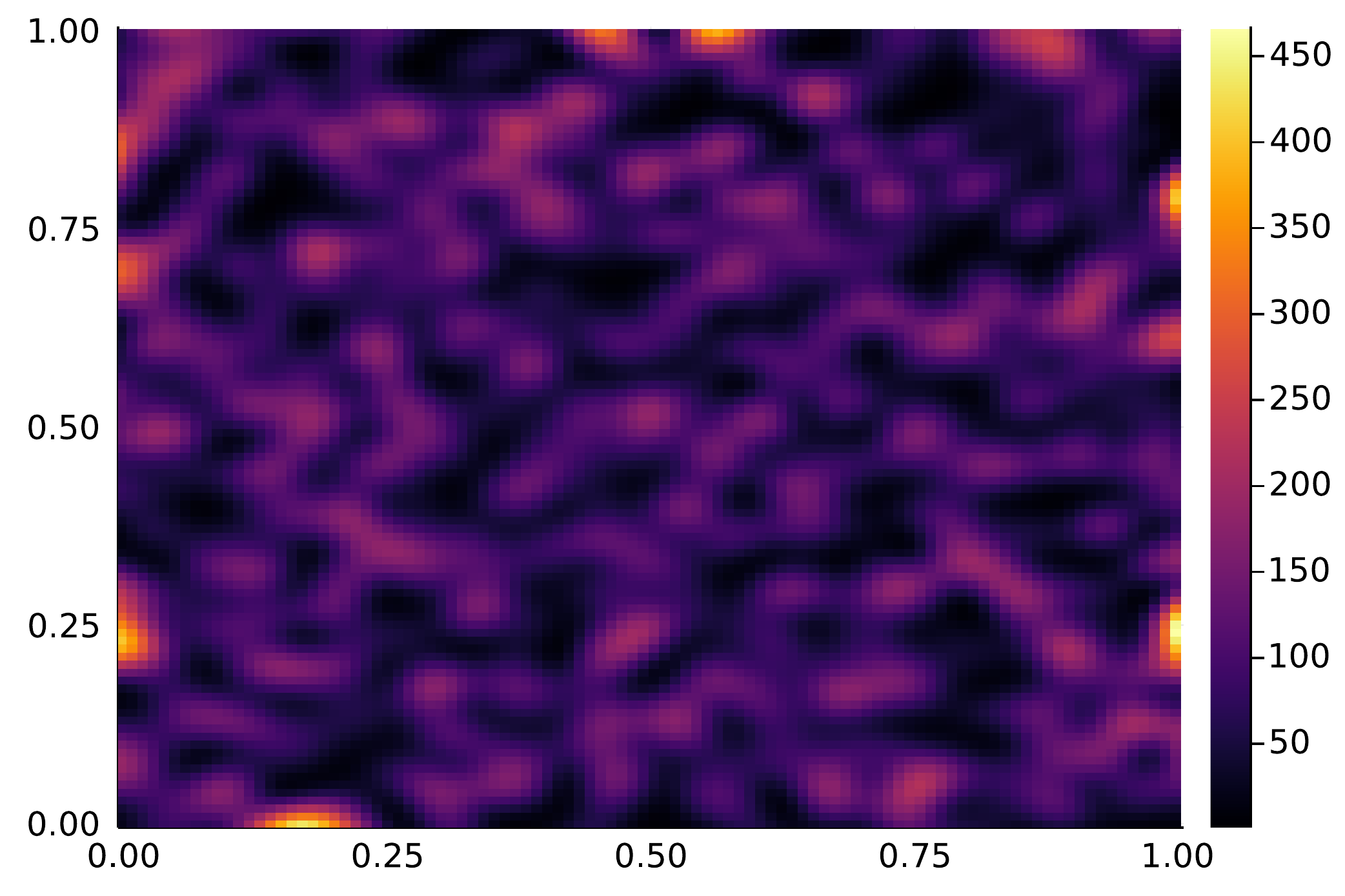}
      \hfill
      \includegraphics[scale = 0.3]{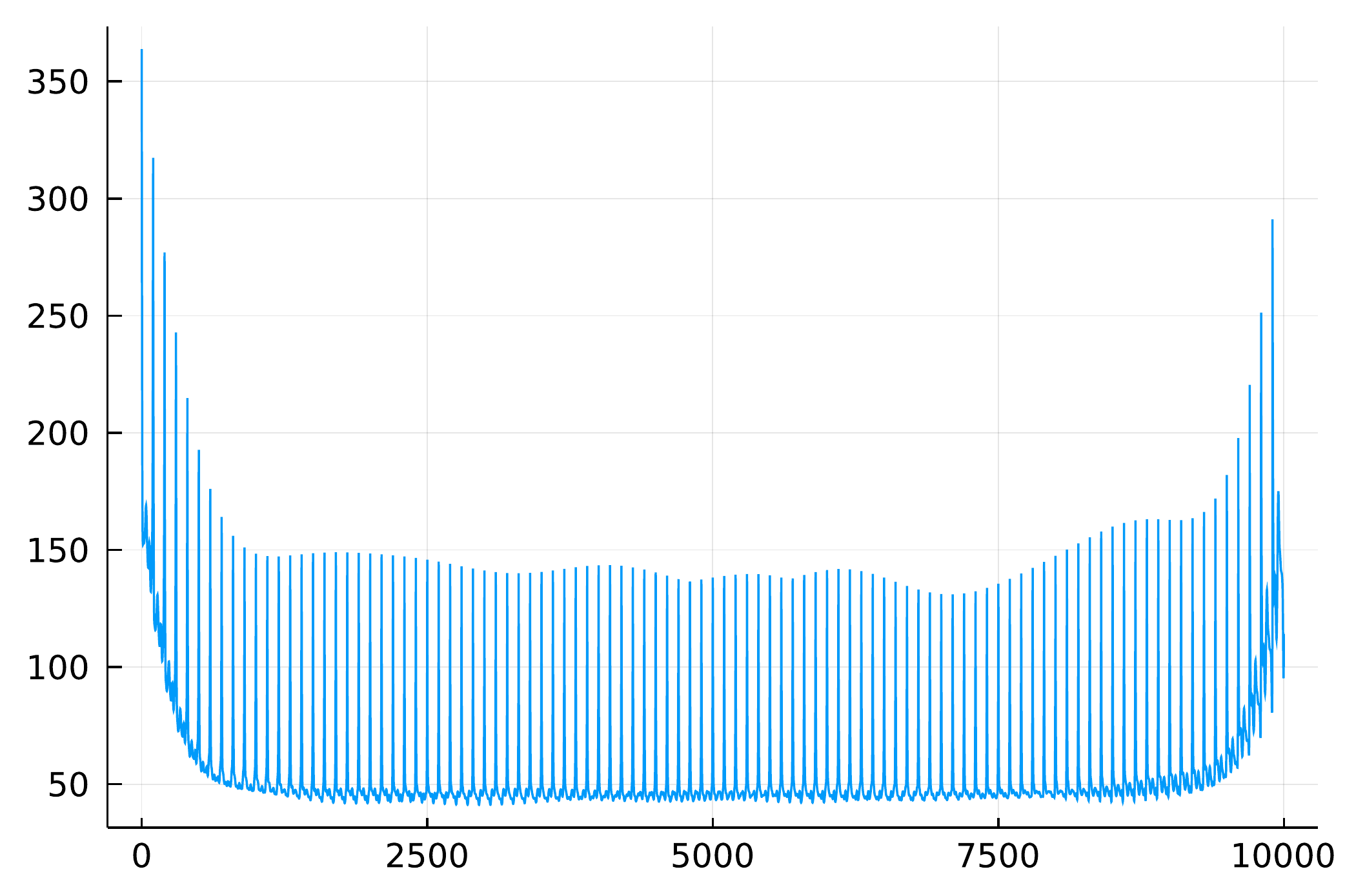}\hfill
      \includegraphics[scale = 0.3]{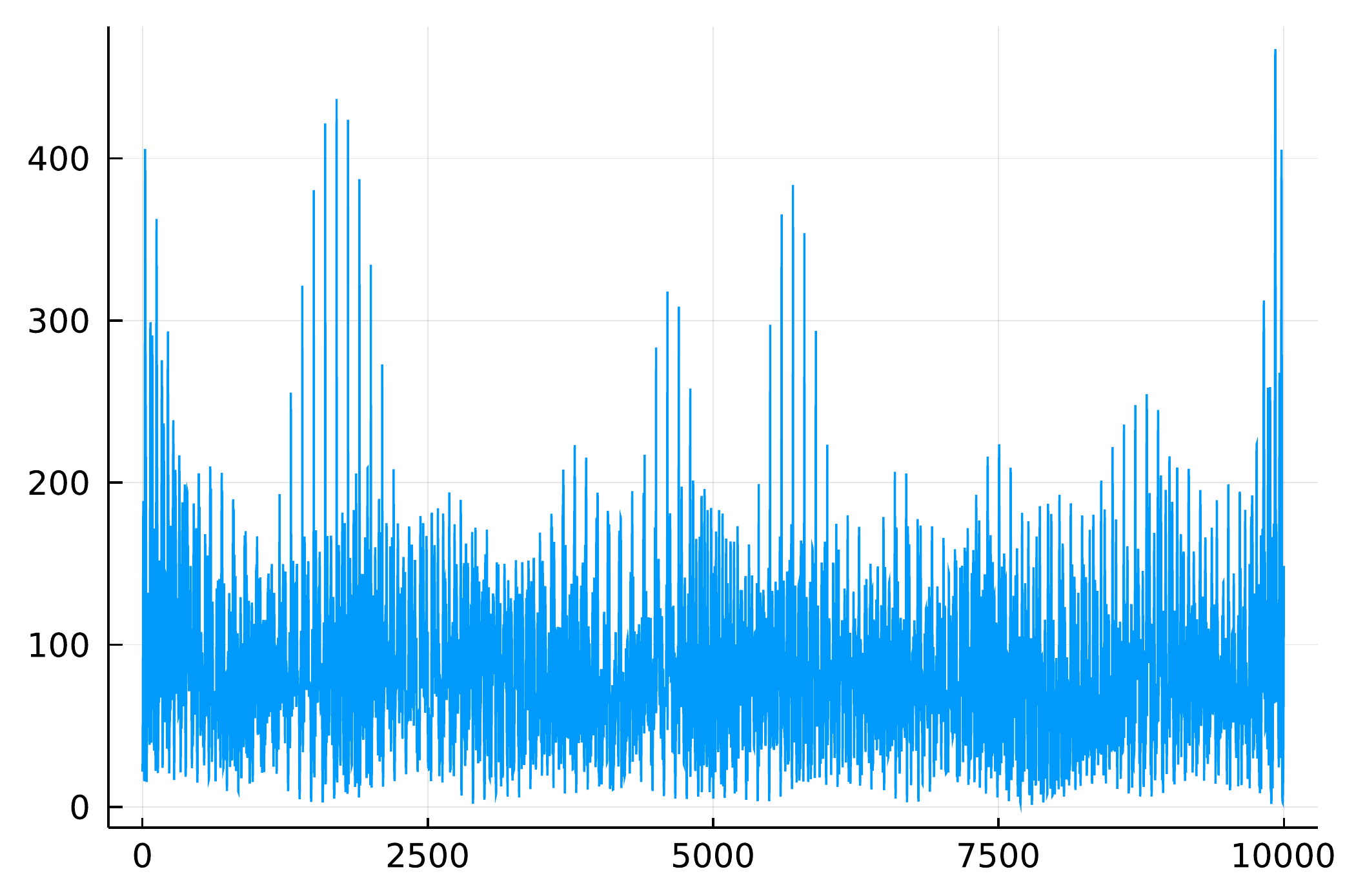}
      \caption{Effet of the bandwidth $\sigma$ on intensity estimation. Left column:  large value ($\sigma = 0.15$, $\lambda= 10^{-4}$) with $s = 10$ DPP samples.  Right column:
      small value ($\sigma = 0.05$, $\lambda= 10^{-4}$) with $s = 3$ DPP samples. The first row is a heatmap of the intensity on $[0,1]^2$. \MF{The second row is the same data matrix in a flattened format, that is, each column of the $100\times 100$ data matrix is concatenated to form a $10000\times 1$ matrix whose entries are plotted.}  Notice that the sharp peaks at the bottom row are due to boundary effects. \MF{These peaks are regularly spaced due to the column-wise unfolding.}\label{supp:fig:s=10IntensitysigmaLargeSmall}}
    \end{figure}
    In Figure~\ref{supp:fig:s=10IntensitysigmaLargeSmall}, we illustrate the intensity estimation in the case of a large and small $\sigma$, respectively on the left and right columns. As expected, a large value of $\sigma$ has a regularization effect but also leads to an underestimation of the intensity. On the contrary, a small value of $\sigma$ seems to cause inhomogeneities in the estimated intensity.
    \begin{figure}[h!]
      \centering
      \includegraphics[scale = 0.3]{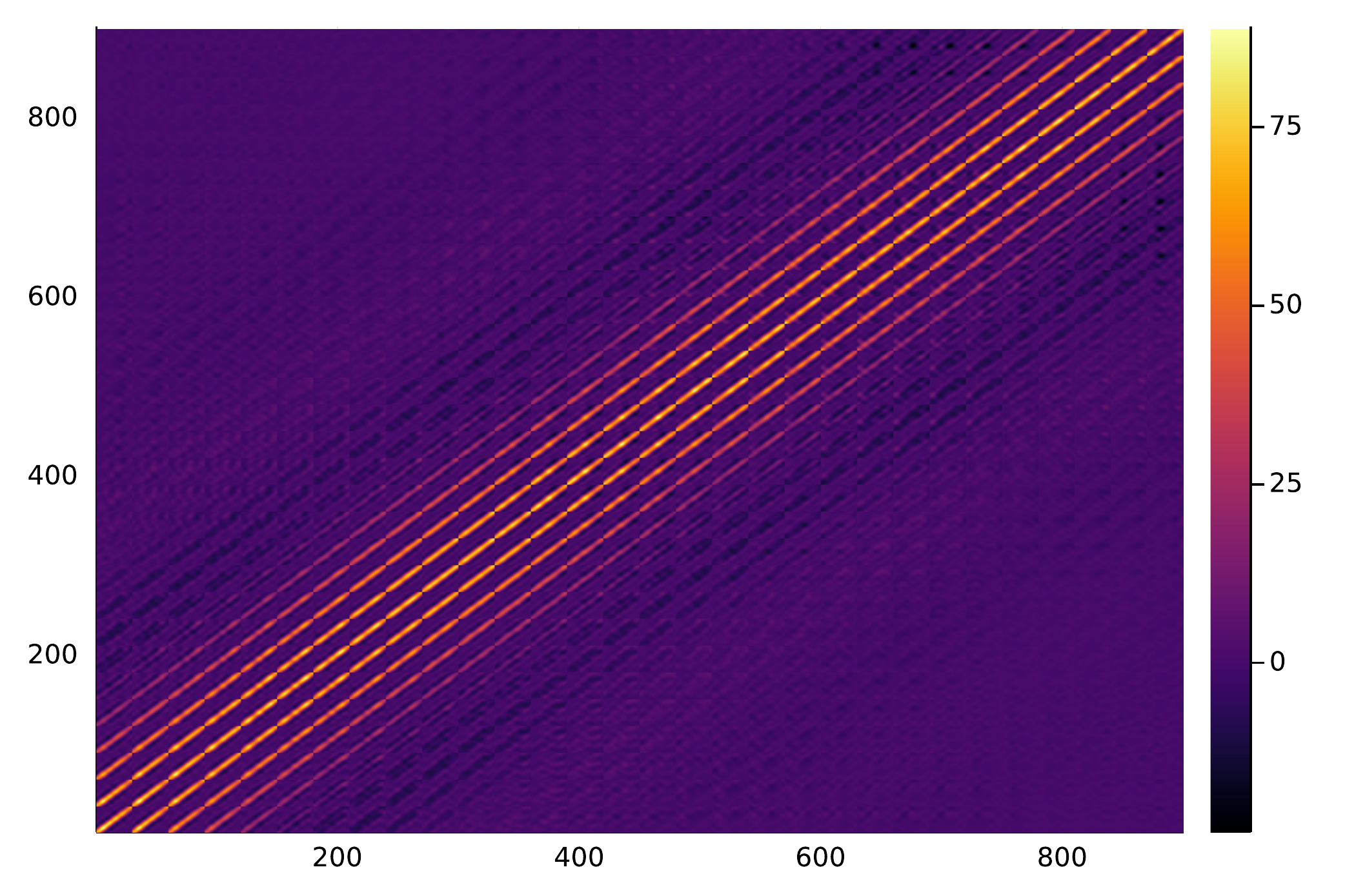}
      \hfill
      \includegraphics[scale = 0.3]{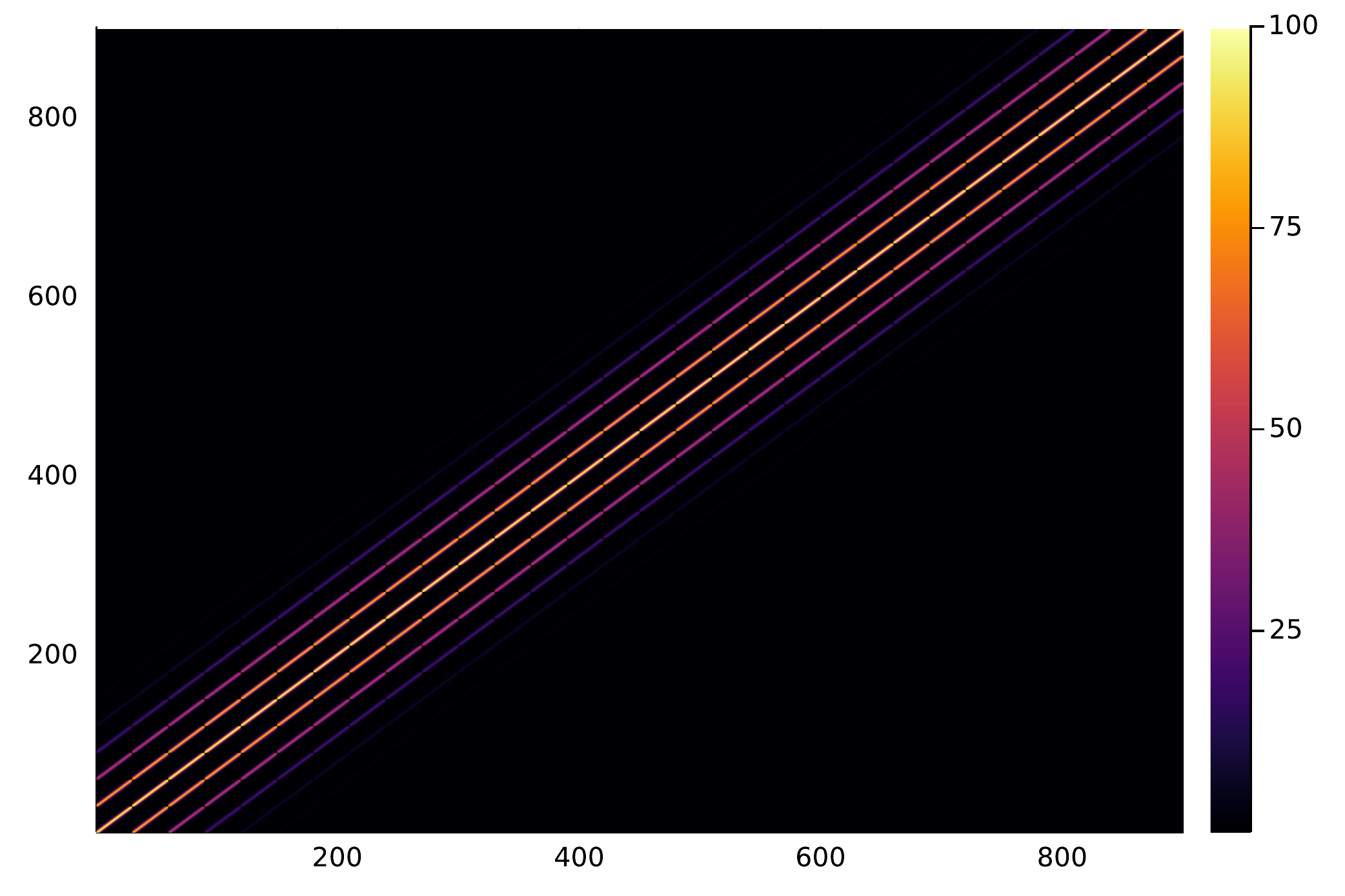}
      \includegraphics[scale = 0.3]{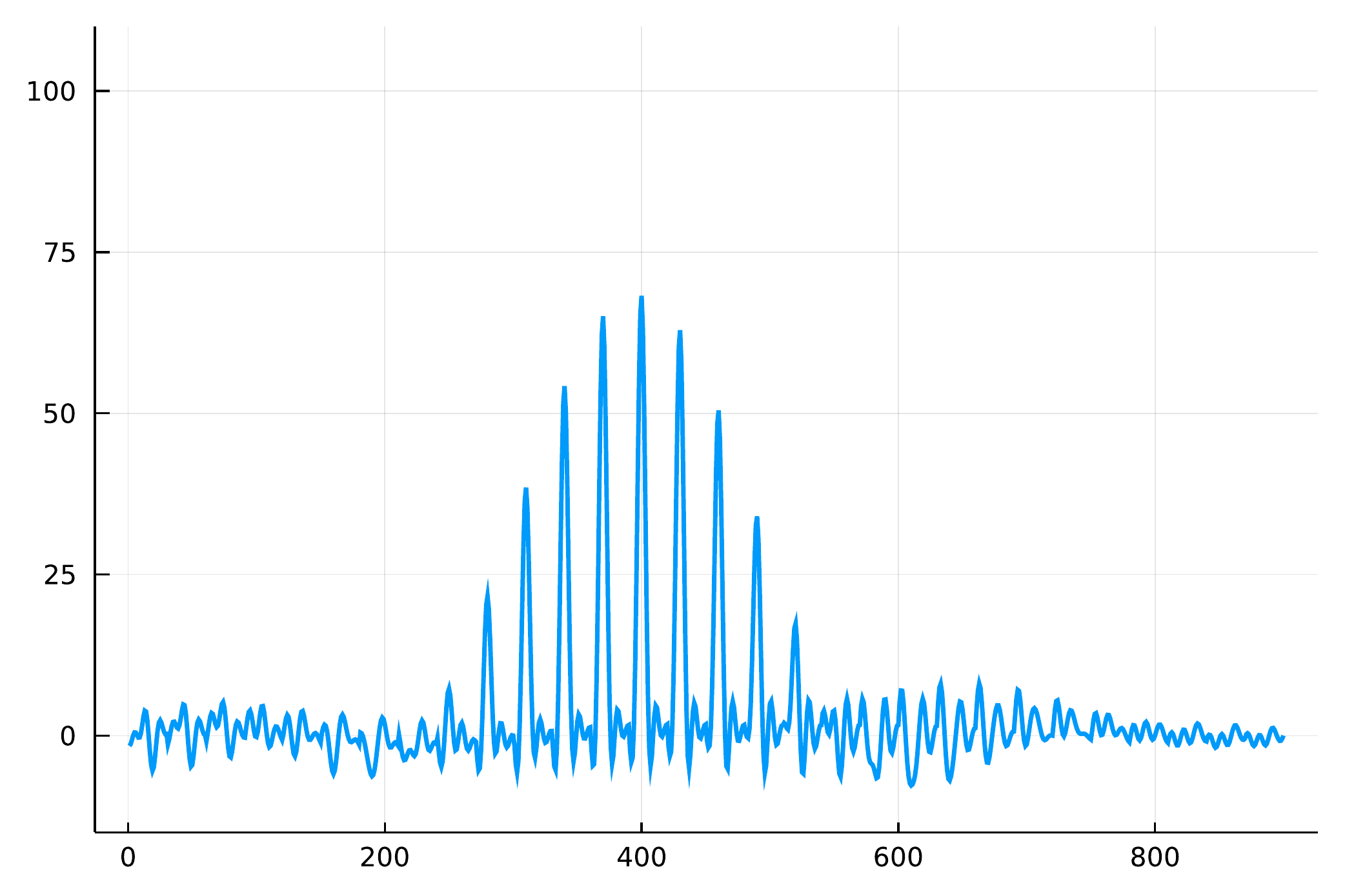}
      \hfill
      \includegraphics[scale = 0.3]{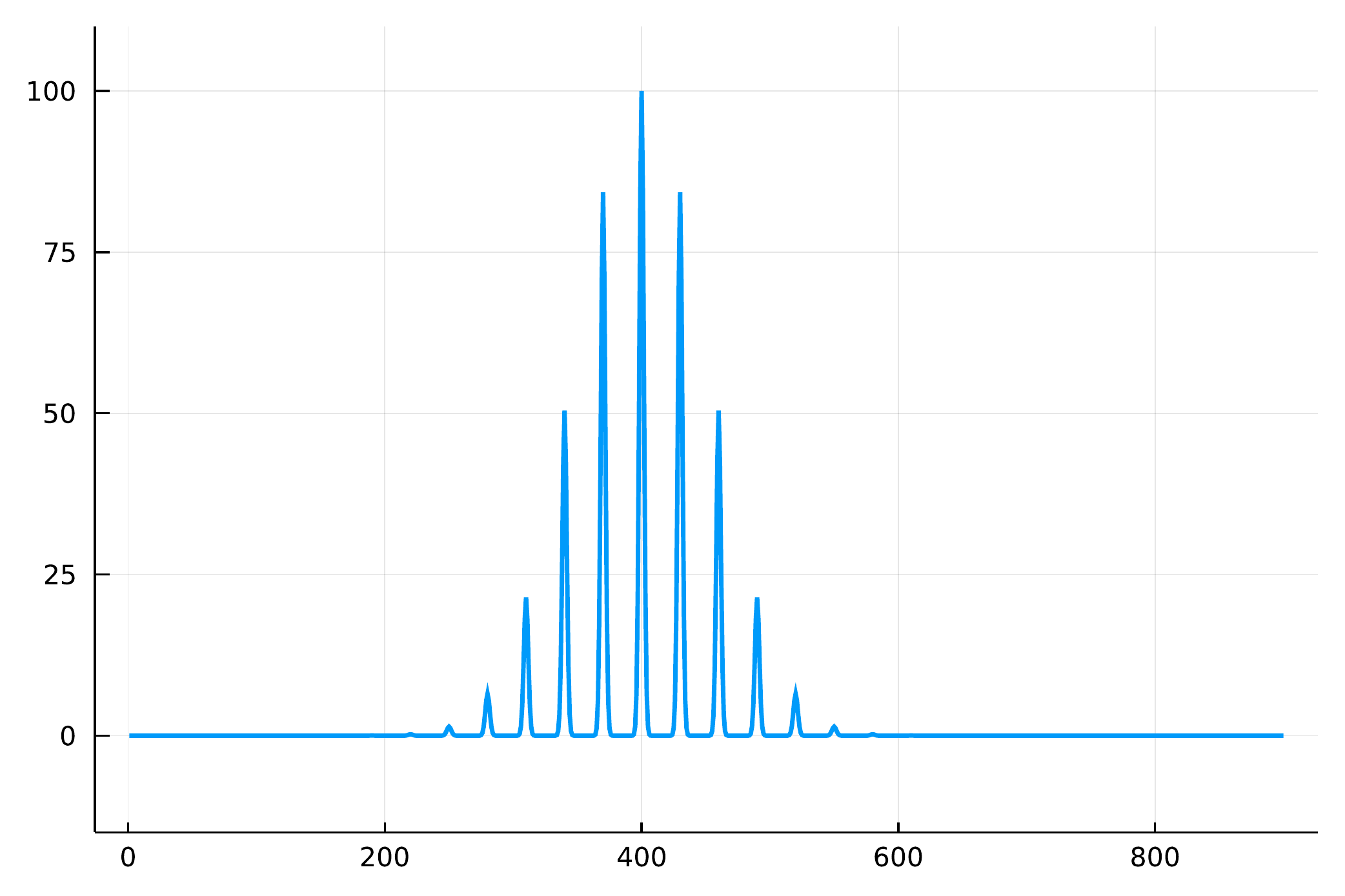}
      \caption{Correlation kernel estimation without boundary effect. We display a Gram matrix of the estimated correlation kernel \MF{$[\hat{\ksf}(x,x')]_{x,x'\in\text{grid}}$} (left column) and ground truth correlation kernel \MF{$[{\ksf}(x,x')]_{x,x'\in\text{grid}}$} (right column) on a $30\times 30$ grid within $[0.2,0.8]^2$ for the example of Figure~\ref{supp:fig:s=10Intensity} with $s=10$, $\sigma = 0.1$ and $\lambda = 10^{-4}$. The first row is a heatmap of the Gram matrices, while the second row is a one-dimensional slice of the above Gram matrices at index $400$. \MF{This second row of plots allows to visually compare the bell shape of the approximate and exact correlation kernels. The apparent discontinuities in the Gaussian kernel shape are an artifact due to the manner the grid points are indexed.} Notice that the correlation kernels are evaluated on a smaller domain within $[0,1]^2$ in order to remove boundary effects.\label{supp:fig:s=10GramMatrix}}
    \end{figure}
    \paragraph{Correlation structure estimation.} The general form of the correlation kernel is also important.  In order to visualize the shape of the correlation kernel $\hat{\ksf}(x,y)$, we display in Figure~\ref{supp:fig:s=10GramMatrix} the Gram matrices of the estimated \MF{$[\hat{\ksf}(x,x')]_{x,x'\in\text{grid}}$} and ground truth correlation kernels \MF{$[\ksf(x,x')]_{x,x'\in\text{grid}}$} on a square grid, for the same parameters as in Figure~\ref{supp:fig:s=10Intensity} (RHS). After removing the boundary effects, we observe that the estimated correlation kernel shape closely resembles the ground truth although the decay of the estimated kernel seems to be a bit slower. Moreover, we observe some `noise' in the tail of the estimated kernel.  Again, the intensity of the estimated process is also a bit underestimated.

    \MF{In the context of point processes, it is common to compute summary statistics from samples to `understand' the correlation structure of a stationary process. It is strictly speaking not possible to calculate e.g.\ Ripley’s K function (see~\citet{Statspats}) since our estimated correlation kernel is not stationary, that is, there exits no function $t(\cdot)$ such that $\hat{\ksf}(x,y) = t(x-y)$.}
    \subsection{Convergence of the regularized Picard iteration}
    In particular, we illustrate the convergence of the regularized Picard iteration to the exact solution given in Proposition~\ref{prop:exact}. In Figure~\ref{supp:fig:convergence}, we solve the problem~\eqref{eq:MLE_Problem_Penalized} in the case $\mathcal I=\mathcal C$ with $s=1$ where $\mathcal{C}$ is the unique DPP sample. For simplicity, we select the DPP sample used in Figure~\ref{fig:Intensity1Sample} ($\rho = 100$, bottom row).
    \begin{figure}[h!]
      \centering
      \includegraphics[scale = 0.3]{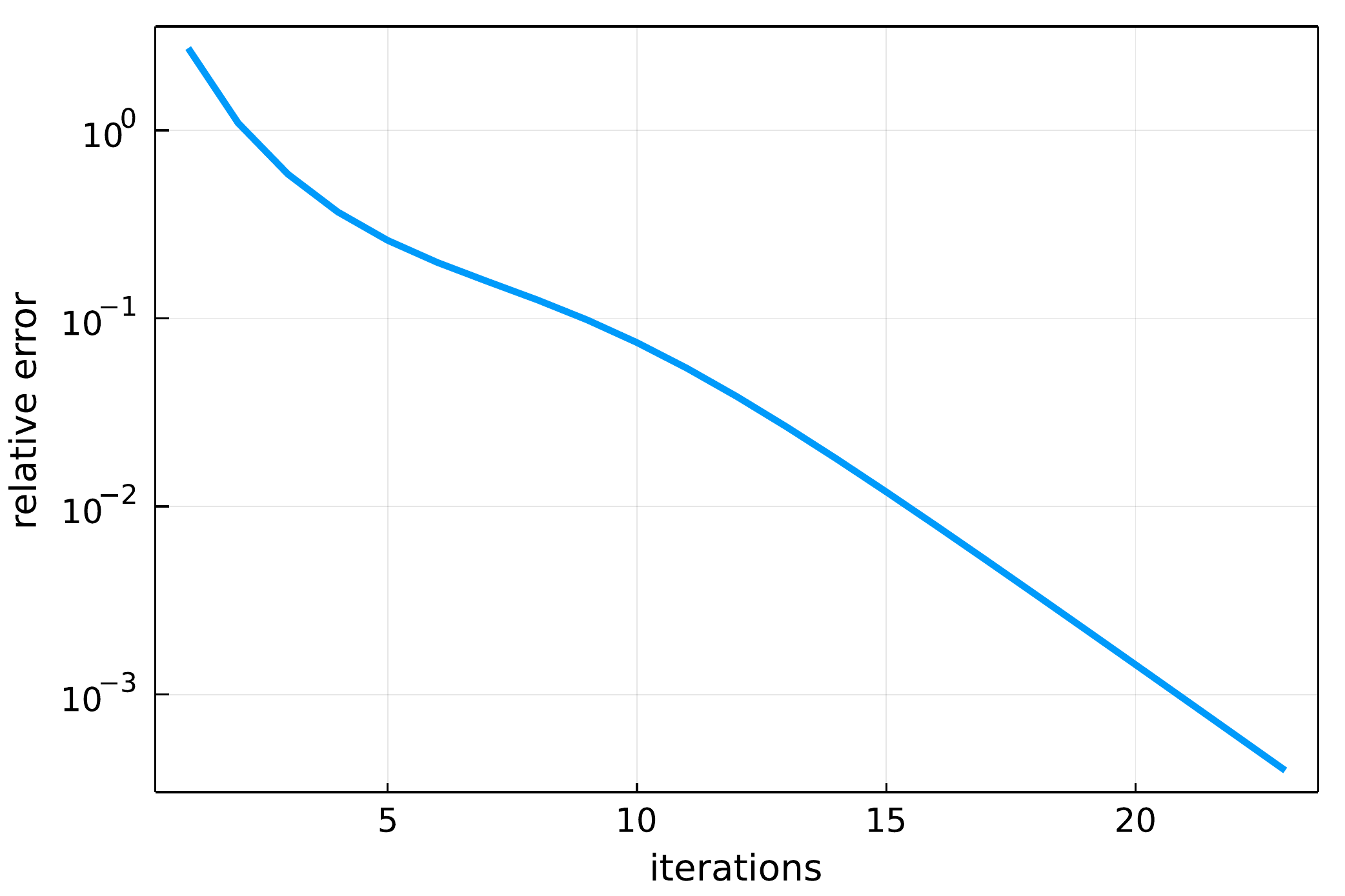}
      \hfill
      \includegraphics[scale = 0.3]{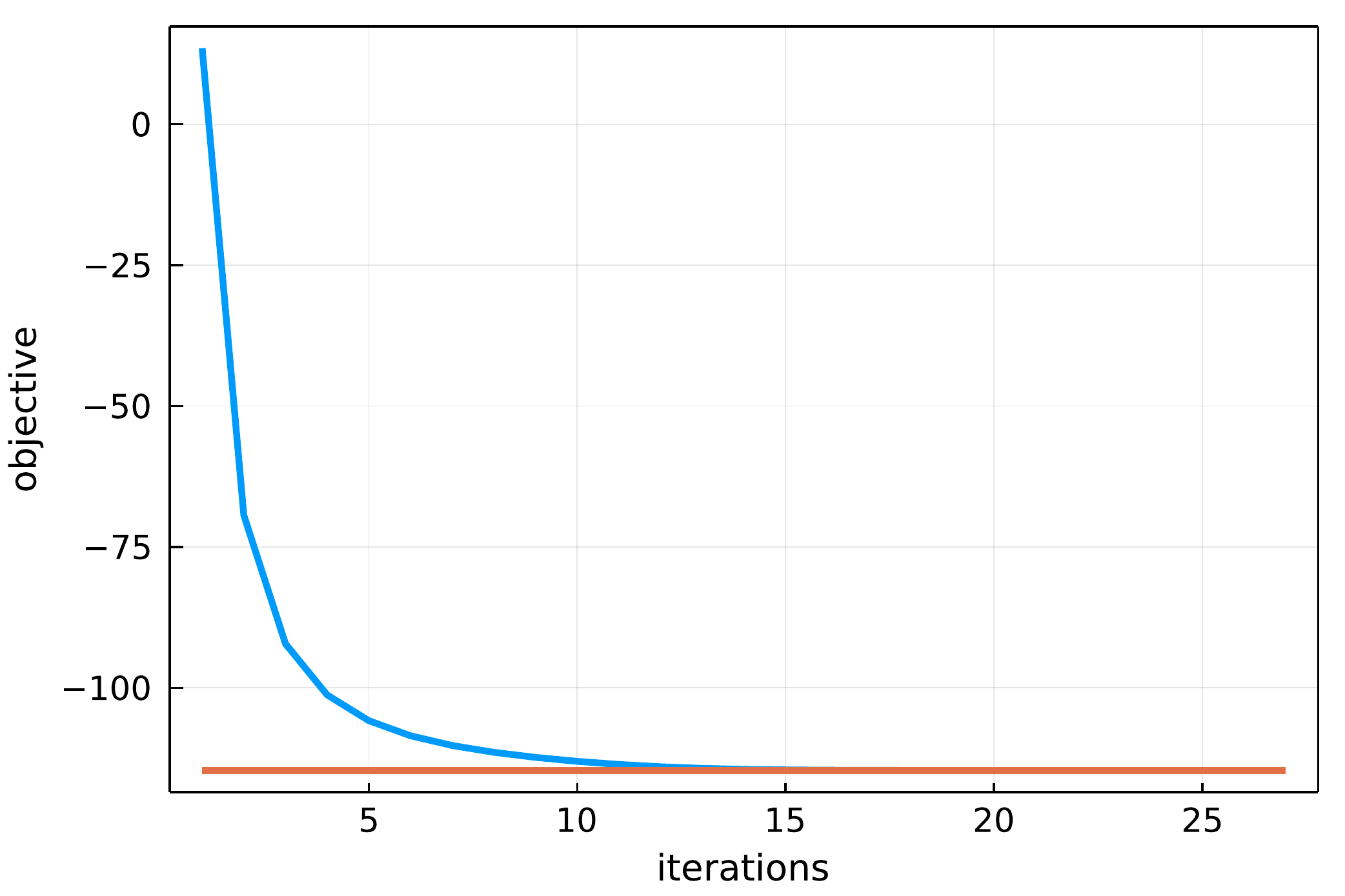}
      \caption{Convergence towards the exact solution for the example of Figure~\ref{fig:Intensity1Sample} (bottom row) with the parameters $\sigma = 0.1$ and $\lambda= 0.1$. Left: Relative error  with the exact solution in Frobenius norm $\|\matB-\matB_{\rm exact}\|_{F}/\|\matB_{\rm exact}\|_{F}$ w.r.t.\ the iteration number. Right: Objective value (blue line) and optimal objective (red line) vs iteration number. The stopping criterion is here $\mathsf{tol} = 10^{-7}$. \label{supp:fig:convergence}}
    \end{figure}
    This illustrates that the regularized Picard iteration indeed converges to the unique solution in this special case.

    \subsection{Complexity and computing ressources}
    \paragraph{Complexity.}
    The space complexity of our method is dominated by the space complexity of storing the kernel matrix $\matK$ in Algorithm~\ref{alg:EstimateL}, namely $O(m^2)$ where we recall that $m= |\mathcal{Z}|$ with $\mathcal{Z} \triangleq  \cup_{\ell = 1 }^{s} \calC_\ell \cup \calI$. The time complexity of one iteration of ~\eqref{eq:regPicard_with_B} is dominated by the matrix square root, which is similar to the eigendecomposition, i.e., $O(m^3)$. \MF{The time complexity of Algorithm~\ref{alg:EstimateK} is dominated by the Cholesky decomposition and the linear system solution, i.e.,  $O(m^3)$.}

    \paragraph{Computing ressources.} A typical computation time is $65$ minutes to solve the example of Figure~\ref{fig:Intensity1Sample} (bottom row) on $8$ virtual cores of a server with two $18$ core Intel Xeon E5-2695 v4s ($2.1$ Ghz). The computational bottleneck is the regularized Picard iteration.
\FloatBarrier
\bibliography{References,stats}

\begin{thebibliography}{41}
\providecommand{\natexlab}[1]{#1}
\providecommand{\url}[1]{\texttt{#1}}
\expandafter\ifx\csname urlstyle\endcsname\relax
  \providecommand{\doi}[1]{doi: #1}\else
  \providecommand{\doi}{doi: \begingroup \urlstyle{rm}\Url}\fi

\bibitem[Affandi et~al.(2014)Affandi, Fox, Adams, and
  Taskar]{pmlr-v32-affandi14}
R.~H. Affandi, E.~Fox, R.~Adams, and B.~Taskar.
\newblock Learning the parameters of determinantal point process kernels.
\newblock In E.~P. Xing and T.~Jebara, editors, \emph{Proceedings of the 31st
  International Conference on Machine Learning}, volume~32 of \emph{Proceedings
  of Machine Learning Research}, pages 1224--1232, Bejing, China, 22--24 Jun
  2014. PMLR.
\newblock URL \url{http://proceedings.mlr.press/v32/affandi14.html}.

\bibitem[Anquetil et~al.(2020)Anquetil, Gartrell, Rakotomamonjy, Tanielian, and
  Calauz{\`e}nes]{anquetil2020wasserstein}
L.~Anquetil, M.~Gartrell, A.~Rakotomamonjy, U.~Tanielian, and
  C.~Calauz{\`e}nes.
\newblock Wasserstein learning of determinantal point processes.
\newblock In \emph{Learning Meets Combinatorial Algorithms at NeurIPS2020},
  2020.
\newblock URL \url{https://openreview.net/forum?id=fabfWf3JJQi}.

\bibitem[Baddeley et~al.(2015)Baddeley, Rubak, and Turner]{Statspats}
A.~Baddeley, E.~Rubak, and R.~Turner.
\newblock \emph{Spatial Point Patterns: Methodology and Applications with {R}}.
\newblock Chapman and Hall/CRC Press, London, 2015.
\newblock URL
  \url{https://www.routledge.com/Spatial-Point-Patterns-Methodology-and-Applications-with-R/Baddeley-Rubak-Turner/9781482210200/}.

\bibitem[Baraud(2013)]{Baraud13}
Y.~Baraud.
\newblock Estimation of the density of a determinantal process.
\newblock \emph{Confluentes Mathematici}, 5\penalty0 (1):\penalty0 3--21, 2013.
\newblock \doi{10.5802/cml.1}.
\newblock URL \url{http://www.numdam.org/articles/10.5802/cml.1/}.

\bibitem[Bardenet and Titsias(2015)]{BaTi15}
R.~Bardenet and M.~K. Titsias.
\newblock Inference for determinantal point processes without spectral
  knowledge.
\newblock In \emph{Advances in Neural Information Processing Systems (NIPS)},
  pages 3375--3383, 2015.
\newblock URL
  \url{https://proceedings.neurips.cc/paper/2015/file/2f25f6e326adb93c5787175dda209ab6-Paper.pdf}.

\bibitem[Belhadji et~al.(2019)Belhadji, Bardenet, and Chainais]{BeBaCh19}
A.~Belhadji, R.~Bardenet, and P.~Chainais.
\newblock Kernel quadrature with determinantal point processes.
\newblock In \emph{Advances in Neural Information Processing Systems
  (NeurIPS)}, 2019.
\newblock URL
  \url{https://proceedings.neurips.cc/paper/2019/file/7012ef0335aa2adbab58bd6d0702ba41-Paper.pdf}.

\bibitem[Belhadji et~al.(2020{\natexlab{a}})Belhadji, Bardenet, and
  Chainais]{BeBaCh20}
A.~Belhadji, R.~Bardenet, and P.~Chainais.
\newblock Kernel interpolation with continuous volume sampling.
\newblock In \emph{International Conference on Machine Learning (ICML)},
  2020{\natexlab{a}}.
\newblock URL \url{http://proceedings.mlr.press/v119/belhadji20a.html}.

\bibitem[Belhadji et~al.(2020{\natexlab{b}})Belhadji, Bardenet, and
  Chainais]{BeBaCh20b}
A.~Belhadji, R.~Bardenet, and P.~Chainais.
\newblock A determinantal point process for column subset selection.
\newblock \emph{Journal of Machine Learning Research (JMLR)},
  2020{\natexlab{b}}.
\newblock URL \url{http://jmlr.org/papers/v21/19-080.html}.

\bibitem[Biscio and Lavancier(2017)]{BiscioLavancier}
C.~A.~N. Biscio and F.~Lavancier.
\newblock Contrast estimation for parametric stationary determinantal point
  processes.
\newblock \emph{Scandinavian Journal of Statistics}, 44\penalty0 (1):\penalty0
  204--229, 2017.
\newblock \doi{https://doi.org/10.1111/sjos.12249}.
\newblock URL \url{https://onlinelibrary.wiley.com/doi/abs/10.1111/sjos.12249}.

\bibitem[Bornemann(2010)]{Bor10}
F.~Bornemann.
\newblock On the numerical evaluation of {F}redholm determinants.
\newblock \emph{Mathematics of Computation}, 79\penalty0 (270):\penalty0
  871--915, 2010.
\newblock URL
  \url{https://www.ams.org/journals/mcom/2010-79-270/S0025-5718-09-02280-7/}.

\bibitem[Brunel et~al.(2017{\natexlab{a}})Brunel, Moitra, Rigollet, and
  Urschel]{BMRU17Sub}
V.-E. Brunel, A.~Moitra, P.~Rigollet, and J.~Urschel.
\newblock Maximum likelihood estimation of determinantal point processes.
\newblock \emph{arXiv preprint arXiv:1701.06501}, 2017{\natexlab{a}}.
\newblock URL \url{https://arxiv.org/abs/1701.06501}.

\bibitem[Brunel et~al.(2017{\natexlab{b}})Brunel, Moitra, Rigollet, and
  Urschel]{pmlr-v65-brunel17a}
V.-E. Brunel, A.~Moitra, P.~Rigollet, and J.~Urschel.
\newblock Rates of estimation for determinantal point processes.
\newblock In \emph{Proceedings of the 2017 Conference on Learning Theory},
  volume~65 of \emph{Proceedings of Machine Learning Research}, pages 343--345.
  PMLR, 2017{\natexlab{b}}.
\newblock URL \url{http://proceedings.mlr.press/v65/brunel17a/brunel17a.pdf}.

\bibitem[Derezinski et~al.(2020)Derezinski, Liang, and
  Mahoney]{pmlr-v108-derezinski20a}
M.~Derezinski, F.~Liang, and M.~Mahoney.
\newblock Bayesian experimental design using regularized determinantal point
  processes.
\newblock In S.~Chiappa and R.~Calandra, editors, \emph{Proceedings of the
  Twenty Third International Conference on Artificial Intelligence and
  Statistics}, volume 108 of \emph{Proceedings of Machine Learning Research},
  pages 3197--3207. PMLR, 26--28 Aug 2020.
\newblock URL \url{http://proceedings.mlr.press/v108/derezinski20a.html}.

\bibitem[Dupuy and Bach(2018)]{pmlr-v84-dupuy18a}
C.~Dupuy and F.~Bach.
\newblock Learning determinantal point processes in sublinear time.
\newblock In A.~Storkey and F.~Perez-Cruz, editors, \emph{Proceedings of the
  Twenty-First International Conference on Artificial Intelligence and
  Statistics}, volume~84 of \emph{Proceedings of Machine Learning Research},
  pages 244--257. PMLR, 09--11 Apr 2018.
\newblock URL \url{http://proceedings.mlr.press/v84/dupuy18a.html}.

\bibitem[Gartrell et~al.(2017)Gartrell, Paquet, and Koenigstein]{Gartrell2017}
M.~Gartrell, U.~Paquet, and N.~Koenigstein.
\newblock Low-rank factorization of determinantal point processes.
\newblock In \emph{Proceedings of the Thirty-First AAAI Conference on
  Artificial Intelligence}, AAAI'17, page 1912–1918. AAAI Press, 2017.
\newblock URL \url{https://ojs.aaai.org/index.php/AAAI/article/view/10869}.

\bibitem[Gartrell et~al.(2019)Gartrell, Brunel, Dohmatob, and
  Krichene]{Gartrell2019}
M.~Gartrell, V.-E. Brunel, E.~Dohmatob, and S.~Krichene.
\newblock Learning nonsymmetric determinantal point processes.
\newblock In \emph{Advances in Neural Information Processing Systems},
  volume~32. Curran Associates, Inc., 2019.
\newblock URL
  \url{https://proceedings.neurips.cc/paper/2019/file/cae82d4350cc23aca7fc9ae38dab38ab-Paper.pdf}.

\bibitem[Gartrell et~al.(2021)Gartrell, Han, Dohmatob, Gillenwater, and
  Brunel]{gartrell2021scalable}
M.~Gartrell, I.~Han, E.~Dohmatob, J.~Gillenwater, and V.-E. Brunel.
\newblock {Scalable Learning and {MAP} Inference for Nonsymmetric Determinantal
  Point Processes}.
\newblock In \emph{International Conference on Learning Representations}, 2021.
\newblock URL \url{https://openreview.net/forum?id=HajQFbx_yB}.

\bibitem[Gautier et~al.(2019)Gautier, Bardenet, and
  Valko]{NEURIPS2019_1d54c76f}
G.~Gautier, R.~Bardenet, and M.~Valko.
\newblock On two ways to use determinantal point processes for monte carlo
  integration.
\newblock In H.~Wallach, H.~Larochelle, A.~Beygelzimer, F.~d\textquotesingle
  Alch\'{e}-Buc, E.~Fox, and R.~Garnett, editors, \emph{Advances in Neural
  Information Processing Systems}, volume~32. Curran Associates, Inc., 2019.
\newblock URL
  \url{https://proceedings.neurips.cc/paper/2019/file/1d54c76f48f146c3b2d66daf9d7f845e-Paper.pdf}.

\bibitem[Ghosh and Rigollet(2020)]{Ghosh13207}
S.~Ghosh and P.~Rigollet.
\newblock Gaussian determinantal processes: A new model for directionality in
  data.
\newblock \emph{Proceedings of the National Academy of Sciences}, 117\penalty0
  (24):\penalty0 13207--13213, 2020.
\newblock ISSN 0027-8424.
\newblock \doi{10.1073/pnas.1917151117}.
\newblock URL \url{https://www.pnas.org/content/117/24/13207}.

\bibitem[Hough et~al.(2006)Hough, Krishnapur, Peres, and Vir\'ag]{HKPV06}
J.~B. Hough, M.~Krishnapur, Y.~Peres, and B.~Vir\'ag.
\newblock Determinantal processes and independence.
\newblock \emph{Probability surveys}, 2006.
\newblock URL \url{https://doi.org/10.1214/154957806000000078}.

\bibitem[Hough et~al.(2009)Hough, Krishnapur, Peres, and Vir\'ag]{HKPV09}
J.~B. Hough, M.~Krishnapur, Y.~Peres, and B.~Vir\'ag.
\newblock \emph{Zeros of {G}aussian analytic functions and determinantal point
  processes}, volume~51.
\newblock American Mathematical Society, 2009.
\newblock URL \url{https://doi.org/http://dx.doi.org/10.1090/ulect/051}.

\bibitem[Kulesza and Taskar(2012)]{KuTa12}
A.~Kulesza and B.~Taskar.
\newblock Determinantal point processes for machine learning.
\newblock \emph{Foundations and Trends in Machine Learning}, 2012.
\newblock URL \url{http://dx.doi.org/10.1561/2200000044}.

\bibitem[Lavancier et~al.(2014)Lavancier, M{\o}ller, and Rubak]{LaMoRu14}
F.~Lavancier, J.~M{\o}ller, and E.~Rubak.
\newblock Determinantal point process models and statistical inference.
\newblock \emph{Journal of the Royal Statistical Society, Series B}, 2014.
\newblock URL \url{http://www.jstor.org/stable/24775312}.

\bibitem[Lavancier et~al.(2015)Lavancier, M{\o}ller, and
  Rubak]{LavancierMollerRubak}
F.~Lavancier, J.~M{\o}ller, and E.~Rubak.
\newblock Determinantal point process models and statistical inference.
\newblock \emph{Journal of the Royal Statistical Society. Series B (Statistical
  Methodology)}, 77\penalty0 (4):\penalty0 853--877, 2015.
\newblock ISSN 13697412, 14679868.
\newblock URL \url{http://www.jstor.org/stable/24775312}.

\bibitem[Lavancier et~al.(2021)Lavancier, Poinas, and
  Waagepetersen]{lavancier:hal-01816528}
F.~Lavancier, A.~Poinas, and R.~Waagepetersen.
\newblock {Adaptive estimating function inference for non-stationary
  determinantal point processes}.
\newblock \emph{{Scandinavian Journal of Statistics}}, 48\penalty0
  (1):\penalty0 87--107, Mar. 2021.
\newblock \doi{10.1111/sjos.12440}.
\newblock URL \url{https://hal.archives-ouvertes.fr/hal-01816528}.

\bibitem[Macchi(1975)]{Mac75}
O.~Macchi.
\newblock The coincidence approach to stochastic point processes.
\newblock \emph{Advances in Applied Probability}, 7:\penalty0 83--122, 1975.
\newblock URL \url{http://www.jstor.org/stable/1425855}.

\bibitem[Mariet and Sra(2015)]{pmlr-v37-mariet15}
Z.~Mariet and S.~Sra.
\newblock Fixed-point algorithms for learning determinantal point processes.
\newblock In F.~Bach and D.~Blei, editors, \emph{Proceedings of the 32nd
  International Conference on Machine Learning}, volume~37 of \emph{Proceedings
  of Machine Learning Research}, pages 2389--2397, Lille, France, 07--09 Jul
  2015. PMLR.
\newblock URL \url{http://proceedings.mlr.press/v37/mariet15.html}.

\bibitem[Mariet et~al.(2019)Mariet, Gartrell, and Sra]{pmlr-v89-mariet19b}
Z.~Mariet, M.~Gartrell, and S.~Sra.
\newblock Learning determinantal point processes by corrective negative
  sampling.
\newblock In K.~Chaudhuri and M.~Sugiyama, editors, \emph{Proceedings of the
  Twenty-Second International Conference on Artificial Intelligence and
  Statistics}, volume~89 of \emph{Proceedings of Machine Learning Research},
  pages 2251--2260. PMLR, 16--18 Apr 2019.
\newblock URL \url{http://proceedings.mlr.press/v89/mariet19b.html}.

\bibitem[Marteau{-}Ferey et~al.(2020)Marteau{-}Ferey, Bach, and
  Rudi]{Marteau-Ferey}
U.~Marteau{-}Ferey, F.~R. Bach, and A.~Rudi.
\newblock Non-parametric models for non-negative functions.
\newblock In \emph{Advances in Neural Information Processing Systems 33}, 2020.
\newblock URL \url{https://arxiv.org/abs/2007.03926}.

\bibitem[Minsker(2017)]{MINSKER2017111}
S.~Minsker.
\newblock {On some extensions of Bernstein's inequality for self-adjoint
  operators}.
\newblock \emph{Statistics \& Probability Letters}, 127:\penalty0 111--119,
  2017.
\newblock URL
  \url{https://www.sciencedirect.com/science/article/pii/S0167715217301207}.

\bibitem[Poinas and Lavancier(2021)]{poinas:hal-03157554}
A.~Poinas and F.~Lavancier.
\newblock {Asymptotic approximation of the likelihood of stationary
  determinantal point processes}.
\newblock working paper or preprint, Mar. 2021.
\newblock URL \url{https://hal.archives-ouvertes.fr/hal-03157554}.

\bibitem[Rosasco et~al.(2010)Rosasco, Belkin, and Vito]{JMLR:v11:rosasco10a}
L.~Rosasco, M.~Belkin, and E.~D. Vito.
\newblock On learning with integral operators.
\newblock \emph{Journal of Machine Learning Research}, 11\penalty0
  (30):\penalty0 905--934, 2010.
\newblock URL \url{http://jmlr.org/papers/v11/rosasco10a.html}.

\bibitem[Rudi et~al.(2015)Rudi, Camoriano, and Rosasco]{rudi2015less}
A.~Rudi, R.~Camoriano, and L.~Rosasco.
\newblock Less is more: Nystr\"{o}m computational regularization.
\newblock In C.~Cortes, N.~D. Lawrence, D.~D. Lee, M.~Sugiyama, and R.~Garnett,
  editors, \emph{Advances in Neural Information Processing Systems 28}, pages
  1657--1665. Curran Associates, Inc., 2015.
\newblock URL
  \url{http://papers.nips.cc/paper/5936-less-is-more-nystrom-computational-regularization.pdf}.

\bibitem[Rudi et~al.(2018)Rudi, Calandriello, Carratino, and
  Rosasco]{rudi2018fast}
A.~Rudi, D.~Calandriello, L.~Carratino, and L.~Rosasco.
\newblock On fast leverage score sampling and optimal learning.
\newblock In \emph{Advances in Neural Information Processing Systems}, pages
  5673--5683, 2018.
\newblock URL \url{https://arxiv.org/abs/1810.13258}.

\bibitem[Rudi et~al.(2020)Rudi, Marteau-Ferey, and Bach]{Rudi2020global}
A.~Rudi, U.~Marteau-Ferey, and F.~Bach.
\newblock Finding global minima via kernel approximations.
\newblock In \emph{Arxiv preprint arXiv:2012.11978}, 2020.
\newblock URL \url{https://arxiv.org/abs/2012.11978}.

\bibitem[Simon(1977)]{SIMON1977244}
B.~Simon.
\newblock {Notes on infinite determinants of Hilbert space operators}.
\newblock \emph{Advances in Mathematics}, 24\penalty0 (3):\penalty0 244--273,
  1977.
\newblock ISSN 0001-8708.
\newblock \doi{https://doi.org/10.1016/0001-8708(77)90057-3}.
\newblock URL
  \url{https://www.sciencedirect.com/science/article/pii/0001870877900573}.

\bibitem[Soshnikov(2000)]{Sos00}
A.~Soshnikov.
\newblock Determinantal random point fields.
\newblock \emph{Russian Mathematical Surveys}, 55:\penalty0 923--975, 2000.
\newblock URL \url{https://doi.org/10.1070/rm2000v055n05abeh000321}.

\bibitem[Sterge et~al.(2020)Sterge, Sriperumbudur, Rosasco, and
  Rudi]{pmlr-v108-sterge20a}
N.~Sterge, B.~Sriperumbudur, L.~Rosasco, and A.~Rudi.
\newblock Gain with no pain: Efficiency of kernel-pca by nyström sampling.
\newblock In S.~Chiappa and R.~Calandra, editors, \emph{Proceedings of the
  Twenty Third International Conference on Artificial Intelligence and
  Statistics}, volume 108 of \emph{Proceedings of Machine Learning Research},
  pages 3642--3652. PMLR, 26--28 Aug 2020.
\newblock URL \url{http://proceedings.mlr.press/v108/sterge20a.html}.

\bibitem[Urschel et~al.(2017)Urschel, Brunel, Moitra, and
  Rigollet]{pmlr-v70-urschel17a}
J.~Urschel, V.-E. Brunel, A.~Moitra, and P.~Rigollet.
\newblock Learning determinantal point processes with moments and cycles.
\newblock In D.~Precup and Y.~W. Teh, editors, \emph{Proceedings of the 34th
  International Conference on Machine Learning}, volume~70 of \emph{Proceedings
  of Machine Learning Research}, pages 3511--3520. PMLR, 06--11 Aug 2017.
\newblock URL \url{http://proceedings.mlr.press/v70/urschel17a.html}.

\bibitem[Weidmann(1980)]{Weidmann1980}
J.~Weidmann.
\newblock \emph{{Linear Operators in Hilbert Spaces}}.
\newblock 68. Springer-Verlag, 1980.
\newblock \doi{10.1007/978-1-4612-6027-1}.
\newblock URL \url{https://www.springer.com/gp/book/9781461260295}.

\bibitem[Wendland(2004)]{Wendland_2004}
H.~Wendland.
\newblock \emph{Scattered Data Approximation}.
\newblock Cambridge Monographs on Applied and Computational Mathematics.
  Cambridge University Press, 2004.
\newblock \doi{10.1017/CBO9780511617539}.
\newblock URL \url{https://doi.org/10.1017/CBO9780511617539}.

\end{thebibliography}

\end{document}